\documentclass{statsoc}
\usepackage[dvipsnames]{xcolor}
\usepackage{xr-hyper}
\input{preamble_rss}

\usepackage{natbib}
\usepackage{float}

\makeatletter
\newcommand*{\addFileDependency}[1]{
\typeout{(#1)}
%
%
\@addtofilelist{#1}
%
\IfFileExists{#1}{}{\typeout{No file #1.}}
}\makeatother



\renewcommand{\JB}[1]{\textcolor{black}{ #1}} 


\title[Estimating a Directed Tree for Extremes]{Estimating a Directed Tree for Extremes}

\author{Ngoc Mai Tran\thanks{Supported by NSF Grant DMS-2113468 and NSF IFML 2019844 Award.}}
\address{Department of Mathematics, University of Texas at Austin, Speedway 2515 Stop C1200, Austin TX 78712, USA, email: ntran@math.utexas.edu}
\author[Ngoc M. Tran, Johannes Buck and Claudia Kl\"uppelberg]
{Johannes Buck\thanks{Supported by the Hanns Seidel Foundation} 
and 
Claudia Kl\"uppelberg\thanks{Corresponding author}}
\address{Department of Mathematics, Technical University of Munich,  85748 Garching, Boltzmannstr.~3, Germany, emails:  j.buck@tum.de, cklu@cit.tum.de}

\begin{document}
\setlength{\textfloatsep}{5pt}
\setlength{\abovecaptionskip}{2pt}

\begin{abstract}
We propose a new method to estimate a root-directed spanning tree from extreme data.
A prominent example is a river network, to be discovered from extreme flow measured at a set of stations. 
Our new algorithm utilizes qualitative aspects of a max-linear Bayesian network, which has been designed for modelling causality in extremes.
The algorithm estimates bivariate scores and returns a root-directed spanning tree. 
It performs extremely well on benchmark data and new data.
We prove that the new estimator is consistent under a max-linear Bayesian network model  with noise. 
We also assess its strengths and limitations in a small simulation study.
\end{abstract}


{\em Keywords: }
{Bayesian network},
{causal inference},
{directed acyclic graph},
{extreme value analysis},
{graphical model},
{max-linear model}.

\section{Introduction}\label{sec:intro}


Graphical models can represent multivariate distributions in an intuitive way and, hence, facilitate statistical analyses of high-dimensional data.  
Traditionally, such models are linear and distributions are Gaussian; see e.g. \cite{BG, drtonmaathuis, lauritzen1996graphical, Drtonetal}. 
In recent years, extensions to non-Gaussian linear models have been proposed with statistical methods focusing on second order properties of their distributions, which estimate the graph structure for observations in the center of the distribution.

Gaussian models and correlations are inappropriate for assessing high risks, where the extreme observations contain the relevant information. For such problems extreme value distributions provide natural models. 
Whereas Gaussian distributions arise as limit distributions of normalized sums, extreme value distributions arise as limit distributions of normalized maxima, thus, being natural candidates for modelling high risks.
Gaussian distributions are sum-stable (closed with respect to sums) and extreme value distributions are max-stable (closed with respect to maxima). 
This duality and more relations between sums and maxima are presented for dimension $d=1$ in \cite{EKM}. 

For dimension $d\ge 2$ max-stable distributions also arise as limits of componentwise normalized maxima of independent copies of a random vector $X$, and the dependence structure between components of the limit vector has been extensively studied and applied in multivariate risk problems.
Textbook treatments can be found in \cite{Beirlant2004, coles2001introduction, de2007extreme, Resnick1987, ResnickHeavy}. A very readable review paper is \cite{davison2015statistics}. For statistical applications, correlations or other bivariate measures of dependence in the center of the distribution are replaced by {\em extreme dependence measures} (\cite{colesetal, EV, lars, Sibuya1960}). 

This paper focuses on causal interpretation and is motivated by the fact that rare events like environmental or financial risks are often cascading through a network. For instance, pollutants can propagate through an unseen underground waterway, causing extreme measurements at multiple locations (\cite{Leigh2019}), or credit markets can fail due to some endogenous systemic risk propagation (\cite{creditprop}). 

Graphical models can allow for causal interpretation, 
however, it is not immediately obvious how to extend the past decades of work on causal inference (\cite{Bollen, drtonmaathuis, lauritzen1996graphical, Drtonetal, Pearl2009,spirtes2000causation}) for Gaussian and discrete distributions to an extreme value setting.

 \begin{figure}[t]
\centering
\begin{tikzpicture}[->,>=stealth',shorten >=1pt,auto,node distance=1.5cm,
                thick,main node/.style={circle,draw,font=\small\bfseries,minimum size=0.8cm,inner sep=0cm},true edge/.style={Green},stream edge/.style={densely dashed,YellowGreen},wrong edge/.style={ red,decorate,
  decoration={zigzag,amplitude=0.4pt,segment length=1.4mm,pre=lineto,pre length=0pt}},missing edge/.style={loosely dotted}]
                
  \node[main node] (1925) {1};
  \node[main node] (7) [right of=1925] {2};
  \node[main node] (1277) [right of=7] {3};
  \node[main node] (3) [right of=1277] {4};
  \node[main node] (6) [below of=1277] {5};
  \node[main node] (2) [right of=6] {6};
  
  \path
    (7) edge  node {} (1925)
    (1277) edge  node {} (7)
    (3) edge  node {} (1277)
    (6) edge  node {} (7)
    (2) edge  node {} (1277)
    ;
\end{tikzpicture}
\caption{\label{fig:example} Example of a root-directed spanning tree with root 1. It is a simple directed acyclic graph, where each node has exactly one child, except the root, which has none. Moreover, there is exactly one path from every node to the root.
}
\end{figure}

We approach this problem from two directions.

Firstly, we follow the general idea (see e.g. \cite{Pearl2009}) that causality is often provided through a recursive system on a directed acyclic graph.
Prominent examples in the literature are {\em linear} recursive systems, where each node represents a random variable defined as a weighted sum of its parent variables and an independent random variable. 
But instead of such classical linear causal graphical models or linear Bayesian networks, we use max-linear causal graphical models or max-linear Bayesian networks. 
Introduced in \cite{GK1}, they are defined via {\em max-linear} recursive structural equation models on a directed acyclic graph. 

In a {\em max-linear Bayesian network}, each node represents a positive random variable defined as a weighted maximum of its parent variables and an independent random variable, called {\em innovation}. Although motivated by extreme value theory, the multivariate distribution of a max-linear vector is not restricted to an extreme value distribution; the emphasis of the model is on its structure given by a directed acyclic graph.
We relax the strict max-linear Bayesian network model by allowing for an independent noise variable.
Nonparametric statistical inference aims at identifying the directed graphical structure regardless of the node distributions under weak conditions on the noise distributions.

Secondly, we propose a new algorithm, \QTree, motivated by qualitative aspects of a max-linear Bayesian network to estimate a root-directed spanning tree  (see Figure~\ref{fig:example}) as a simple directed acyclic graph. 
\QTree\  uses pairwise dependence, handles missing data, and has an automated parameter tuning procedure.  Here we use the fact that the non-noisy model has a left-sided atom for the distribution of a ratio of marginal random variables, when there is a directed edge between the nodes. We also show that, under a max-linear Bayesian network model  with noise and natural distributional assumptions, the \QTree\ algorithm returns asymptotically almost surely the correct tree.

Max-linear Bayesian networks have recently emerged as suitable directed graphical models for causality in extremes (\cite{Ngoc_etal,BK,nadinethesis, GK1}), however, existing methods for learning them aim to learn the model parameters and, thus, are highly sensitive to model misspecifications; see \cite{BK,nadinethesis, GKL}; \cite{GKO, KK, KL2017}. 

Motivated by extreme value theory, it is not surprising that max-linear Bayesian networks have been mostly investigated  for heavy-tailed innovations: \citet[Section~3.3]{einmahl2016} consider tail dependence functions for i.i.d. Fr\'echet innovations. \cite{GKO} investigate tail dependence for i.i.d. regularly varying innovations, and \cite{KK} the scaling properties of the same model. \cite{AMS} and \cite{segers2019one} investigate regularly varying Markov trees, and more recently, \cite{AS} investigate a new max-linear graphical model on trees of transitive turnaments.

Graphical models for extremes have also been proposed in \cite{engelke:hitz:18} based on a different concept.
They define a new extreme conditional dependence concept for multivariate Pareto distributions (which have Lebesgue densities) and use this concept to define extreme undirected graphical models similarly to the classical concept. The multivariate distribution determines the model and has in general to be specified for statistical inference. Here the multivariate H\"usler-Reiss distribution plays a prominent role; see \cite{AMS, AS, EV, ELV, Huetal, REZ}.


\subsection{The Extremal River Problem}

The relevance of extremal graphical models for multivariate distributions has been validated on several data sets, prominently on the Upper Danube river network. 
The goal is to recover a river network from \emph{only extreme flow} measured at a set $V$ of stations, \emph{without} any information on the stations' location. We refer to it as the \emph{Extremal River Problem}. Here, the true river network is known and serves as the `gold standard', allowing one to verify the performance of a proposed estimator. 
Success in solving the Extreme River Problem can translate to new solutions to the \emph{contaminant tracing} challenge in hydrology (\cite{Leigh2019,mcgrane2016impacts,Rodriguez2020,Verhoef2,VerHoef1,WOLF20128}). There, one needs an inexpensive method to trace pollutants or chemical constituents transported by a complex and unknown underground waterway that is prohibitive to model or survey with traditional fluid mechanics methods (\cite{anderson2015applied}). 
Recent advances point towards an imminent data explosion (\cite{bartos2018open,mao2019low}), where pollutants exceeding certain thresholds can be detected via a sensor network. Thus, contaminant tracing with sensors data is a version of the Extremal River Problem without the gold standard, where the network is truly unknown. 

A solution to the Extremal River Problem requires an algorithm to recover the true river network using test data given by {\em river discharges}, the volume of water flowing through a river channel, measured at any given point in cubic metres or cubic feet per second. 
The Extremal River Problem for the Upper Danube river network with measurements collected at $d=31$ stations has proven to be challenging and very stimulating for extreme value theory, with each paper taking a different technique. The data have been preprocessed in \cite{asadi2015} and are available in the \texttt{R} package \texttt{graphicalExtremes}  (\cite{RgraphicalExtremes}).

The preprocessed data have been analysed in a number of publications with focus on modelling extreme dependence: 
flow- and spatial dependence (\cite{asadi2015}) and undirected graphical models for extremes (\cite{engelke:hitz:18, ELV, Huetal, Huseretal, REZ}). 
In a first paper, \cite{engelke:hitz:18} returned a highly accurate but \emph{undirected} graph, followed by publications using new models and applying different methods for reconstructing the undirected graph.

Our focus is on causality in extremes, modelled by the edges of a root-directed tree: a large value at node $j$ causes a large value at node $i$, whenever there is an edge from $j$ to $i$.
For a river network the causal structure in the {\em extremes} is the same as for {\em average values}  and in this case the tree can be learned with methods for extremes and averages.
However, it may well happen that causality is stronger in the extremes than in average values.
Indeed, in \cite{tran.extremes}, Section~2.1 it is shown that for the Upper Danube data also a naive algorithm based on the pairwise correlation matrix as score matrix performs well, whereas for Lower Colorado data it returns a less precise tree. So this is an example, where the extremes contain more causal information than the average observations.

Causal dependence models for extremes have also been considered using expected quantile scores (\cite{mhalla2020causal}) and  causal dependence coefficients (\cite{gnecco2019causal}).
\cite{gnecco2019causal} correctly recovered the causal order of 12 nodes out of 31, but did not learn the entire river network, while \cite{mhalla2020causal} focused on flow-connections and did well at detecting nodes connected by a directed path; see Figure~7 in \cite{mhalla2020causal}. These two publications have slightly different notions of causality. We give more details and compare our method with theirs in Section~4.

\subsection{Main contributions and structure of the paper}

Below, we explain the novel aspects of our paper,  summarize the organisation of our paper, and define the standard metrics to assess the quality of an estimated graph.

Most prominently, we suggest a new algorithm \QTree\ to recover causality in extremes, where causality is modelled by the edges of a root-directed tree. 
This algorithm relies on qualitative aspects of a max-linear Bayesian network model and is as such a structural model, which does not require to specify a distribution family. Moreover, no normalization of the data to standard Fr\'echet or Gumbel distribution is needed. 
  
The \QTree\ Algorithm~\ref{alg:qtree} estimates a score matrix $W$, giving a score for each potential edge independently, and applies a standard algorithm to output a root-directed spanning tree of optimum score. 
Algorithm~\ref{alg:qtree} runs in time $O(n|V|^2)$, where $n$ is the number of observations and $|V|$ is the number of nodes. Moreover, it maximizes the information available from missing data, since at each step it only utilizes the data projected onto two coordinates. \QTree\ needs sufficiently many extreme observations and relies on the signal to have heavier tail than the noise.  

We improve this simple algorithm by an optimization procedure to find the best model parameters using a grid search in combination with a stabilizing subsampling procedure that is based on bootstrap aggregation.
This \QTree\ Algorithm~\ref{alg:qtree.automated}  is very flexible, has at most two tuning parameters, and proves to be very efficient. 

Besides the Upper Danube data set, which serves as benchmark set for comparison to other methods, we analyse three new data sets from the Lower Colorado river network in Texas. We also show by a small simulation study that \QTree\ is robust with respect to different dependence structures (given by edge-weights) and different node distributions entailed from different innovations distributions.

\QTree\ is implemented as a plug-and-play package in \texttt{Python} (\cite{ngoc_alg}) at \\
\centerline{\url{https://github.com/princengoc/qtree}} \\
which includes all data and codes to produce the results and figures in this paper.

Beyond hydrology, \QTree\ can be applied to cause and effect detection in every high risk problem 
assuming that the network is a root-directed tree.  \QTree\ can, however, also solve a slightly more general problem. 
Assume that the tree structure is only in the extremes, whereas ``average" data follow a different model. For instance, it can happen that only data from certain nodes follow a heavy-tailed distribution (able to model extreme events, while other nodes are negligible from an extreme value point of view; see e.g. \cite{de2007extreme} or \cite{Resnick1987, ResnickHeavy}).
Then it may be possible that causality in the extremes can be modelled by a tree on a subset of nodes.

Assuming that the data come from a noisy max-linear Bayesian network, under distributional assumptions with appropriate signal-to-noise ratio, we prove in Theorem~\ref{thm:main} that the tree output by \QTree\ is strongly consistent as the sample size tends to infinity.
This proof is based on a new variational argument to account for noise in the data.


Our paper is organized as follows. 
We introduce \QTree\ (Algorithm \ref{alg:qtree}) and \texttt{auto-tuned} \QTree\ (Algorithm \ref{alg:qtree.automated}) in Section \ref{sec:alg} and give some intuition supported by preliminary simulation results. In Section \ref{sec:data_description}, we present the data sets, discuss their specific challenges and describe the data preprocessing steps. In Section~\ref{sec:results}, we present the estimation results of \QTree\ and analyse the performance of the automated parameter selection. Here we also compare different algorithms in the literature with ours.
In Section~\ref{sec:sim}, we test the limits of \QTree\ by a small simulation study. 
Section \ref{sec:summary} concludes with a summary. 
The Supplementary Material includes the proof of the Consistency Theorem (Theorem~\ref{thm:main}) in its Section~\ref{sec:s3}.

\textbf{Notations.} 
Estimators are compared based on standard performance metrics in causal inference (\cite{zheng2018dags}): normalized structural Hamming distance (nSHD), false dicovery rate (FDR), false positive rate (FPR), and true positive rate (TPR). 
We recall their definitions here:
Let $\G$ be the true graph on a node set $V$ and $\hat{\G}$ an estimated graph. The \emph{structural Hamming distance} SHD$(\G,\hat{\G})$ between $\G$ and $\hat{\G}$ is the minimum number of edge additions, deletions and reversals to obtain $\G$ from $\hat{\G}$. 
Denote $E(\calg)$ and $E(\hat{\G})$ the set of edges in $\G$ and $\hat{\G}$, respectively. Note that $\vert E(\hat \G) \setminus E(\G) \vert$ is the number of edges in $\hat \G$ that are not in $\G$, while $\vert E(\hat \G) \cap E(\G)\vert$ is the number of correctly estimated edges. We then have
\begin{align} \label{eq:per_metrics}
 \text{nSHD}(\hat\G, \G) &:=\frac{\text{SHD}(\hat{\G},\G)}{|E(\hat \G)|+|E(\G)|}, & 
    \text{FDR}(\hat \G, \G) & :=\frac{\vert E(\hat \G) \setminus E(\G) \vert }{|E(\hat \G)|},\\
        \text{FPR}(\hat \G, \G) &:=\frac{\vert E(\hat \G) \setminus E(\G) \vert }{|V|\times(|V|-1)-|E( \G)|}, &
        \text{TPR}(\hat \G, \G)& :=\frac{\vert E(\hat \G) \cap E(\G) \vert }{|E(\G)|}\notag.    
\end{align}
All metrics lie in $[0,1]$ and the performance of an algorithm is better the smaller the first three metrics are and the larger TPR is. 
We shall use this throughout Section~\ref{sec:results}.

\section{The algorithm}\label{sec:alg}


\subsection{The data generation model}

Throughout we assume data $X\in\R^{V}$ with causal dependence structure modelled by a root-directed spanning tree $\mathcal{T}$ on $V$ nodes. In such a tree, each node $i \in V$ except the root $r$ has exactly one child, the root $r$ has none, and there is a path from every node $i \neq r$ to $r$. An example of such a tree is given in Figure~\ref{fig:example}. 
We solve the Extremal River Problem by estimating $\mathcal{T}$ from extreme river discharges $X_i$ at nodes $i \in V$. 
Here a {\em river discharge} is the volume of water flowing through a river channel, measured at any given point in cubic metres or cubic feet per second.

Extreme value models have a long tradition in hydrology as the ample references in the Introduction show. River networks are prominent examples for root-directed trees. As the water direction determines the flow, the root-directed tree is known and an ideal test case for a new extreme value model and a new algorithm.

Our starting point is the max-linear Bayesian network (\cite{GK1}), a model for risk propagation in a directed acyclic graph. 
When the graph is a tree $\mathcal{T}$, then the model is defined as
\begin{equation}\label{eqn:main}
X_i = \bigvee_{j: j \to i \in \mathcal{T}}c_{ij}X_j \vee Z_i, \quad c_{ij}, Z_i > 0, \quad i \in V. 
\end{equation}
The $Z_i$, called {\em innovations}, are independent with support $\R_{\ge 0}$ and have atom-free distributions.
Each edge $j \to i$ in $\mathcal{T}$ has a {\em weight} $c_{ij} > 0$, interpreted as some measure of the flow rate from $j$ to $i$, and an extreme discharge at $i$ is either the result of an unknown external input $Z_i$ (e.g. heavy rainfall), or it is the maximum of weighted discharges coming by recursion from an ancestral node of $i$. 

For numerical stability, we prefer to work with the logarithm of the data. 
To avoid new symbols, we keep the same notation, so the max-linear Bayesian tree becomes
\begin{equation}\label{eqn:main.max.plus}
X_i = \bigvee_{j: j \to i \in \T}(c_{ij} + X_j) \vee Z_i,\quad c_{ij},Z_i\in\R, \quad i \in V. 
\end{equation}

Our approach to the Extremal River Problem is to assume that the observations follow {\em approximately} a max-linear model.
This means that we assume that data is corrupted with independent noise in each coordinate such that the problem we want to solve in this paper is the following.

\begin{quote} \textbf{Extremal River Problem}.
Assume i.i.d. observations 
\begin{equation}\label{eqn:x}
\mathcal{X}= \{x^1+\eps^1,\dots,x^n+\eps^n\} \in \R^V,
\end{equation}
where for $k=1,\dots,n$, the components of $x^k$ are generated via \eqref{eqn:main.max.plus}
and the components of the noise vectors $\eps^k$ are independent  in $\R$, find $\T$.
\end{quote}

We stress that the root-directed tree assumption is \emph{different} from the usual tree in Bayesian networks, where each \emph{child} has at most one parent. Learning the single-parent tree can be done with the message passing algorithm, which recursively identifies the parent of a node through likelihood calculations (\cite{wainwright2008graphical}). This strategy does not work for the root-directed tree, since each child can have multiple parents. 

\subsection{Intuition of \QTree}\label{sec:intuition}

In general, learning Bayesian networks with more than one parent is NP-hard (\cite{chickering1996learning}). 
However, learning a max-linear root-directed tree from i.i.d. 
\emph{noise-free observations} is solvable in time $O(|V|^2n)$ with $O(|V|(\log(|V|))^2)$ observations (cf. Section~\ref{sec:s1} in the Supplementary Material). 
Here is the intuition. 

Fix an edge $j \to i$ and consider the noise-free model \eqref{eqn:main.max.plus}. 
If for an observation $x\in\R^V$ the components $i$ and $j$ satisfy
$x_i = c_{ij} + x_j$, then we say {\em $j$ drives $i$}.
If $j$ does not drive $i$, then $x_i > c_{ij} + x_j$. Over $n$ independent observations, if the value at $j$ drives the value at $i$ at least twice, then the distribution of $x_i - x_j$ has an \emph{atom} at its left endpoint. 
Repeating this argument shows that if $j$ drives $k$ and $k$ drives $i$, then $x_i - x_j = c_{ik} + c_{kj}$. That is, if the sample $\mathcal{X}$ is noise-free, the empirical distribution of
\begin{equation}\label{eqn:F.initial}
\calx_{ij} := \{x_i-x_j: x \in \mathcal{X}\}
\end{equation}
has for sufficiently many observations multiple values \emph{at} the minimum of its support if and only if $j\rightsquigarrow i$; i.e., if there is a path from $j$ to $i$. 
Thus, with enough observations, one can recover the directed path $j\rightsquigarrow i$, from which the graph $\mathcal{T}$ can be uniquely constructed as it is a root-directed tree; this is depicted in the histograms of the first row of Figure~\ref{fig:histogram}.

Under the presence of noise, max-linear models can no longer be recovered by means of an atom. However, \QTree\ exploits the above intuition. Consider an ordered pair of nodes $(j,i) \in V$.  If the noise at $i$ is small relative to the signal at $j$, one can expect a concentration of observations \emph{near} the minimum of $\calx_{ij}$ if and only if  $j \rightsquigarrow i$. This is the intuition of \QTree. Throughout, for simplicity, we use the notation $x\in\calx$ for possibly noisy observations.

\begin{figure}[t!]
\includegraphics[width=6.5cm]{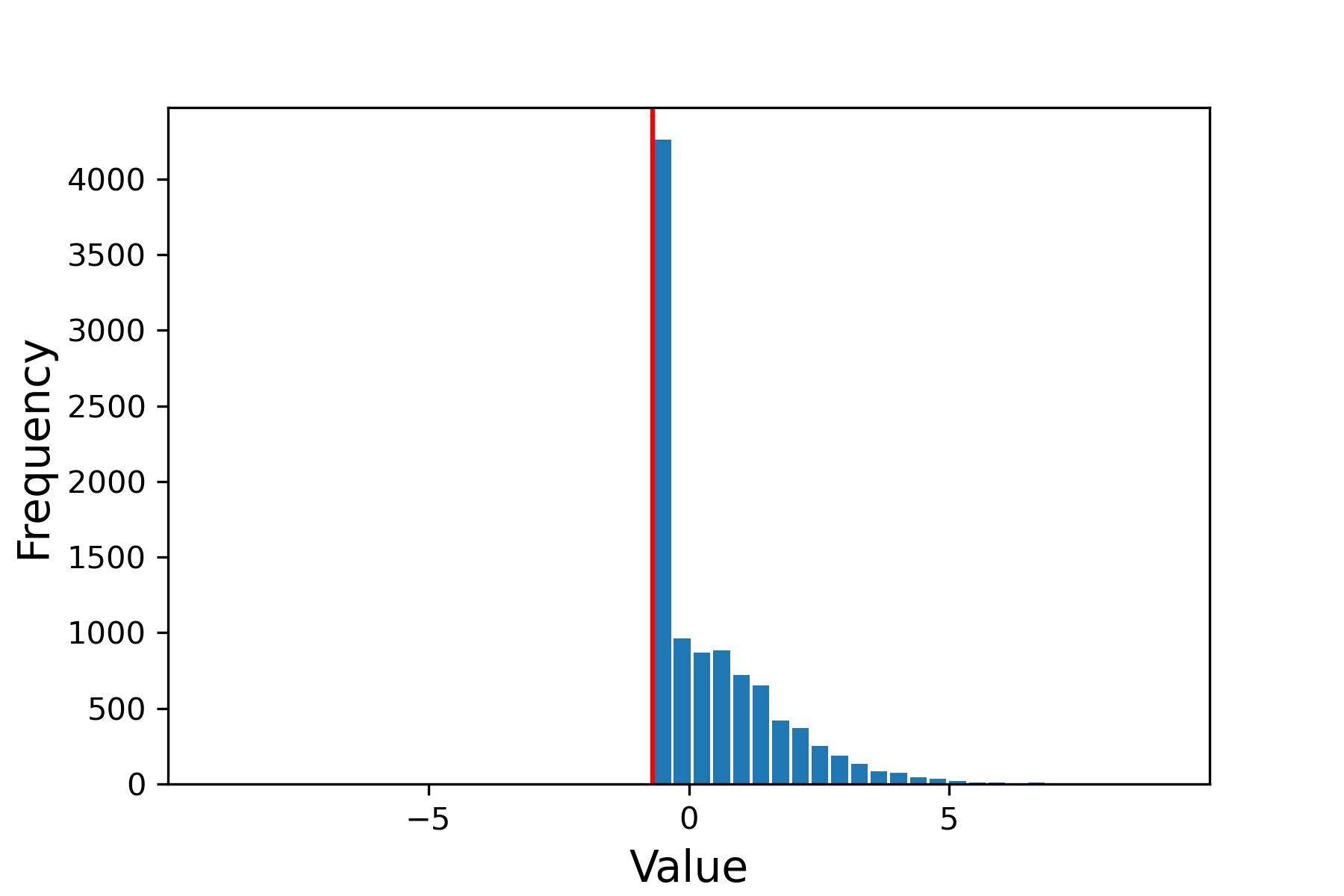} \quad\quad
\includegraphics[width=6.5cm]{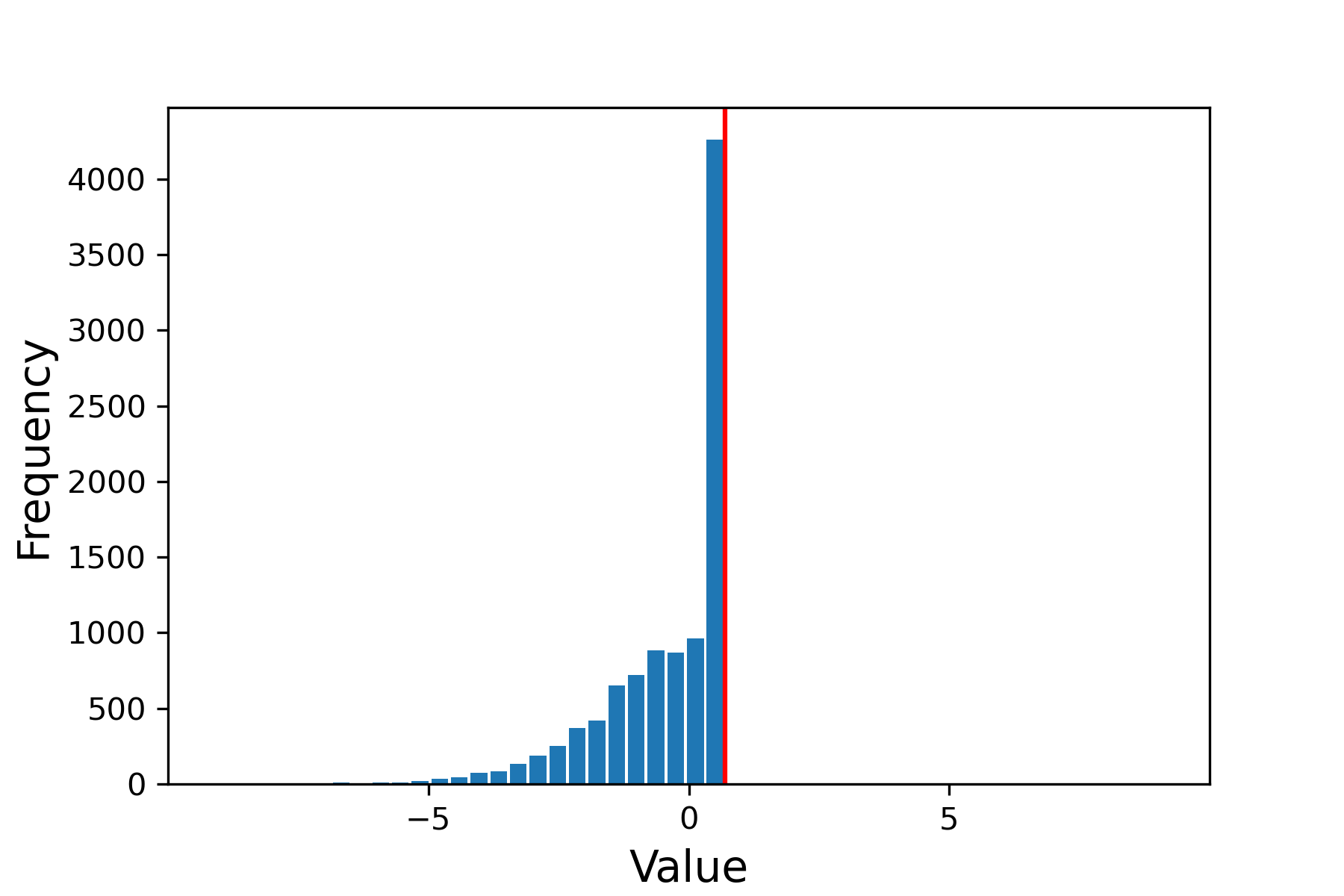}\\
\includegraphics[width=6.5cm] {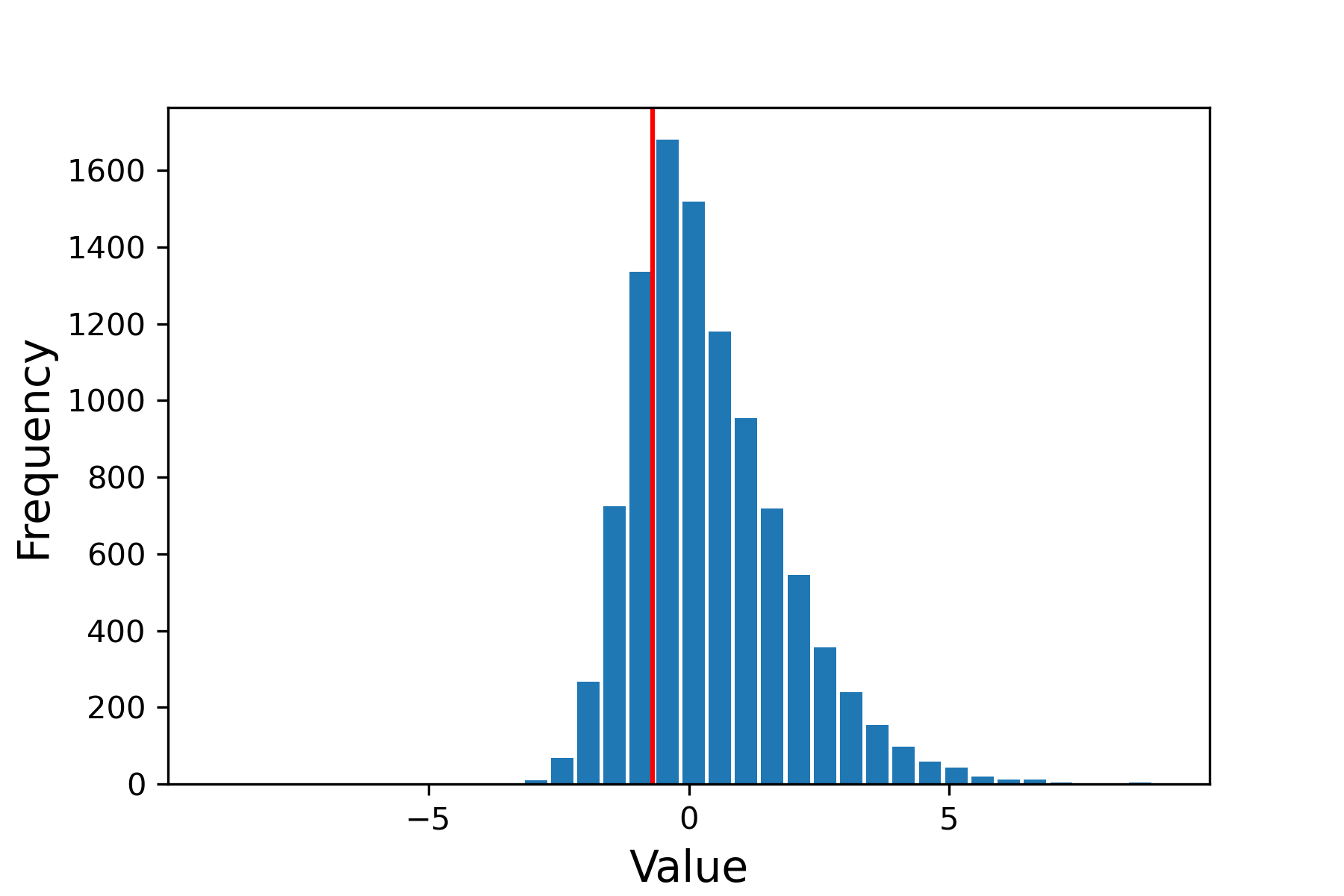} \quad\quad
\includegraphics[width=6.5cm] {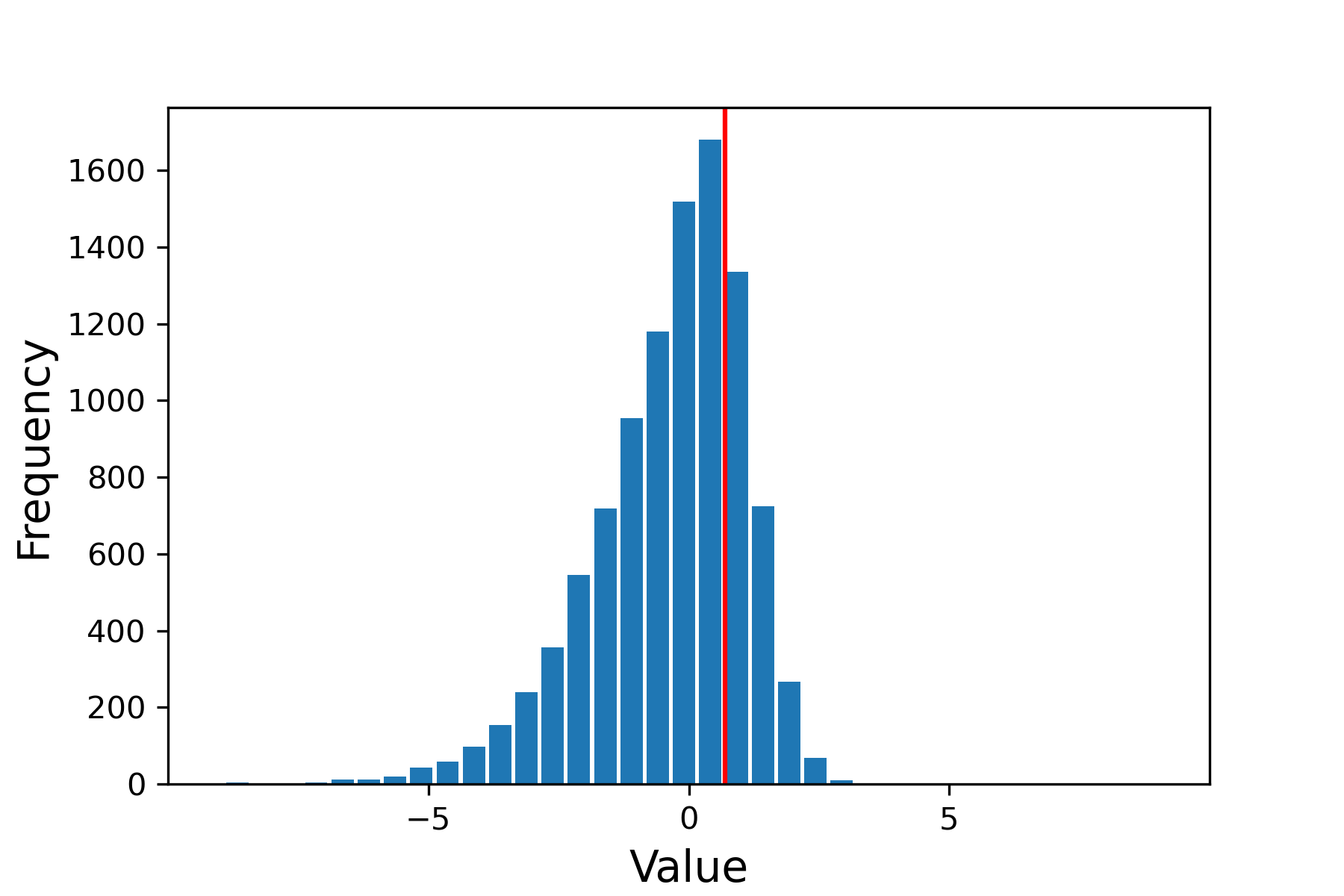}\\
\includegraphics[width=6.5cm] {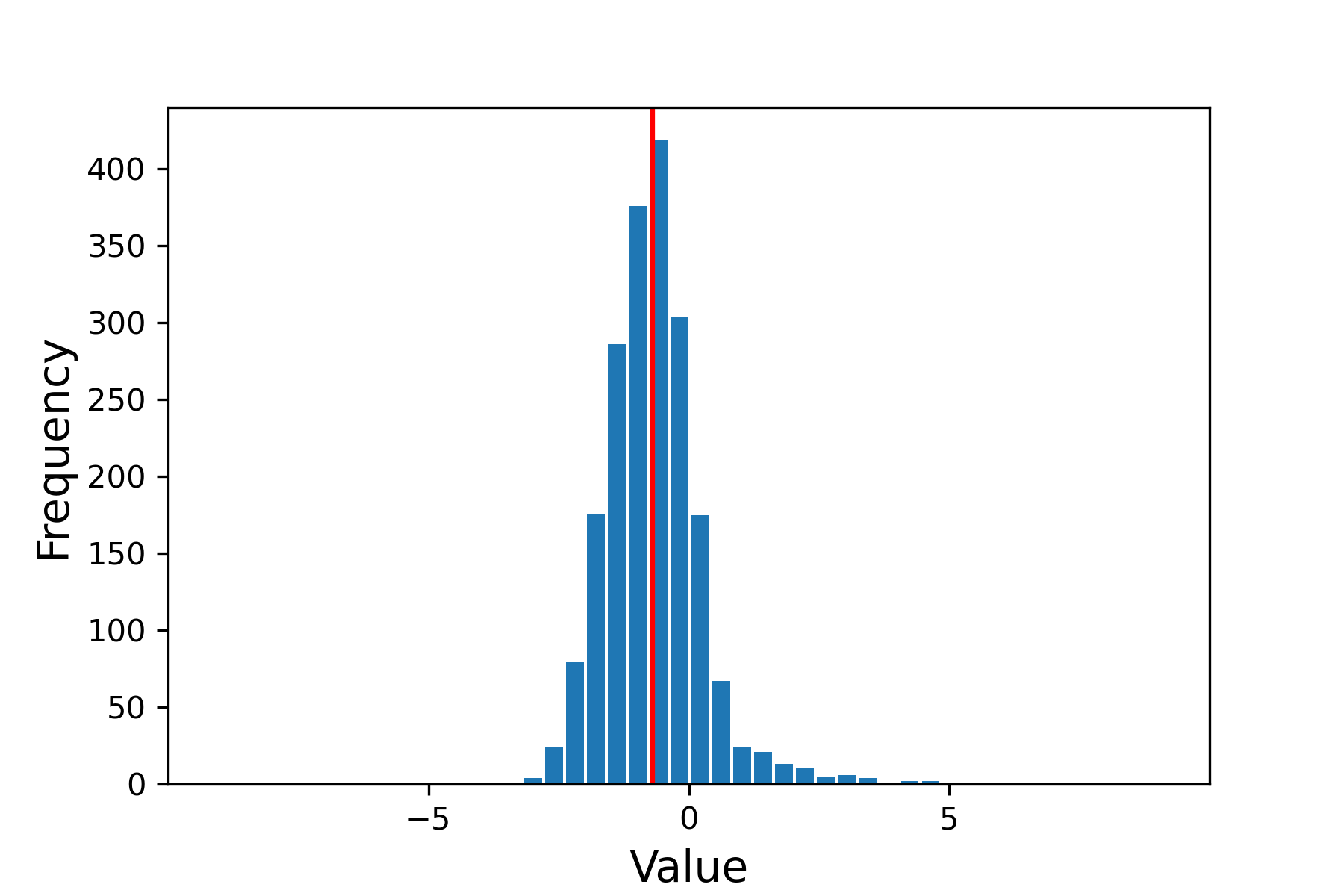} \quad\quad
\includegraphics[width=6.5cm] {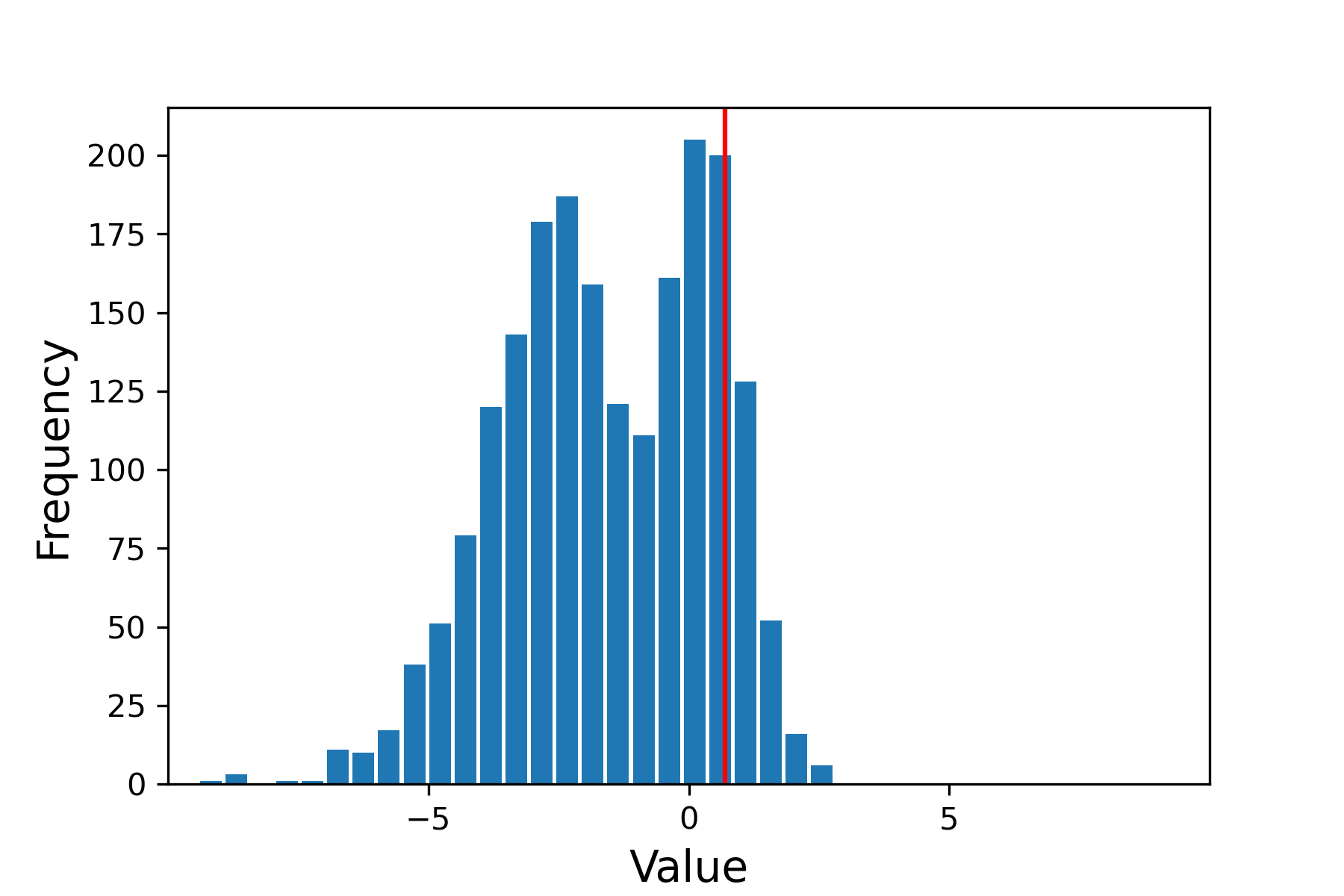}
\caption{For the simple graph $1\to 2$ with $c_{21}=\log(0.5)=-0.69$ and normal centered noise with standard deviation 0.5, {\em the first column} depicts  histograms with red vertical line giving the position of the atom $\log(0.5)$ of $\calx_{21}$ in the noise-free distribution. The upper figure shows the noise-free observations and the middle figure the noisy observations for $\calx_{21}$ as in \eqref{eqn:F.initial}; the lower figure shows the histogram of the noisy, truncated observations $\calx_{21}(0.8)$ as in \eqref{eqn:F.q}.
{\em The second column} shows the same histograms, however, for reversely directed edges, i.e. histograms of $\calx_{12}$ and $\calx_{12}(0.8)$.
In the first column, the lower figure shows a substantial increase of symmetry around $c_{21}$. 
This is because for $\calx_{21}(0.8)$, a large value $x_2$ has a high chance of being realised from a large value of $x_1$, increasing the symmetry around $c_{21}$.
In the lower figure of the second column, a large value $x_2$ is realised either from large $x_1$ or from a large innovation $Z_2$, giving the bimodal distribution.
}\label{fig:histogram}
\end{figure} 

While we have no control over the noise, one way to obtain `strong signals $x_j$' is to replace \eqref{eqn:F.initial} by the set
\begin{equation}\label{eqn:F.q}
\mathcal{X}_{ij}(\alpha) := \{x_i -  x_j: x \in \mathcal{X}, x_j > Q_{\mathcal{X}_j}(\alpha)\},
\end{equation}
where $Q_{\mathcal{X}_j}(\alpha)$ is the $\alpha$-quantile of the empirical distribution of $\mathcal{X}$ in the $j$-th coordinate. For $\alpha > 0$, this amounts to a transformation of $\calx_{ij}$ that amplifies its concentration near the minimum, at the cost of keeping only a fraction of the available observations (cf. Figure~\ref{fig:histogram}). We can then compute empirical scores for every pair $(j,i)$ of vertices based on the concentration of observations around a small quantile. Finally, since the data is supported on a root-directed tree, we can use 
the scores to estimate a root-directed tree.

\subsection{The \QTree\ Algorithm} \label{sec:the_alg}

The \QTree\ Algorithm~\ref{alg:qtree} computes independently for each potential edge $j \to i$ a score $w_{ij}$, seen as a measure of concentration of $\mathcal{X}_{ij}(\alpha)$ near its minimum, then outputs a minimum directed spanning tree of the graph $\calg$ with scores $W = (w_{ij})$. 
The idea is that at each node $i$, data would show the highest concentration at the true edge among all edges from some parent of $i$ to $i$. 
Theorem~\ref{thm:main} proves this for the Gumbel-Gaussian noise model; see \eqref{eqn:noise} and below. The default concentration measure for \QTree\ is the empirical {\em quantile-to-mean gap}
\begin{equation}\label{eqn:wij.tail.mean}
w_{ij}(\smallR) := 
\frac{1}{n_{ij}}\left(\E(\mathcal{X}_{ij}(\al))- Q_{\mathcal{X}_{ij}(\al)}(\smallR) \right)^2,
 \end{equation}
where $\E$ is the empirical mean, $Q$ the empirical quantile, $\smallR \in (0,1)$ is a small quantile level and $n_{ij} = |\mathcal{X}_{ij}(\alpha)|$ is the number of observations in the set $\mathcal{X}_{ij}(\alpha)$ defined in \eqref{eqn:F.q}.

The normalization factor $n_{ij}$ only matters when missing values are unevenly distributed across pairs, such as for the Lower Colorado network (cf.  Section~\ref{sec:lower_col}). Then, pairs with fewer observations get a relative penalty in the concentration estimate to account for larger variability in sample quantile estimates due to a small sample size. If no missing values are present, as is the case with the Upper Danube network, then $n_{ij} = n \cdot (1-\al)$ for all pairs $(j,i)$ and the algorithm would return the exact same tree $\hat{\T}$ as if the concentration measure was defined without dividing by $n_{ij}$. 

We note that there are other choices for a concentration measure, such as the empirical {\em lower quantile gap},
\begin{equation}\label{eqn:wij.tail}
w_{ij}(\smallR,\bigR) :=
\frac{1}{n_{ij}}\left(Q_{\mathcal{X}_{ij}(\alpha)}(\bigR)-Q_{\mathcal{X}_{ij}(\alpha)}(\smallR) \right)^2,
 \end{equation}
where $0 < \smallR < \bigR < 1$ is a fixed pair of quantile levels. If $\bigR$ is small, then $w_{ij}(\smallR,\bigR)$ is a local measure of concentration in the lower tail of $\mathcal{X}_{ij}(\al)$. Note that, if the number of observations is small, then $\bigR$ cannot be too small, so the two empirical concentration measures are in fact rather similar on a real data set. In practice, the lower quantile gap has one more parameter to tune, and thus we choose the quantile-to-mean gap as our default.



\begin{algorithm}\caption{\QTree\ for fixed parameters} \label{alg:qtree}
\flushleft 
\textbf{Parameters}: $\smallR \in (0,1)$, $\al \in [0,1)$. \\
\textbf{Input}: data $\mathcal{X} =\{x^1,\dots,x^n\}\subset \R^{V}$. \\
\textbf{Output}: a root-directed spanning tree $\hat{\T}$ on $V$.
\begin{algorithmic}[1]
\For{$j \to i$, $j,i \in V, j \neq i$}
  \State Compute $w_{ij}(\smallR)$ by \eqref{eqn:wij.tail.mean}.
\EndFor
	\State Compute $\hat{\T} := $ minimum root-directed spanning tree on the directed graph $(V,\G)$ with score matrix $W = (w_{ij}(\smallR)) \in \R^{V \times V}$ with Chu–Liu/Edmonds' algorithm 
	with variable root. 
\State \textbf{Return } $\hat{\T}$
\end{algorithmic}
\end{algorithm}

\begin{remark} \label{remark1}
Given a score matrix $W$ (equivalently a bidirected graph) and a unique root (the initial node), Chu–Liu/Edmonds' algorithm (see \cite{gabow1986efficient}, and \cite{GLS}, Sections~7.2 and~8.4 for more background)
finds a minimum directed spanning tree; i.e., a network of minimum score with $\sum_{j:j\to i\in\hat\T} w_{ij}(\smallR)$ as small as possible.
As we want a minimum {\em root-directed} spanning tree,
we simply reverse edge directions.
Moreover, we run the algorithm for every possible node as root, and take a tree with minimum score.
Finally, provided that all scores are different, the algorithm finds a unique minimum root-directed spanning tree. 
\end{remark}

\begin{remark}
For a given root, the \QTree~Algorithm~\ref{alg:qtree} has complexity $O(|V|^2n)$. For a proof we refer to Lemma~\ref{lem:qtree.complexity} of the Supplementary Material.
\end{remark}

\subsubsection{Theoretical properties of \QTree}

We prove consistency of the \QTree\ Algorithm~1 under natural conditions on the distribution of the innovations. 
We focus on the structural tree model of a max-linear Bayesian network as defined in \eqref{eqn:main} taking i.i.d. innovations with Frech\'et distribution function $P(Z_i\le x)=e^{-x^{-\alpha}}, x>0,$ for $\alpha>0$. Then, using the solution  $X$ of \eqref{eqn:main} given in Theorem~2.2 of \cite{GK1}, by max-stability (e.g. \cite{EKM}, Section~3.2),  $X$ is multivariate Fr\'echet distributed with marginals as in Proposition~A.2 of \cite{GKO}:
$$P(X_i\le x)= \exp\{-(x_i\mu_i)^{-\alpha}\},\quad x>0,$$
for $\mu_i=\big(\sum_{j:j\rightsquigarrow i} {c_{ji}}^{\alpha}\big)^{-1/\alpha}.$
Taking logarithms of the $X_i$, leading to model \eqref{eqn:main.max.plus},  is equivalent to taking logarithms of the innovations $Z_i$ with
$P(\log(Z_i)\le x)=\exp\{-e^{-x/ \beta}\}, x \in \RR$, for $\beta:=1/\alpha>0$.
This results in a Gumbel model with
$$P(X_i\le x)=\exp\{-e^{-( x-\mu_i)/\beta}\},\quad x\in\R.$$
Therefore, for log-data, the $X_i$ are Gumbel$(\beta,\mu_i)$ distributed with scale $\beta:=1/\alpha$ and location parameter $\mu_i$. 
 
Instead of taking the logarithmic analog of a {\em Generalised} Fr\'echet model as often done in the literature, we prefer instead to add a small independent noise to the max-linear Bayesian tree model \eqref{eqn:main.max.plus} with Gumbel$(\beta,0)$ innovations.
This motivates the {\em noise model}
\begin{equation}\label{eqn:noise}
X_i = \Big(\bigvee_{j: j \to i \in \mathcal{T}} (c_{ij}+X_j) \vee Z_i\Big) +\eps_i, \quad c_{ij}, Z_i,\eps_i\in\R,\quad i \in V. 
\end{equation}
with the following innovation-noise distributions:

\begin{quote}
\textbf{Gumbel-Gaussian noise model.} For $i \in V$, the innovations $Z_i$ are i.i.d. Gumbel$(\beta,0)$, the noise variables $\eps_i$ are i.i.d. with symmetric, light-tailed density $f_\eps$ satisfying
\beam\label{eqn:light.tail}
f_{\eps}(x) \sim  e^{-K x^p}\mbox{ as } x \to \infty,
\eeam
for some $p > 1,K > 0$ and such that the derivative of $f_{\eps}$ exists in the tail region.
Throughout, for two functions $a,b$, positive in their right tails, we write $a(x)\sim b(x)$ as $x\to\infty$ for $\lim_{x\to\infty} a(x)/b(x) = c,$ where $c>0$ is some arbitrary constant.
\end{quote}

\begin{remark}
The density $f_\eps$ in \eqref{eqn:light.tail} belongs to a special class of light-tailed densities whose convolution tail can be derived asymptotically (\cite{BKR}). The family includes the Gaussian ($p = 2$), and though it is strictly more general than the Gaussian, we follow \cite{BKR}, and call our noise model Gumbel-Gaussian for ease of reference. Condition \eqref{eqn:light.tail} guarantees that the upper tail of $\eps_i-\eps_j$ is \emph{lighter} than that of $Z_i-Z_j$ (cf. Lemma~\ref{lem:tail} in the Supplementary Material).
\end{remark}

Theorem~1 below, proved in Section~\ref{sec:s3} of the Supplementary Material, says that under the Gumbel-Gaussian noise model, both quantile-to-mean and lower quantile gap produce together with Chu-Liu/Edmonds' algorithm strongly consistent estimators for the true root-directed spanning tree $\mathcal T$ for appropriate choice of parameters. Simulation results (cf. Figure~\ref{fig:simulation}) indicate that the error scales as $O(1/n)$ for \emph{any} fixed graph size $|V|=d$. In particular, for a large graph with $d = 100$, \QTree\ only needs $n = 200$ observations to bring the metrics nSHD to less than 5\% and TPR to more than 95\%; see definitions in \eqref{eq:per_metrics}.

We are now ready to state our main theorem. Observe that, while $\alpha$ as in \eqref{eqn:F.q} is an important tuning parameter, 
we prove the theorem for $\alpha=0$; i.e., by taking the full set of observations.

\begin{theorem}[Consistency Theorem]\label{thm:main}
Assume the Gumbel-Gaussian noise model \eqref{eqn:noise} with distributions specified there.\\
(a) There exists an $r^\ast > 0$ such that for any pair $0 < \smallR < \bigR <  r^\ast$, the \QTree\ Algorithm~\ref{alg:qtree} with score matrix $W=(w_{ij})$ defined by the lower quantile gap $w_{ij}(\smallR,\bigR)$ in \eqref{eqn:wij.tail} returns a strongly consistent estimator for the tree $\T$ as the sample size $n\to\infty$.\\
(b) There exists an $r^\ast > 0$ such that for any $0 < \smallR < r^\ast$, the \QTree\ Algorithm~\ref{alg:qtree} with score matrix $W=(w_{ij})$ defined by the quantile-to-mean gap $w_{ij}(\smallR)$ in \eqref{eqn:wij.tail.mean} returns a strongly consistent estimator for the tree $\T$ as the sample size $n\to\infty$.
\end{theorem}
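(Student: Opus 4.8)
The plan is to prove the statement at the population level and then transfer it to the empirical scores by a law-of-large-numbers argument. Write $\mathrm{ch}(j)$ for the unique node with $j\to\mathrm{ch}(j)\in\mathcal{T}$; in a root-directed tree the strict descendants of $j$ are exactly $\mathrm{ch}^2(j),\dots,r$, and every other node is either a strict ancestor of $j$ or lies in a different branch (so that, by the tree structure, $X_i$ and $X_j$ are then independent). Since Chu--Liu/Edmonds, run with every node as root, returns a root-directed spanning tree of minimum total score, and since assigning $j\mapsto\mathrm{ch}(j)$ to every non-root $j$ already produces the valid tree $\mathcal{T}$, it suffices to exhibit a deterministic $r^\ast>0$ such that the population (limiting) scores $\bar w_{ij}$ satisfy: (i) for each non-root $j$, $\bar w_{\mathrm{ch}(j),j}<\bar w_{ij}$ for every $i\neq\mathrm{ch}(j)$; and (ii) the total score $\sum_{j\neq r}\bar w_{\mathrm{ch}(j),j}$ of $\mathcal{T}$ is strictly below the minimum total score of any arborescence rooted at a node $r'\neq r$. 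Given (i)--(ii), almost sure convergence $w_{ij}\to\bar w_{ij}$ over the finitely many ordered pairs forces $\hat{\mathcal{T}}_n=\mathcal{T}$ for all large $n$, i.e.\ strong consistency.

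For the population scores, fix $(j,i)$ and let $Y$ denote $X_i-X_j$ conditioned on $\{X_j>Q_{X_j}(\al)\}$, an event of probability $1-\al>0$. The tree structure makes the innovations driving distinct branches independent, and a short computation from \eqref{eqn:noise} gives, in the three relevant positions of $i$: if $i=\mathrm{ch}(j)$, then $Y=\max(c_{ij},\,M_i-X_j)+\eps_i$ with $M_i$ (the maximum of the weighted values of the remaining parents of $i$ and of $Z_i$) independent of $X_j$, hence $Y\ge c_{ij}+\eps_i$ with equality on an event of probability bounded away from $0$, so by \eqref{eqn:light.tail} the lower tail of $Y$ decays like $e^{-Kt^{p}}$ and its $\smallR$-quantile equals $c_{ij}-\Theta\big((\log(1/\smallR))^{1/p}\big)$; if $i=\mathrm{ch}^{m}(j)$ with $m\ge2$, then on an event of probability bounded away from $0$, $Y$ equals a sum of path weights plus $\eps_{\mathrm{ch}(j)}+\dots+\eps_{\mathrm{ch}^{m}(j)}$, so its lower tail is governed by an $m$-fold convolution of the noise density, which by the asymptotics behind \eqref{eqn:light.tail} (cf.\ \citep{BKR}) is strictly heavier than a single copy, giving a $\smallR$-quantile of the form $c_{\mathrm{path}}-\Theta\big((\log(1/\smallR))^{1/p}\big)$ with a strictly larger implied constant; if $i$ is a strict ancestor of $j$ or $X_i\perp X_j$, then the lower tail of $Y$ is dominated by the \emph{upper} (Gumbel) tail of $X_j$, so its $\smallR$-quantile is $-\Theta\big(\log(1/\smallR)\big)$. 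Because $p>1$ and the means $\E[Y]$ are uniformly bounded, both gap functionals \eqref{eqn:wij.tail.mean} and \eqref{eqn:wij.tail} inherit this strict hierarchy once $\smallR$ (and $\bigR$, for \eqref{eqn:wij.tail}) is below a deterministic threshold; Lemma~\ref{lem:tail} supplies precisely the comparison that $e^{-Kt^p}$ is lighter than the Gumbel tail. This gives (i). For (ii), any arborescence rooted at $r'\neq r$ must contain an outgoing edge from the true root $r$, and every such edge is of the ``ancestor'' type and so carries a score of order $(\log(1/\smallR))^{2}$; since the $|V|-1$ edges of $\mathcal{T}$ carry scores of order $(\log(1/\smallR))^{2/p}$ and $|V|$ is fixed, shrinking $r^\ast$ further makes the former dominate.

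The transfer step is routine. For each $j$, $Q_{\mathcal{X}_j}(\al)\to Q_{X_j}(\al)$ almost surely by Glivenko--Cantelli together with strict monotonicity of the continuous distribution function of $X_j$; sandwiching the random truncation threshold between $Q_{X_j}(\al)\pm\delta$ and letting $\delta\downarrow0$ (using that $X_j$ is atomless) shows that the empirical conditional law of $X_i-X_j$ on $\mathcal{X}_{ij}(\al)$ converges weakly almost surely to that of $Y$, while its empirical mean converges almost surely to $\E[Y]$ by the strong law (all moments finite, conditioning event of positive probability). Since $Y$ has a density, its quantile function is continuous, so the empirical quantiles in \eqref{eqn:wij.tail.mean}--\eqref{eqn:wij.tail} converge almost surely to the population ones. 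Hence $w_{ij}\to\bar w_{ij}$ almost surely over the finitely many pairs, the strict inequalities (i)--(ii) persist for all large $n$, and \QTree\ then outputs $\mathcal{T}$ for all large $n$, almost surely. The argument is identical for the two score choices, yielding both (a) and (b).

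The hard part is the comparison between the true child and a strict descendant in (i): there both limiting scores are of the same order $(\log(1/\smallR))^{1/p}$, so they cannot be separated by a crude tail bound and one must extract the strictly larger constant produced by convolving two (or more) copies of the noise density rather than one---this is exactly where the convolution-tail asymptotics underlying the Gumbel--Gaussian assumption (Lemma~\ref{lem:tail}, \citep{BKR}) are indispensable, and it requires two-sided control of the relevant lower tails (an upper bound on the true-edge tail and a matching lower bound on the descendant tail). A secondary point requiring care is the variable-root step (ii): one must rule out that some wrong-rooted arborescence undercuts $\mathcal{T}$, which is handled by the observation that a wrong root necessarily strands the true root $r$ with a single Gumbel-order edge whose score dominates the $O(|V|)$ light-tailed edges of $\mathcal{T}$.
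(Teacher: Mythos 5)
Your skeleton coincides with the paper's: reduce to a deterministic ``good'' population score matrix for which Chu--Liu/Edmonds provably returns $\T$ (your conditions (i)--(ii) play the role of \eqref{eqn:w.ij.correct}--\eqref{eqn:min.root} in Lemma~\ref{lem:w.good}), handle ancestors, unrelated nodes and the root by comparing a Gumbel-order lower tail against a lighter noise-order lower tail, and transfer to the empirical scores by almost-sure convergence of empirical quantiles over finitely many pairs. The place where you diverge is the hardest comparison --- true child versus deeper descendant --- and there your argument has two genuine gaps.

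First, your separation mechanism for that comparison is noise accumulation along the path: a distance-$m$ descendant is claimed to have an $m$-fold convolution lower tail, strictly heavier than the single copy $e^{-Kt^p}$ at the true edge. This presupposes the recursive reading of \eqref{eqn:noise} in which noise propagates downstream. The paper's proof instead treats the noise as attached to the observations, so that \emph{every} pair produces the same two-fold noise difference $\eps_i-\eps_j$ in the lower tail and the child and a deeper descendant differ only through the location parameter $b$ of the family $\xi_b=(\eps_i-\eps_j)+((Z_i-Z_j)\vee b)$ (Proposition~\ref{prop:compare.X}); in that regime a tail-order argument gives no separation whatsoever, and the paper must prove the variational inequality $\partial^2_{12}q(b,r)<0$ (Lemma~\ref{lem:variation}) to get monotonicity of the gap in $b$. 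Even granting your reading of the model, the matching \emph{lower} bound on the descendant tail, which you correctly identify as indispensable, is not routine and is not supplied: forcing the accumulated noise sum along the path to be very negative depresses the intermediate values $X_{v_\ell}$ and hence makes the path-realization event \emph{less} likely, so the two events you intersect are negatively associated. Recovering the full convolution rate requires, for instance, additionally forcing $X_j\gtrsim t$ at a Gumbel cost $e^{-ct}$ and verifying that this cost is asymptotically negligible against $e^{-K2^{1-p}t^p}$; without some such argument the cheap lower bound (all the negativity placed on the last noise term) only yields the \emph{same} order $e^{-Kt^p}$ as the true edge and separates nothing.

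Second, part (a) must hold for \emph{every} pair $0<\smallR<\bigR<r^\ast$, in particular for $\bigR$ arbitrarily close to $\smallR$. In that regime both lower quantile gaps tend to $0$ and the leading-order quantile asymptotics you invoke (the $\Theta\bigl((\log(1/\smallR))^{1/p}\bigr)$ versus $\Theta\bigl(\log(1/\smallR)\bigr)$ hierarchy, and the larger constant for descendants) say nothing about which gap is smaller; the comparison degenerates to comparing $\partial_r q=1/f_Y(q_r(Y))$ at the respective quantiles, i.e., a density-at-quantile estimate, which is exactly the content of the paper's second-mixed-derivative computation and is absent from your sketch. Part (b) does not suffer from this defect, since there the gap is anchored at the bounded mean and your quantile asymptotics would suffice once the two-sided tail control of the previous paragraph is actually established.
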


\begin{remark}\label{rem:fail}
To understand why a condition like \eqref{eqn:light.tail} is necessary, suppose that $V=\{1,2\}$ ($d=2$) and that the true graph is $1 \to 2$. Let $F_{21}$ be the distribution function of $(\eps_2 - \eps_1) + (Z_2 - Z_1) \vee c_{21}$.
The lower tail of $F_{21}$ essentially is the lower tail of $(\eps_2 - \eps_1)$, while the upper tail is essentially the upper tail of the convolution $(\eps_2 - \eps_1) + (Z_2 - Z_1)$, which is dominated by the signal $(Z_2 - Z_1)$ if it has the heavier tail, and otherwise it is dominated by the noise $(\eps_2 - \eps_1)$.
Since $(\eps_2 - \eps_1)$ has symmetric distribution, if the noise term dominates the distribution, 
$w_{12} \approx w_{21}$ and it would be impossible to distinguish the edge $1 \to 2$ from the edge $2 \to 1$. If the signal dominates, the asymmetry between the lower and upper tails of $F_{12}$ lends us the crucial inequality to distinct between the two graphs as illustrated in Figure~\ref{fig:histogram}.

The argument extends to $d>2$ for a graph with only one directed path.
For a realistic matrix with real-valued entries, Chu–Liu/Edmonds' algorithm outputs an approximately correct root-directed tree. Intuitively, reversing every edge direction gives the same score but is generally not a root-directed tree.
\halmos

\begin{figure}[t]
\includegraphics[width=0.48\textwidth] {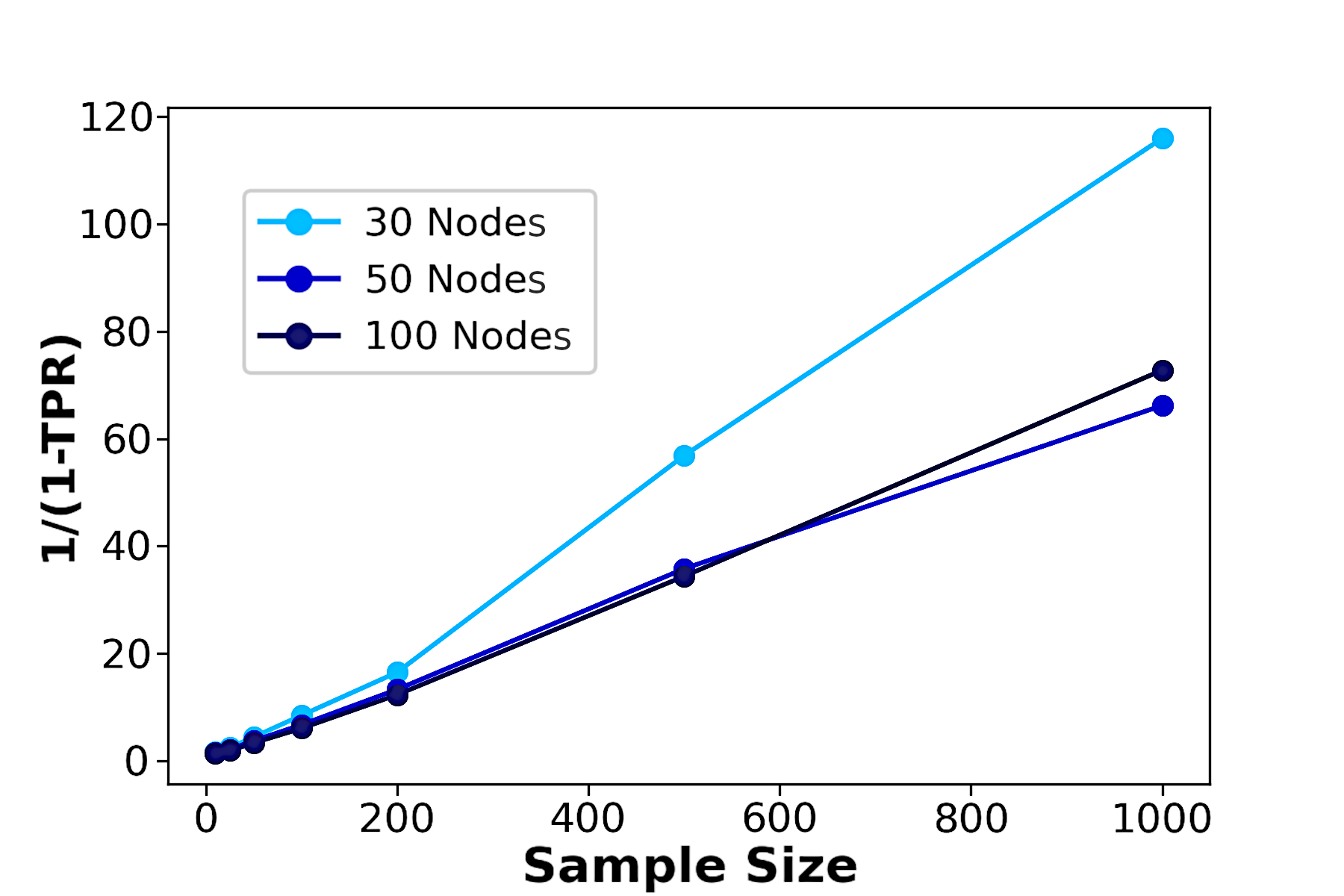} \,
\includegraphics[width=0.48\textwidth] {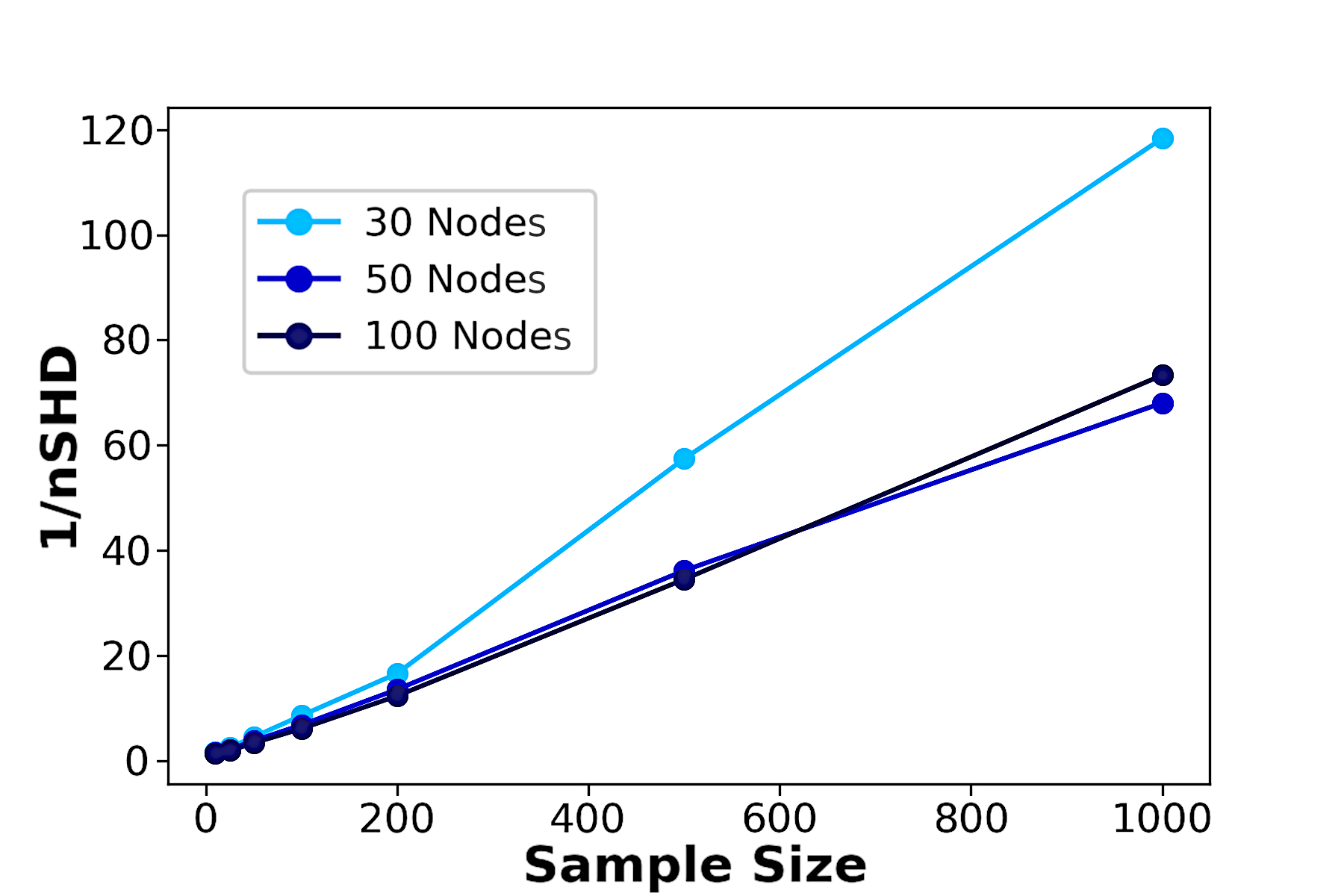}
\caption{$1/(\mbox{mean errors})$ vs. number of observations $n$ for different graph sizes $d = 30, 50, 100$. We simulated 100 root-directed spanning trees as described in Section~\ref{sec:sim} below, where we use the Gumbel-Gaussian noise model.
Then we applied \QTree\ Algorithm~1 with quantile-to-mean gap \eqref{eqn:wij.tail.mean} with $\underline{r}=0.05$ and $\al=0$ to estimate the true (simulated) tree, and computed the average error measured by 1-TPR (left) and nSHD (right) given in \eqref{eq:per_metrics}.}\label{fig:simulation}
\end{figure}
\end{remark}

\subsection{Parameter tuning by bootstrap aggegation}\label{sec:parameter.tuning}

\begin{algorithm}[t]\caption{\texttt{Auto-tuned} \QTree} \label{alg:qtree.automated}
\flushleft 
\textbf{Parameters}: subsampling fraction $f \in [0,1]$, number of subsamples $m \in \mathbb{N}$, a set of parameters $\Theta = \{(\smallR,\al)\} \subset [0,1)^2$ to search over. \\
\textbf{Input}: data $\mathcal{X} = \{x^1,\dots,x^n\} \subset \R^V$. \\
\textbf{Output}: the optimal parameter $(\smallR^\ast,\al^\ast) \in \Theta$ and the corresponding root-directed spanning tree $\hat{\T}_{\max}$ on $V$.
\begin{algorithmic}[1]
\For{$(\smallR,\al) \in \Theta$}
  \For{$\ell = 1,\dots,m$}
    \State Sample without replacement a random subset $\mathcal{X}^\ell$ of $n \cdot f$ observations from $\mathcal{X}$. 
    \State Let $\T^\ell(\smallR,\al)$ be the output of \QTree\ Algorithm~\ref{alg:qtree} fitted on $\mathcal{X}^\ell$. 
    \EndFor
    \State Let $\mathsf{T}(\smallR,\al) = \{\T^\ell(\smallR,\al)): \ell = 1,\dots,m\}$
    \State Compute $S(\mathsf{T}(\smallR,\al))$ by \eqref{eqn:W.T}.
    \State Compute $E(\mathsf{T}(\smallR,\al))$ as the maximum root-directed spanning tree of $S(\mathsf{T}(\smallR,\al))$ per Lemma \ref{lem:E.T}. 
    \State Compute $\Var(\mathsf{T}(\smallR,\al))$ by \eqref{eqn:var.T} 
\EndFor
  \State Define $(\smallR^\ast,\al^\ast) := \arg\min\{\Var(\mathsf{T}(\smallR,\al)): (\smallR,\al) \in \Theta\}$. 
\State \textbf{Return } the optimal pair $(\smallR^\ast,\al^\ast)$ and $\hat\T_{\max}:=E(\mathsf{T}(\smallR^\ast,\al^\ast))$.
\end{algorithmic}
\end{algorithm}

The \QTree\ Algorithm~\ref{alg:qtree} has two parameters: the quantile level $\smallR \in (0,1)$ and the cut-off level $\al \in [0,1)$.
If data came from a noise-free max-linear Bayesian tree, then we should select $\smallR$ as small as possible and $\alpha=0$, and fit \QTree\ on all of the available data $\calx$.   
However, due to the presence of noise, setting $\smallR$ and $\alpha$ too small would make the estimator volatile to large values of the noise variables. 

In this section, we propose in a first step a subsampling procedure to stabilize \QTree\  Algorithm~\ref{alg:qtree} and, in a second step, automatically choose $\smallR$ and $\al$ in \QTree. This results in Algorithm~\ref{alg:qtree.automated}, which we also refer to as \texttt{auto-tuned} \QTree. 

The basic idea is to run an algorithm on multiple subsets of the data, and then average the resulting estimators. This subsampling approach is also called bootstrap aggregation or  bagging; see \citet[Section~8.2.1]{James2013} and \citet{Subsampling} for a variety of subsampling procedures. Since \QTree\ outputs a root-directed tree as its estimator, which is a combinatorial object, one cannot simply take the average of their adjacency matrices, as that would not produce a tree. Instead, we see the set of output trees as a distribution over trees. Then, we solve a second problem, namely, to find the centroid tree E(T) of this distribution, defined as that tree which minimizes the expected Hamming distance to a typical tree (cf. Definition \ref{def:1}). 
Lemma \ref{lem:E.T} below proves that the centroid can be computed with another application of Chu–Liu/Edmonds' algorithm. This ensures that the estimator produced by \texttt{auto-tuned} \QTree\ can be computed quickly (cf. Lemma~\ref{lem:2}). 

Our key indicator for model performance is variability in the estimated tree, that is, whether the tree $\hat{\T}$ and its reachability graph $\hat{\calr}$ output by \QTree\ would change significantly if we fit it to different subsamples of the data. Here, we denote the {\em reachability graph}  $\hat{ \mathcal R}$ of $\hat{\T}$ as the graph that results from drawing an edge between a pair $(j,i)$ whenever there is path from $j$ to $i$ in $\hat{\T}$. We propose the following definition of variability for a distribution of root-directed spanning trees.

\begin{definition}\label{def:1}
Let $V$ be a set of nodes and $\mathsf{T} = \{\T^1,\dots,\T^m\}$ a collection of root-directed spanning trees on $V$, and let $\mathsf{R} = \{\calr^1,\dots,\calr^m\}$ be their corresponding reachability graphs. The {\em centroid} of $\mathsf{T}$, denoted $E(\mathsf{T})$, is the root-directed spanning tree on $V$ that minimizes the sum of normalized structural Hamming distances  defined in \eqref{eq:per_metrics}  as follows:
\begin{equation}\label{eqn:E.T.argmin}
E(\mathsf{T}) := \arg\min\limits_{\mathcal T \in \Psi}\sum_{i=1}^m {\rm nSHD}(\T,\T^i),
\end{equation}
where $\Psi$ is the space of root-directed spanning trees on $V$.

Let $E(\mathsf{R})$ denote the reachability graph of $E(\mathsf{T})$. 
Let $e_{\mathsf{T}}$ be the number of edges of $E(\mathsf{T})$, and $e_{\mathsf{R}}$ be the number of edges of $E(\mathsf{R})$, respectively. 
We define the {\em  variability} of $\mathsf{T}$, denoted $\Var(\mathsf{T})$, as
\begin{equation}\label{eqn:var.T}
\Var(\mathsf{T}) := \frac{1}{e_{\mathsf{T}}}\frac{1}{m}\sum_{i=1}^{m}{\rm nSHD}(\T^i,E(\mathsf{T})) + \frac{1}{e_{\mathsf{R}}}\frac{1}{m}\sum_{i=1}^{m}{\rm nSHD}(\calr^i,E(\mathsf{R})).
\end{equation}
\end{definition}
\medskip

Involving the Hamming distance of the reachability graphs in \eqref{eqn:var.T} penalizes the situation where $\T^i$ and $E(\mathsf{T})$ differ in a few edges low down in the tree, for example, if they have different roots. Such a difference would lead to a small structural Hamming distance between the two trees, but a large structural Hamming distance between their reachability graphs, and in particular, very different river networks. 

The following lemma says that $E(\mathsf{T})$ is a maximum root-directed spanning tree of a particular graph with score matrix $S(\mathsf{T})$ that measures the stability among the trees in $\mathsf{T}$. 
In particular, $E(\mathsf{T})$ can be computed using Chu–Liu/Edmonds' algorithm (choosing the root realizing the minimum score), and thus $\Var(\mathsf{T})$ can be computed in polynomial time. The proof can be found in Section~\ref{sec:s1}.

\begin{lemma}\label{lem:E.T}
Let $V$ be a set of nodes and $\mathsf{T} = \{\T^1,\dots,\T^m\}$ a collection of root-directed spanning trees on $V$. 
Define the {\em stability score matrix} $S:=S(\mathsf{T}) \in \R_{\geq 0}^{V \times V}$ by
\begin{equation}\label{eqn:W.T}
s_{ij} := S(\mathsf{T})_{ij} := \#\{\T \in \mathsf{T}: j \to i \in \T\}. 
\end{equation}
Suppose that the maximum root-directed spanning tree $\T_{\max}$ of the graph on $V$ with score matrix $S(\mathsf{T})$ is unique. Then $E(\mathsf{T}) = \T_{\max}$.
\end{lemma}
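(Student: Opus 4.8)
The plan is to reduce the centroid-minimization problem \eqref{eqn:E.T.argmin} to a maximum-weight root-directed spanning tree problem, then invoke the uniqueness hypothesis. First I would unwind the definition of the normalized structural Hamming distance $d_H$ restricted to the space $\Psi$ of root-directed spanning trees on $V$. Any two trees in $\Psi$ have exactly $|V|-1$ edges, so the normalizing denominator $|E(\T)| + |E(\T^i)| = 2(|V|-1)$ is a constant independent of the choice of $\T \in \Psi$; thus minimizing $\sum_i d_H(\T,\T^i)$ over $\Psi$ is equivalent to minimizing $\sum_i \mathrm{SHD}(\T,\T^i)$ over $\Psi$. Next, I would argue that on $\Psi$ the structural Hamming distance $\mathrm{SHD}(\T,\T^i)$ coincides with the number of directed edges in which the two trees differ, i.e. $|E(\T) \setminus E(\T^i)|$ (equivalently $|E(\T^i) \setminus E(\T)|$, since both have the same cardinality) — reversals never help here, because reversing an edge of a root-directed tree changes it by two edges at best and the symmetric-difference count already accounts for this; more carefully, since both graphs are spanning trees with the same vertex set, a minimal edit sequence can be taken to consist only of one deletion and one addition for each edge of the symmetric difference, so $\mathrm{SHD}(\T,\T^i) = |E(\T) \triangle E(\T^i)|/2 + (\text{reversal bookkeeping})$; the clean statement I actually need is $\mathrm{SHD}(\T,\T^i) = |E(\T^i)| - |E(\T)\cap E(\T^i)| = (|V|-1) - |E(\T)\cap E(\T^i)|$, which holds on $\Psi$ and is all the argument uses.

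Given that identity, $\sum_{i=1}^m \mathrm{SHD}(\T,\T^i) = m(|V|-1) - \sum_{i=1}^m |E(\T)\cap E(\T^i)|$, and the first term is constant over $\T \in \Psi$. Hence
\[
E(\mathsf{T}) = \arg\min_{\T\in\Psi}\sum_{i=1}^m \mathrm{SHD}(\T,\T^i) = \arg\max_{\T\in\Psi}\sum_{i=1}^m |E(\T)\cap E(\T^i)|.
\]
Now I rewrite $\sum_{i=1}^m |E(\T)\cap E(\T^i)| = \sum_{i=1}^m \sum_{j\to i' \in \T} \mathbf{1}\{j \to i' \in \T^i\} = \sum_{j\to i'\in\T} \#\{\T^i \in \mathsf{T}: j\to i'\in\T^i\} = \sum_{j\to i'\in\T} s_{i'j}$, which is exactly the total score of the tree $\T$ under the stability score matrix $S(\mathsf{T})$ from \eqref{eqn:W.T}. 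Therefore $E(\mathsf{T})$ is precisely a maximum-score root-directed spanning tree of the graph on $V$ weighted by $S(\mathsf{T})$. Under the hypothesis that this maximizer $\T_{\max}$ is unique, the $\arg\max$ is the single tree $\T_{\max}$, giving $E(\mathsf{T}) = \T_{\max}$, and by Remark~\ref{remark1} it is computed by Chu--Liu/Edmonds' algorithm (run over all choices of root, taking the best score). This also shows the minimizer in \eqref{eqn:E.T.argmin} is well-defined, i.e. $E(\mathsf{T})$ is unique.

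The one delicate point — the main obstacle — is the claim that on the restricted domain $\Psi$ the structural Hamming distance reduces to $(|V|-1) - |E(\T)\cap E(\T^i)|$, with edge reversals contributing nothing extra. I would handle this by observing that for two spanning trees on the same vertex set, one can always transform one into the other by deleting the edges of $E(\T^i)\setminus E(\T)$ and adding the edges of $E(\T)\setminus E(\T^i)$, never using a reversal, which shows $\mathrm{SHD}(\T,\T^i) \le |E(\T^i)\setminus E(\T)| + |E(\T)\setminus E(\T^i)|$; but one must also confirm the optimal edit does not do better by exploiting reversals — here the cleanest route is simply to adopt, consistently with how SHD is scored elsewhere in the paper, that a reversal counts as a single operation and note that replacing any reversal in an optimal edit sequence by a delete-then-add pair can only keep the intermediate objects spanning trees while not increasing the relevant overlap count; since $|E(\T)| = |E(\T^i)| = |V|-1$, the two one-sided differences have equal size and the bound becomes $\mathrm{SHD}(\T,\T^i) = (|V|-1) - |E(\T)\cap E(\T^i)|$ up to the additive constant that washes out of the $\arg\min$. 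In any case, only the monotone dependence of $\sum_i \mathrm{SHD}(\T,\T^i)$ on $-\sum_{j\to i'\in\T} s_{i'j}$ (plus a $\T$-independent constant) is needed, so even the crude upper bound together with its attainment suffices to pin down the $\arg\min$.
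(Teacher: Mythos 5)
Your proposal follows the same route as the paper's proof: cancel the constant normalization, reduce minimizing $\sum_i d_H(\mathcal{T},\mathcal{T}^i)$ over $\Psi$ to maximizing the total edge overlap $\sum_{i=1}^m |E(\mathcal{T})\cap E(\mathcal{T}^i)| = \langle S,T\rangle$, and identify the unique maximizer with $\mathcal{T}_{\max}$. The paper performs the identical reduction by identifying each tree with its $\{0,1\}$ adjacency vector and computing $\sum_i d_H(\mathcal{T}',\mathcal{T}^i) = -2\langle S,T'\rangle + \langle S,\mathbf{1}\rangle + m(d-1)$, from which the equivalence of the $\arg\min$ and the $\arg\max$ is immediate.

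There is, however, a gap at exactly the step you flag as delicate. The identity $\mathrm{SHD}(\mathcal{T},\mathcal{T}^i) = (|V|-1)-|E(\mathcal{T})\cap E(\mathcal{T}^i)|$ is false when $\mathrm{SHD}$ counts a reversal as one operation, as in \eqref{eq:per_metrics}: writing $A_i := E(\mathcal{T})\cap E(\mathcal{T}^i)$, $B_i := E(\mathcal{T})\setminus E(\mathcal{T}^i)$, and $R_i$ for the number of edges of $B_i$ whose reverse lies in $E(\mathcal{T}^i)\setminus E(\mathcal{T})$, one has $\mathrm{SHD}(\mathcal{T},\mathcal{T}^i) = 2|B_i| - R_i = 2(|V|-1) - 2|A_i| - R_i$, and $R_i$ depends on $\mathcal{T}$ through more than $|A_i|$. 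The asserted monotone dependence of $\sum_i\mathrm{SHD}(\mathcal{T},\mathcal{T}^i)$ on $-\langle S,T\rangle$ therefore does not hold in general: against $\mathcal{T}^1 = \{1\to 2,\ 2\to 3,\ 3\to 4\}$ the tree $\{2\to 1,\ 3\to 2,\ 4\to 3\}$ has overlap $0$ and $\mathrm{SHD}=3$ (three reversals), while $\{1\to 2,\ 2\to 4,\ 3\to 1\}$ has the larger overlap $1$ yet $\mathrm{SHD}=4$. Your proposed repair, replacing each reversal in an optimal edit sequence by a delete-and-add pair, strictly increases the number of operations and hence does not preserve $\mathrm{SHD}$, so it cannot certify the claimed identity, and the leftover term $-R_i$ is not a $\mathcal{T}$-independent constant. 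The paper's computation avoids this entirely by taking $d_H$ to be the plain Hamming distance between adjacency vectors, $d_H(\mathcal{T},\mathcal{T}^i) = |E(\mathcal{T})\,\triangle\, E(\mathcal{T}^i)| = 2\bigl((|V|-1)-|A_i|\bigr)$, which carries no reversal term and for which the reduction to $-2\langle S,T\rangle$ plus a $\mathcal{T}$-free constant is exact. If you state your key identity for this symmetric-difference distance (with the factor $2$), every subsequent step of your argument is correct and coincides with the paper's.
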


\begin{remark}
    The \texttt{auto-tuned} \QTree~Algorithm~\ref{alg:qtree.automated} has complexity $O(|V|^2nm|\Theta|)$. For a proof we refer to Lemma~\ref{lem:2}.
\end{remark}

 \begin{figure}
\centering
\includegraphics[height=7cm] {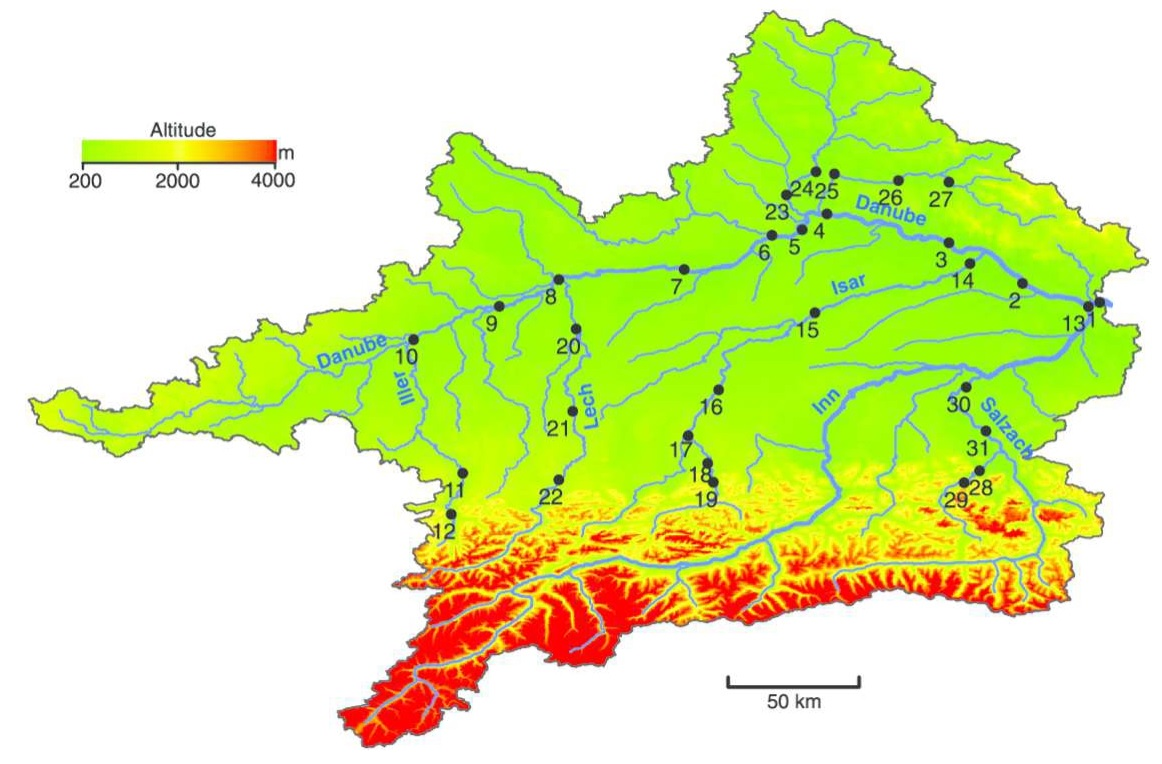}
\caption{\label{fig:danube-true} Topographic map of the Upper Danube Basin, showing the sites of   31 gauging stations along the Danube and its tributaries.} 
\end{figure}

\section{Data description}\label{sec:data_description}

We focus on river discharge data in two river networks, the Upper Danube network with data from Bavaria, Germany, and the Lower Colorado network in Texas, USA.
Large flood events are classical examples for high risk analysis. The Danube data as well as the data of all three sectors of the Colorado are available in the \texttt{Python} package \texttt{QTree}  (\cite{ngoc_alg}). The Danube data are available in the \texttt{R} package \texttt{graphicalExtremes} (\cite{RgraphicalExtremes}).

In general, river discharges across a set of stations is recorded multiple times per hour and some preprocessing is needed to turn the raw data into independent data. This was detailed in \cite{asadi2015} 
for the Danube data. 
We follow their procedure (described in Section~\ref{sec:Danube}) with slight modifications for the Colorado data (described in Section~\ref{sec:lower_col}).

\subsection{The Upper Danube network}\label{sec:Danube}

The Danube network data consist of measurements collected at $d=31$ gauging stations over 50 years from 1960 to 2009 by the Bavarian Environmental Agency (\url{http:www.gkd.bayern.de}); see Figure~\ref{fig:danube-true}.
Preprocessing the data, \cite{asadi2015} first take daily mean values in each time series. 
Their idea is to find non-overlapping time windows of $p$ days, centered around the observation of maximal rank across all time series. For the Danube, the authors choose $p=9$ days ($\pm 4$ days around the observation of maximal rank). For each time series, they then take the maximum within the given time window, delete the data of this window, and proceed until no window of $p$ consecutive days remains.
In order to reduce temporal non-stationarity, in particular, the effect of snow melt, only the months June, July and August are considered.
This results in $n=428$ observations from a $31$-dimensional random vector whose
$i$-th entry corresponds to the maximum water discharge at the $i$-th station, observed within a 9-day window, where at least one station witnessed a large discharge value; these observations are assumed to be independent. 

\begin{figure}[t]
\centering
$\begin{array}{cc}
 \includegraphics[width=6cm] {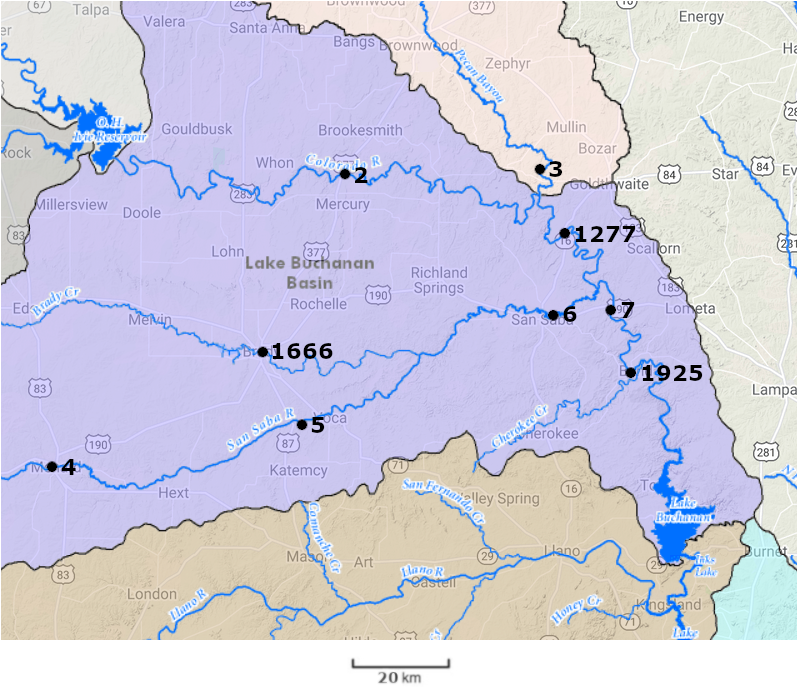}  &
  \includegraphics[width=8cm] {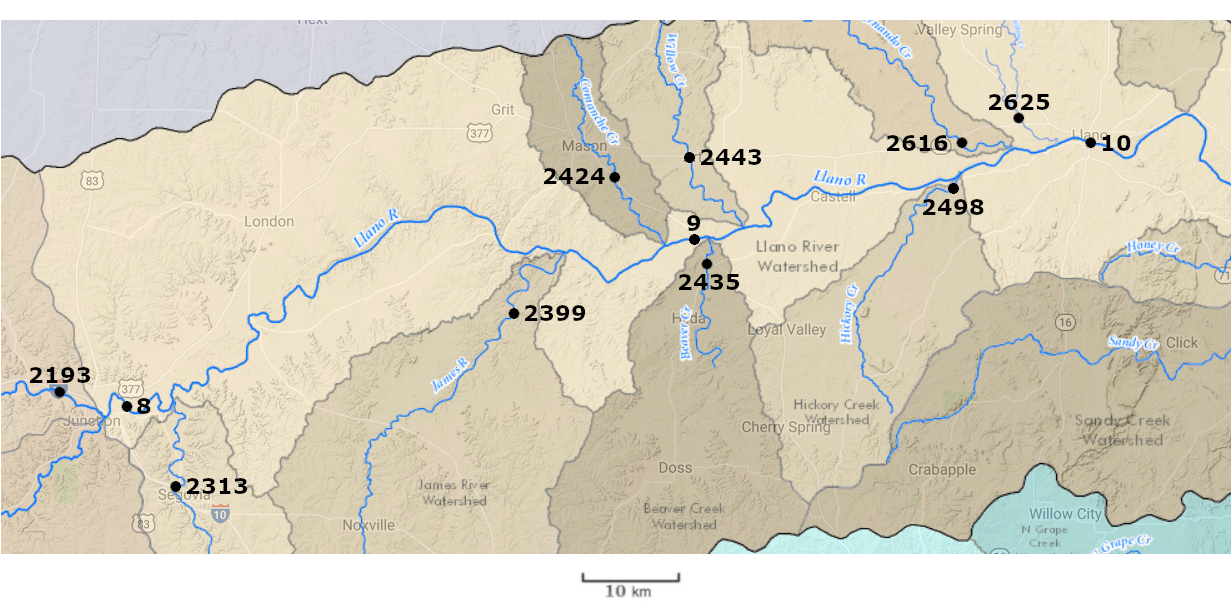} 
 \end{array}
 $
  \includegraphics[height=7cm] {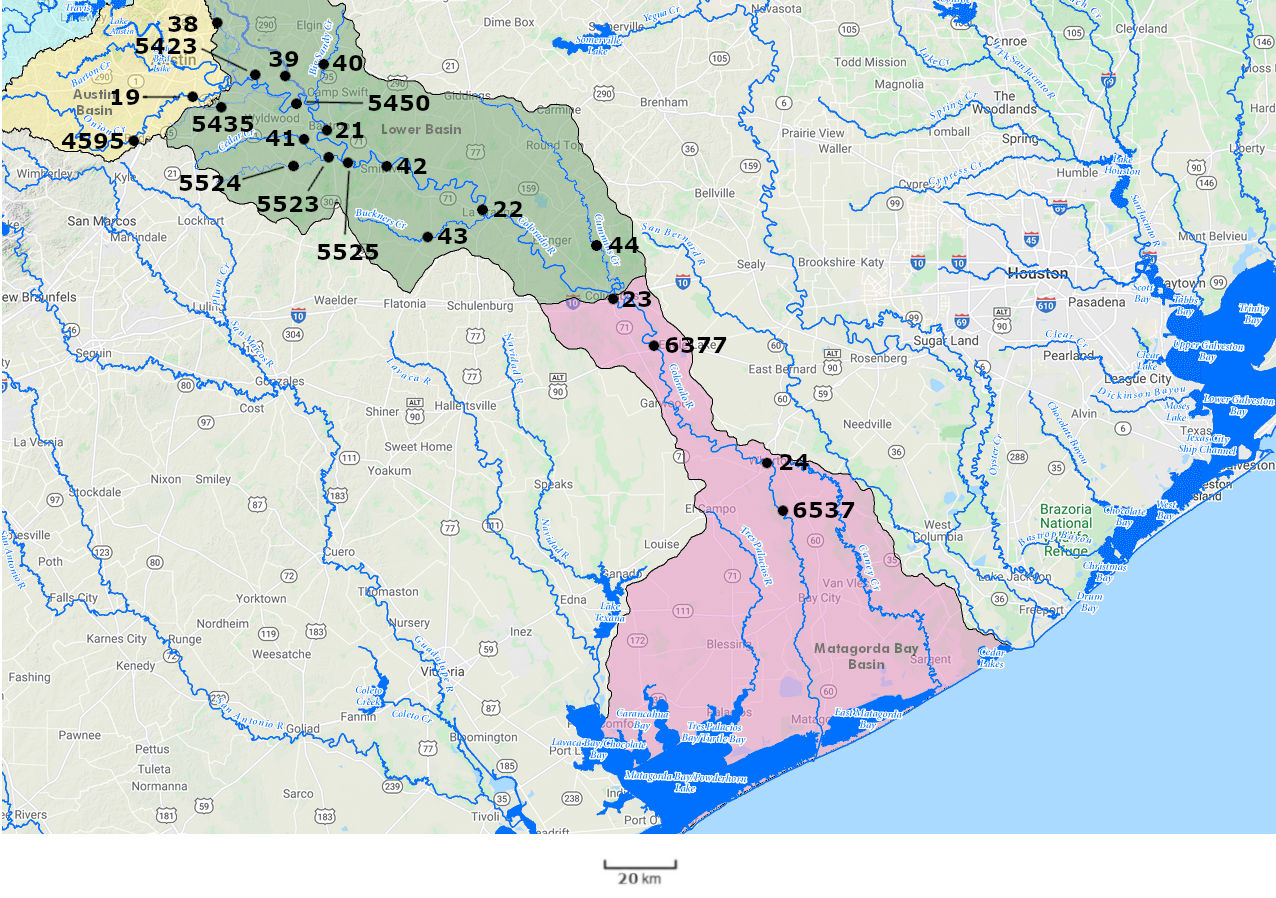}
\caption{\label{fig:upper-col}  Topographic maps of the Top, Middle and Bottom sectors (arranged clockwise) of the Lower Colorado network, showing sites of the gauging stations along the Colorado River and its tributaries. We treat them as three unrelated data sets. }
\end{figure} 

\subsection{The Lower Colorado network in Texas} \label{sec:lower_col}

This section describes the new data set of the Lower Colorado river network in Texas collected by the Lower Colorado River Authority (LCRA, \url{https://www.lcra.org/}) and details their preprocessing.

The Lower Colorado is one of the major rivers in Texas. 
Flowing through major population centers such as Austin, the state capital of Texas, flood and drought mitigation in the Lower Colorado Basin is of prominent interest. 
A particularly challenging feature of the Lower Colorado network is prolonged drought (discharge of $0$) followed by flash flooding which can damage sensors, resulting in loss of data over multiple days. 
This makes the Colorado data much more challenging than the Danube data. 

\begin{figure}
\centering
\includegraphics[height=5cm] {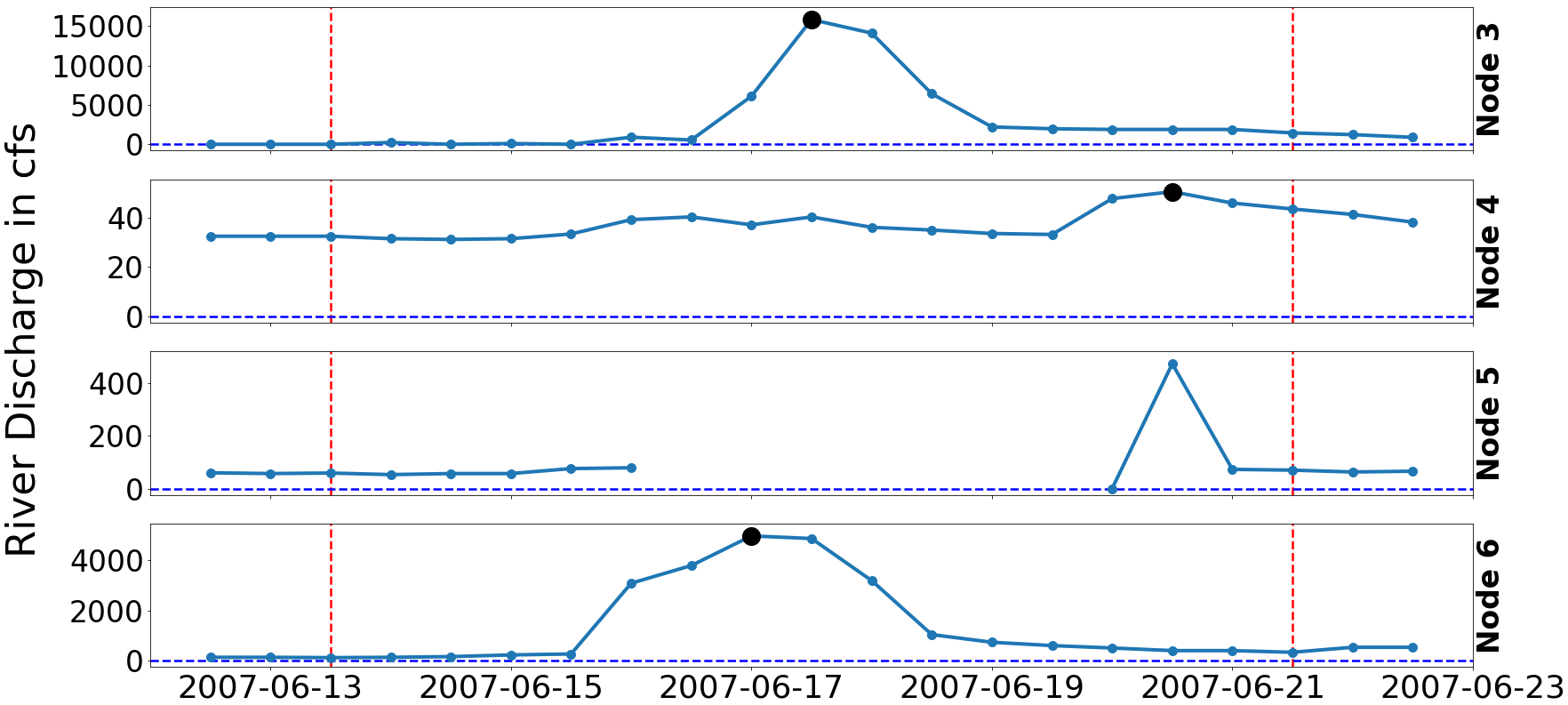}
\caption{\label{fig:declustering}
Typical discharge at various gauging stations around one flood event in the Lower Colorado network. The vertical lines mark a time window of $p=17\times 12h$ or $p =8.5$ days. Dots denote the 12-hour maximum water discharges. At stations 3,4 and 6, the bold dot denotes the peak discharge during this window. For station 5, the sensor did not function during this entire time period, so the peak for station 5 is recorded as missing. Station 3 has a median discharge of only 15 cfs and is hence mostly drained over the measurement period,
but the water flow regularly aggregates to over 16,000 cfs within a very short period of time.} 
\end{figure} 

The river discharges at the Lower Colorado network, measured in cubic feet per second (cfs), are collected multiple times per day at a total of 104 stations around the Colorado River and its tributaries in Texas from the 1st of December 1991 to the 14th of April 2020 (10,363 days). 
 We do not take into account 5 nodes of the Blanco River and San Bernand River, which are not flow-connected to the Colorado River, and also 21 nodes with zero observations.
Moreover, we exclude the nodes 5476, 5634, 5635, 6397, and 6533 as they are located close to hydropower plants. This gives a total of 73 nodes. 

Another problem occurs, because in the Lower Colorado Basin, multiple dams cut off the river into disjoint sectors (\cite{Dams}). Thus, we split the river network such that in each section, we get the largest set of nodes where (i) no node is within 10km of a major dam, (ii) all nodes are connected, and (iii) for each pair $(j, i)$ of this subset, there are at least 1000  pairwise daily observations, which is 9.6\% of the total amount of 10,363 days available. Criterion (iii) ensures that among the given pairs of nodes, any possible causal relation can be discovered, and not be affected by the lack of concurrent data. 
This results in 42 nodes divided into three sectors, which we call the  \emph{Top, Middle and Bottom} sectors of the Lower Colorado (cf. Figure~\ref{fig:upper-col}) with 9, 12 and 21 nodes, respectively. 
From here on we treat these three sectors as three separated, unrelated data sets. 

In contrast to the Danube network with snow melt and seasonal periodicity, we can and do take all Colorado data of a year into account. 
We observe further that, by the special weather conditions, flood events can last as little as a few hours. Therefore, in contrast to \cite{asadi2015}, who take daily time slots, we take 12-hour time slots. As a first step, we take maxima of each 12-hour time slot to retain the knowledge about large possible peaks and such periods where no data are collected. 
In a second step we then take non-overlapping time windows of $p=8.5$ days ($\pm 8$\, 12-hour slots around
the observation of maximal rank); see Figure~\ref{fig:declustering}. 

We take the most conservative approach to missing data, namely, if node $i$ has any missing data during the considered time window, then its maximum discharge over this window is labeled as missing (cf. Figure~\ref{fig:declustering}). 
This is because a sensor can break before the river reaches peak discharge and for practical reasons can only be replaced after the flood event is over (\cite{LCRA2}), and thus the sensor potentially did not measure the largest water discharge that occurred at node~$i$.  This results in the Top sector having 9 nodes, 975 observations, $18\%$ missing data; the Middle sector has 12 nodes, 972 observations, $27\%$ missing data. 
The Bottom sector is most challenging, for it has the most nodes (21 nodes), 961 observations, the highest amount of missing data (37\%), and many nodes around the city of Austin with only a few miles apart from each other. 
The close proximity of nodes induces strong spatial dependence even among nodes that are not flow-connected, making it potentially more challenging to recover the true network. 
Moroever, in the Bottom sector there are many nodes with a very small number of observations due to the many missing observations, and we create a new data set by excluding all nodes with less than 150 observations and refer to them as {\em Bottom150}.
A summary of the available data for each data set is given in Table~\ref{table:dn}.

\begin{table}
\caption{Number of nodes $d$, number of observations $n$ and percentage of missing data used for the algorithmic reconstruction of the river network.\label{table:dn}}
\centering
\fbox{%
\begin{tabular}{c|ccccc}
 & Danube & Top & Middle & Bottom & Bottom150   \\
\hline
$d$ & 31 & 9 & 12 & 21 & 16   \\
\hline
$n$ & 428 & 975 & 972 & 961 & 961   \\
\hline
$\%$ & 0\% & 18\% & 27\% & 37\% & 22\%    \\
\end{tabular}}
\end{table}

\section{Results}\label{sec:results}

\subsection{Results of \texttt{auto-tuned} \QTree\ for all river networks}\label{sec:resultsQTree}

\enlargethispage{2\baselineskip}

For each of the four river networks Danube, Top, Middle and Bottom sectors of the Colorado, we ran \texttt{auto-tuned} \QTree~(Algorithm~\ref{alg:qtree.automated})  with fixed $\smallR = 0.05$,  subsampling rate $f = 0.75$, and number of repetitions $m = 1000$ to choose the tuning parameter $\al$ automatically from $\{0.7, 0.725, 0.75,\allowbreak,\dots, 0.9\}$. 
The optimal parameters $\al^*$ selected  by \texttt{auto-tuned} \QTree\ for these networks are shown in Table~\ref{table:param}. 

\begin{table}
\caption{Optimal parameters $\al^*$ selected by \texttt{auto-tuned} \QTree\ with $\underline{r}=0.05$ using grid search with bootstrap aggregation.\label{table:param}}
\centering
\fbox{%
\begin{tabular}{c|ccccc}
 & Danube & Top & Middle & Bottom & Bottom150      \\
\hline
$\al^*$ & 0.775 & 0.825 & 0.75 & 0.85 & 0.725   \\
\end{tabular}}
\end{table}


Figures~\ref{fig:dir-danube}, \ref{fig:dir-col-mid}, and \ref{fig:dir-col-low-part}(top) show the estimated trees of the Danube, Top, Middle and Bottom sectors of the Colorado, respectively.
We do two estimated-vs-true comparisons: one for the tree, and one for its reachability graph. 
The four performance metrics we use are those defined in  \eqref{eq:per_metrics}. 
Table~\ref{table:qtreescores} gives all metrics over all data sets. We recall that the performance of an algorithm is better the smaller the first three metrics are and the larger TPR is.
\QTree\ performs very well across all data sets except for the Bottom sector of the Colorado. 
For the reachability graph, the statistics are even better, 
indicating that a wrongly estimated edge directs rather from an ancestor (which is not a parent) to a child ({\em flow-connection} is preserved), than a {\em spurious edge }(an edge which contradicts flow-connection). 
The number of {\em missing edges} are determined by the fact that a tree has exactly $d-1$ edges.

Figure~\ref{fig:dir-col-low-part}(top) visualizes the estimation of the Bottom sector of the Colorado. 
This data set is the most challenging due to large portions of missing data and the clustering of nodes around the city of Austin. Nevertheless, even for this data set the estimated tree has only two spurious edges, between $6537$ and $24$ and between $42$ and $5525$. Both of these node pairs are physically close. All the remaining wrongly estimated edges are flow-connected. 

We note that the majority of errors made by \QTree\ involves nodes with less than 150 observations (which are the nodes 5525, 5450, 5423, 5435 and 5524). This is not at all surprising. The model was fitted to only 75\% of the data, and the optimally chosen $\al^*$ is $0.85$, which means that for each edge involving one of the above nodes, the number of observations available to \QTree\ is at most $150 \times 0.75 \times 0.15 = 14$. 
To check the hypothesis that this number is too small for \QTree\ to perform reliably, we excluded all nodes with less than 150 observations and refitted \QTree\ on the remaining 16 nodes (Bottom150), resulting in an optimal $\al^*=0.725$.
The result depicted in Figure~\ref{fig:dir-col-low-part}(bottom) shows significant improvements. This manifests another desirable feature of \QTree, namely, that it relies on local (pairwise) estimation, and thus changes to the node set in one part of the tree do not affect the estimated network elsewhere.

We present details of the parameter selection procedure of \texttt{auto-tuned} \QTree\ for the Danube in Figure~\ref{fig:parameter_sel_danube}. The respective figures for the Colorado data sets can be found in Figures~\ref{fig:parameter_sel_colup}-\ref{fig:parameter_sel_lowcol150} of the Supplementary Material.
As expected from a statistical estimation procedure, the statistical choice of the parameter selection by \texttt{auto-tuned} \QTree\ does not always output the best result on every data set. 
However, it fails only by very few edges to the graph estimated with the best choice of parameters. For example, for the Danube the parameter $\al=0.75$ (instead of the estimated optimal $\al^*=0.775$) would have lead to a better result (cf. Figure~\ref{fig:dir-danube-0.75}). 
Also for the Top sector of the Colorado, $\al=0.9$ would have given perfect recovery of the true network; this is clear as all four metrics become optimal (see Figure~\ref{fig:parameter_sel_colup}).

\enlargethispage{2\baselineskip}

In summary, on all four data sets considered, \texttt{auto-tuned} \QTree\ performed well for nodes with a sufficient number of observations as is obvious from Figure~\ref{fig:dir-col-low-part}(top and bottom). 
The estimated optimal parameter $\al^*$ is either the best one (i.e., the corresponding estimated tree is best across all $\alpha$ as for the Middle and Bottom sectors of the Colorado), or such that it is within one to two wrong edges of the best one (Danube and Top sector of the Colorado).
The method can handle data with missing observations and close spatial proximity between nodes. 

\begin{table}
\caption{Metrics nSHD, FPR, FDR and TPR for \texttt{auto-tuned}  \QTree. Numbers display the metrics for the pairs $(\mathcal T, \hat{ \mathcal T})$ and numbers in brackets for the pairs $(\mathcal R, \hat{ \mathcal R})$ of their respective reachability graphs.
\label{table:qtreescores}
}
\centering
\fbox{%
\begin{tabular}{c|c|cccc}
 & \multirow{2}{*}{Danube} & \multicolumn{4}{c|}{Colorado}   \\
 &   & Top & Middle & Bottom & Bottom150     \\
\hline
nSHD & 0.18(0.09) & 0.13(0.02) & 0.09(0.06) & 0.45(0.15) & 0.10(0.12)   \\
FPR & 0.01(0.02) & 0.04(0.00) & 0.02(0.04) & 0.05(0.03) & 0.02(0.02)   \\
FDR & 0.20(0.05) & 0.13(0.00) & 0.09(0.10) & 0.50(0.02) & 0.13(0.02)   \\
TPR & 0.80(0.87) & 0.88(0.95) & 0.90(1.00) & 0.50(0.74) & 0.87(0.78)  \\
\end{tabular}}
\end{table}

\enlargethispage{2\baselineskip}

\begin{figure}
  \centering
\begin{tikzpicture}[->,>=stealth',shorten >=1pt,auto,node distance=1.2cm,
                thick,main node/.style={circle,draw,font=\bfseries,minimum size=0.7cm,inner sep=0cm},true edge/.style={Green},stream edge/.style={densely dashed,YellowGreen},wrong edge/.style={ red,decorate,
  decoration={zigzag,amplitude=0.4pt,segment length=1.4mm,pre=lineto,pre length=0pt}},missing edge/.style={loosely dotted}]

  \node[main node] (1)[fill={rgb,1:red,0.3; green,0.6; blue,1}] {1};
  \node[main node] (2)[fill={rgb,1:red,0.3; green,0.6; blue,1}] [right of=1] {2};
  \node[main node] (3)[fill={rgb,1:red,0.3; green,0.6; blue,1}] [right of=2] {3};
  \node[main node] (4)[fill={rgb,1:red,0.3; green,0.6; blue,1}] [right of=3] {4};
  \node[main node] (5)[fill={rgb,1:red,0.3; green,0.6; blue,1}] [right of=4] {5};
  \node[main node] (6)[fill={rgb,1:red,0.3; green,0.6; blue,1}] [right of=5] {6};
  \node[main node] (7)[fill={rgb,1:red,0.3; green,0.6; blue,1}] [right of=6] {7};
  \node[main node] (8)[fill={rgb,1:red,0.3; green,0.6; blue,1}] [right of=7] {8};
  \node[main node] (9)[fill={rgb,1:red,0.3; green,0.6; blue,1}] [right of=8] {9};
  \node[main node] (10)[fill={rgb,1:red,0.3; green,0.6; blue,1}] [right of=9] {10};
  \node[main node] (11)[fill={rgb,1:red,0.3; green,0.6; blue,1}] [right of=10] {11};
  \node[main node] (12)[fill={rgb,1:red,0.3; green,0.6; blue,1}] [right of=11] {12};
  \node[main node] (13)[fill={rgb,1:red,0.3; green,0.6; blue,1}] [below of=2] {13};
  \node[main node] (14)[fill={rgb,1:red,0.3; green,0.6; blue,1}] [right of=13] {14};
  \node[main node] (15)[fill={rgb,1:red,0.3; green,0.6; blue,1}] [right of=14] {15};
  \node[main node] (23)[fill={rgb,1:red,0.3; green,0.6; blue,1}] [right of=15] {23};
  \node[main node] (24)[fill={rgb,1:red,0.3; green,0.6; blue,1}] [right of=23] {24};
  \node[main node] (20)[fill={rgb,1:red,0.3; green,0.6; blue,1}] [below of=8] {20};
  \node[main node] (21)[fill={rgb,1:red,0.3; green,0.6; blue,1}] [right of=20] {21};
  \node[main node] (22)[fill={rgb,1:red,0.3; green,0.6; blue,1}] [right of=21] {22};
  \node[main node] (30)[fill={rgb,1:red,0.3; green,0.6; blue,1}] [below of=14] {30};
  \node[main node] (31)[fill={rgb,1:red,0.3; green,0.6; blue,1}] [right of=30] {31};
  \node[main node] (25)[fill={rgb,1:red,0.3; green,0.6; blue,1}] [right of=31] {25};
  \node[main node] (26)[fill={rgb,1:red,0.3; green,0.6; blue,1}] [right of=25] {26};
  \node[main node] (27)[fill={rgb,1:red,0.3; green,0.6; blue,1}] [right of=26] {27};
  \node[main node] (16)[fill={rgb,1:red,0.3; green,0.6; blue,1}] [below of=25] {16};
  \node[main node] (17)[fill={rgb,1:red,0.3; green,0.6; blue,1}] [right of=16] {17};
  \node[main node] (18)[fill={rgb,1:red,0.3; green,0.6; blue,1}] [right of=17] {18};
  \node[main node] (19)[fill={rgb,1:red,0.3; green,0.6; blue,1}] [right of=18] {19};
  \node[main node] (28)[fill={rgb,1:red,0.3; green,0.6; blue,1}] [below of=16] {28};
  \node[main node] (29)[fill={rgb,1:red,0.3; green,0.6; blue,1}] [right of=28] {29};
  
  \path
    (2) edge [true edge] node {} (1)
    (3) edge [true edge] node {} (2)
    (4) edge [true edge] node {} (3)
    (5) edge [true edge] node {} (4)
    (6) edge [true edge] node {} (5)
    (7) edge [true edge] node {} (6)
    (8) edge [missing edge] node {} (7)
    (9) edge [true edge] node {} (8)
    (10) edge [true edge] node {} (9)
    (11) edge [true edge] node {} (10)
    (12) edge [true edge] node {} (11)
    (13) edge [true edge] node {} (1)
    (14) edge [missing edge] node {} (2)
    (15) edge [missing edge] node {} (14)
    (23) edge [true edge] node {} (4)
    (24) edge [true edge] node {} (23)
    (20) edge [missing edge] node {} (7)
    (21) edge [true edge] node {} (20)
    (22) edge [true edge] node {} (21)
    (30) edge [true edge] node {} (13)
    (31) edge [true edge] node {} (30)
    (25) edge [missing edge] node {} (4)
    (26) edge [true edge] node {} (25)
    (27) edge [missing edge] node {} (26)
    (16) edge [true edge] node {} (15)
    (17) edge [true edge] node {} (16)
    (18) edge [true edge] node {} (17)
    (19) edge [true edge] node {} (18)
    (28) edge [true edge] node {} (31)
    (29) edge [true edge] node {} (28)
    (8) edge [stream edge, bend right] node {} (6)
    (20) edge [stream edge] node {} (6)
    (27) edge [stream edge, bend right] node {} (25)
    (15) edge [stream edge] node {} (2)
    (25) edge [wrong edge] node {} (14)
    (14) edge [wrong edge, bend left] node {} (15)
    (14) edge [wrong edge, bend left] node {} (15)
    (10.8,-2.43) edge [true edge] (11.3,-2.43)
    (10.8,-3.13) edge [stream edge] (11.3,-3.13)
    (10.8,-3.83) edge [wrong edge] (11.3,-3.83)
    (10.8,-4.53) edge [missing edge] (11.3,-4.53)
    ;
    \draw[draw=black] (9.6,-5) rectangle ++(5,3);
    \node at (11.7,-2.4) {$e=j \hspace{0.7cm} i: e \in \hat{\mathcal{T}}\cap \mathcal{T}$};
     \node at (12.1,-3.1) 
    {$e=j \hspace{0.7cm} i: e \in \hat{\mathcal{T}}\cap \mathcal{R} \setminus \mathcal{T}$ };
    \node at (11.7,-3.8) 
    {$e=j \hspace{0.7cm} i: e \in \hat{\mathcal{T}}\setminus\mathcal{R}$};
    \node at (11.7,-4.5) 
    {$e=j \hspace{0.7cm} i:  e \in  \mathcal{T}\setminus \hat{\mathcal{T}}$};

\end{tikzpicture}
\caption{\label{fig:dir-danube} Danube river network, estimated by \texttt{auto-tuned} \QTree\ vs. true. 
Solid (green) edges are correct. 
Dashed (green) edges are not in the tree but in the reachability graph, that is, the causal direction or flow-connection is correct. 
Squiggly (red) edges are spurious (neither in the tree nor in the reachability graph).
Dotted (black) edges are in the true tree, but not in the estimated tree.
\QTree\ outputs a tree with only six wrongly estimated edges, four of them flow-connected and one path skipping a single node (the edge $8\to 6$ skips node $7$). Two edges are spurious. 
}
\end{figure}


\tikzset{
  myfillrgb/.code args={#1}{
    \pgfmathparse{1-0.008*#1)}\edef\rcolor{\pgfmathresult}%
    \pgfmathparse{1-0.006*#1)}\edef\gcolor{\pgfmathresult}%
    \pgfmathparse{1}\edef\bcolor{\pgfmathresult}%
    \definecolor{mycolour}{rgb}{\rcolor, \gcolor, \bcolor}%
    \pgfkeysalso{/tikz/fill=mycolour}
  }
}

\newcommand{\mycolorbar}[8]
{
        \foreach \x [count=\c] in {#3}{ \xdef\numcolo{\c}}
        \pgfmathsetmacro{\pieceheight}{#1/(\numcolo-1)}
        \xdef\lowcolo{}
        \foreach \x [count=\c] in {#3}
        {   \ifthenelse{\c = 1}
            {}
            {   \fill[bottom color=\lowcolo,top color=\x] (#7,{#8+(\c-2)*\pieceheight}) rectangle (#7+#2,{#8+(\c-1)*\pieceheight});
            }
            \xdef\lowcolo{\x}
        }
        \draw (#7,#8) rectangle (#7+#2,#8+#1);
        \pgfmathsetmacro{\secondlabel}{#4+#6}
        \pgfmathsetmacro{\lastlabel}{#5+0.01}
        \pgfkeys{/pgf/number format/.cd,fixed,precision=2}
        \foreach \x in {#4,\secondlabel,...,\lastlabel}
        { \draw (#7+#2,{#8+(\x-#4)/(#5-#4)*#1})[-] -- ++ (0.05,0) node[right] {\pgfmathprintnumber{\x}};
        }
}

 \begin{figure}
\centering
\begin{minipage}{.48\textwidth}
\begin{tikzpicture}[->,>=stealth',shorten >=1pt,auto,node distance=1.5cm,
                thick,main node/.style={circle,draw,font=\small\bfseries,minimum size=0.8cm,inner sep=0cm},true edge/.style={Green},stream edge/.style={densely dashed,YellowGreen},wrong edge/.style={ red,decorate,
  decoration={zigzag,amplitude=0.4pt,segment length=1.4mm,pre=lineto,pre length=0pt}},missing edge/.style={loosely dotted}]

  \node[main node,myfillrgb={60}] (1925) {1925};
  \node[main node,myfillrgb={93}] (7) [right of=1925] {7};
  \node[main node,myfillrgb={55}] (1277) [right of=7] {1277};
  \node[main node,myfillrgb={91}] (3) [right of=1277] {3};
  \node[main node,myfillrgb={92}] (6) [below of=1277] {6};
  \node[main node,myfillrgb={91}] (2) [right of=6] {2};
  \node[main node,myfillrgb={95}] (5) [below of=2] {5};
  \node[main node,myfillrgb={94}] (4) [right of=5] {4};
  \node[main node,myfillrgb={42}] (1666) [below of=5] {1666};
  
  \path
    (7) edge [true edge] node {} (1925)
    (1277) edge [true edge] node {} (7)
    (3) edge [true edge] node {} (1277)
    (6) edge [true edge] node {} (7)
    (5) edge [true edge] node {} (6)
    (4) edge [true edge] node {} (5)
    (1666) edge [true edge] node {} (6)
    (2) edge [missing edge] node {} (1277)
    (2) edge [stream edge] node {} (7)
    ;

\end{tikzpicture}
\centering
\end{minipage}
\begin{minipage}{.48\textwidth}
\centering
\begin{tikzpicture}[->,>=stealth',shorten >=1pt,auto,node distance=1.5cm,
                thick,main node/.style={circle,draw,font=\small\bfseries,minimum size=0.8cm,inner sep=0cm},true edge/.style={Green},stream edge/.style={densely dashed,YellowGreen},wrong edge/.style={ red,decorate,
  decoration={zigzag,amplitude=0.4pt,segment length=1.4mm,pre=lineto,pre length=0pt}},missing edge/.style={loosely dotted}]

  \node[main node,myfillrgb={95}] (10) at (0,1) {10};
  \node[main node,myfillrgb={95}] (9)  at (1.7,1) {9};
  \node[main node,myfillrgb={93}] (8) [right of=9] {8};
  \node[main node,myfillrgb={35}] (2193) [right of=8] {2193};
  \node[main node,myfillrgb={70}] (2435)  at (1.7,-0.3) {2435};
  \node[main node,myfillrgb={65}] (2313) [right of=2435] {2313};
  \node[main node,myfillrgb={68}] (2443) at (1.7,-1.6) {2443};
  \node[main node,myfillrgb={70}] (2399) [right of=2443] {2399};
  \node[main node,myfillrgb={61}] (2498) at (1.7,-2.9) {2498};
  \node[main node,myfillrgb={65}] (2424) [right of=2498] {2424};
  \node[main node,myfillrgb={70}] (2616) at (1.7,-4.2) {2616};
  \node[main node,myfillrgb={54}] (2625) at (1.7,-5.5) {2625};
  
  \path
    (9) edge [true edge] node {} (10)
    (8) edge [true edge] node {} (9)
    (2193) edge [true edge] node {} (8)
    (2435) edge [true edge] node {} (10)
    (2443) edge [true edge] node {} (10)
    (2498) edge [true edge] node {} (10)
    (2616) edge [true edge] node {} (10)
    (2313) edge [true edge] node {} (9)
    (2399) edge [true edge] node {} (9)
    (2424) edge [true edge] node {} (9)
    (2625) edge [missing edge] node {} (10)
    ;
      \node[main node] (2616)[fill={rgb,1:red,0.3; green,0.6; blue,1}] at (1.7,-4.2) {2616};
      \node[main node,myfillrgb={61}] (2498) at (1.7,-2.9) {2498};
    \mycolorbar{7}{0.4}{{rgb,1:red,1; green,1; blue,1}, {rgb,1:red,0.2; green,0.4; blue,1}}{0}{1000}{200}{5.7}{-5.7}
    
    \begin{scope}[on background layer]
        \draw  (2625) edge [wrong edge] node {} (8);
    \end{scope}

\end{tikzpicture}
\end{minipage}
\caption{\label{fig:dir-col-mid} Top (left) and Middle (right) sectors of the Colorado network, estimated by \texttt{auto-tuned} \QTree\ vs. true. Node colors represent the amount of available data  after taking care of missing data. 
Edges are as described in Figure~\ref{fig:dir-danube}. Both estimated networks only contain one single wrongly estimated edge. Top:  one edge wrong but flow-connected; Middle: one edge spurious.}
\end{figure}

\begin{figure}
\centering
\begin{minipage}[b]{\textwidth}
\begin{tikzpicture}[->,>=stealth',shorten >=1pt,auto,node distance=1.11cm,
                thick,main node/.style={circle,draw,font=\scriptsize\bfseries,minimum size=0.75cm,inner sep=0cm},true edge/.style={Green},stream edge/.style={densely dashed,YellowGreen},wrong edge/.style={ red,decorate,
  decoration={zigzag,amplitude=0.4pt,segment length=1.4mm,pre=lineto,pre length=0pt}},missing edge/.style={loosely dotted}]
                
  \node[main node,myfillrgb={67}] (6537) {6537};
  \node[main node,myfillrgb={93}] (24) [right of=6537] {24};
  \node[main node,myfillrgb={52}] (6377) [right of=24] {6377};
  \node[main node,myfillrgb={94}] (23) [right of=6377] {23};
  \node[main node,myfillrgb={94}] (22) [right of=23] {22};
  \node[main node,myfillrgb={82}] (42) [right of=22] {42};
  \node[main node,myfillrgb={15}] (5525) [right of=42] {5525};
  \node[main node,myfillrgb={94}] (21) [right of=5525] {21};
  \node[main node,myfillrgb={14}] (5450) [right of=21] {5450};
  \node[main node,myfillrgb={15}] (5423) [right of=5450] {5423};
  \node[main node,myfillrgb={70}] (38) [right of=5423] {38};
  \node[main node,myfillrgb={82}] (43) [below of=22] {43};
  \node[main node,myfillrgb={53}] (5523) [below of=21] {5523};
  \node[main node,myfillrgb={70}] (39) [right of=5523] {39};
  \node[main node,myfillrgb={14}] (5435) [right of=39] {5435};
  \node[main node,myfillrgb={92}] (19) [right of=5435] {19};
  \node[main node,myfillrgb={61}] (4595) [right of=19] {4595};
  \node[main node,myfillrgb={82}] (44) [below of=43] {44};
  \node[main node,myfillrgb={72}] (40) [below of=39] {40};
  \node[main node,myfillrgb={35}] (41) [below of=40] {41};
  \node[main node,myfillrgb={14}] (5524) [below of=41] {5524};
  
  \path
    (6377) edge [true edge] node {} (24)
    (23) edge [true edge] node {} (6377)
    (22) edge [true edge] node {} (23)
    (43) edge [true edge] node {} (23)
    (44) edge [true edge] node {} (23)
    (5450) edge [true edge] node {} (21)
    (39) edge [true edge] node {} (21)
    (41) edge [true edge] node {} (5523)
    (5524) edge [true edge] node {} (5523)
    (4595) edge [true edge] node {} (19)
    (24) edge [missing edge] node {} (6537)
    (42) edge [missing edge] node {} (22)
    (5525) edge [missing edge] node {} (42)   
    (21) edge [missing edge] node {} (5525)
    (40) edge [missing edge] node {} (21)
    (5523) edge [missing edge] node {} (5525)
    (5435) edge [missing edge] node {} (5450)
    (19) edge [missing edge] node {} (5423)
    (5423) edge [missing edge] node {} (5450)
    (19) edge [missing edge] node {} (5423)
    (38) edge [missing edge] node {} (5423)
    (6537) edge [wrong edge, bend right] node {} (24)
    (5525) edge [stream edge, bend right] node {} (22)
    (21) edge [stream edge, bend right] node {} (42)
    (5423) edge [stream edge, bend right] node {} (21)
    (38) edge [stream edge, bend right] node {} (21)
    (19) edge [stream edge] node {} (21)
    (5523) edge [stream edge] node {} (42)
    (42) edge [wrong edge, bend right] node {} (5525)
    ;
  
    \mycolorbar{5.4}{0.35}{{rgb,1:red,1; green,1; blue,1}, {rgb,1:red,0.2; green,0.4; blue,1}}{0}{1000}{200}{13}{-5.1}

    \begin{scope}[on background layer]
        \draw   (5435) edge [stream edge, bend left] node {} (23);
        \draw     (40) edge [stream edge] node {} (42);
    \end{scope}
    
\end{tikzpicture}
\end{minipage}
\begin{minipage}[b]{\textwidth}
\begin{tikzpicture}[->,>=stealth',shorten >=1pt,auto,node distance=1.37cm,
                thick,main node/.style={circle,draw,font=\small\bfseries,minimum size=0.85cm,inner sep=0cm},true edge/.style={Green},stream edge/.style={densely dashed,YellowGreen},wrong edge/.style={ red,decorate,
  decoration={zigzag,amplitude=0.4pt,segment length=1.4mm,pre=lineto,pre length=0pt}},missing edge/.style={loosely dotted}]
                
  \node[main node,myfillrgb={67}] (6537) {6537};
  \node[main node,myfillrgb={93}] (24) [right of=6537] {24};
  \node[main node,myfillrgb={52}] (6377) [right of=24] {6377};
  \node[main node,myfillrgb={94}] (23) [right of=6377] {23};
  \node[main node,myfillrgb={94}] (22) [right of=23] {22};
  \node[main node,myfillrgb={82}] (42) [right of=22] {42};
  \node[main node,myfillrgb={94}] (21) [right of=42] {21};
  \node[main node,myfillrgb={70}] (39) [right of=21] {39};
  \node[main node,myfillrgb={82}] (43) [below of=22] {43};
  \node[main node,myfillrgb={82}] (44) [below of=43] {44};
  \node[main node,myfillrgb={53}] (5523) [below of=21] {5523};
  \node[main node,myfillrgb={72}] (40) [right of=5523] {40};
  \node[main node,myfillrgb={70}] (38) [below of=40] {38};
  \node[main node,myfillrgb={92}] (19) [below of=38] {19};
  \node[main node,myfillrgb={35}] (41) [below of=19] {41};
  \node[main node,myfillrgb={61}] (4595) [right of=19] {4595};

  \path
    (6377) edge [true edge] node {} (24)
    (23) edge [true edge] node {} (6377)
    (22) edge [true edge] node {} (23)
    (42) edge [true edge] node {} (22)
    (21) edge [true edge] node {} (42)
    (39) edge [true edge] node {} (21)
    (40) edge [true edge] node {} (21)
    (38) edge [true edge] node {} (21)
    (19) edge [true edge] node {} (21)
    (41) edge [true edge] node {} (5523)
    (4595) edge [true edge] node {} (19)
    (43) edge [true edge] node {} (23)
    (44) edge [true edge] node {} (23)
    (5523) edge [missing edge] node {} (42)
    (24) edge [missing edge] node {} (6537)
    (5523) edge [stream edge] node {} (22)
    (6537) edge [wrong edge, bend right] node {} (24)
    ;

    \mycolorbar{6.5}{0.4}{{rgb,1:red,1; green,1; blue,1}, {rgb,1:red,0.2; green,0.4; blue,1}}{0}{1000}{200}{13}{-6.3}

\end{tikzpicture}
\end{minipage}
\caption{\label{fig:dir-col-low-part} Bottom sector of the Colorado network (Bottom and Bottom150), estimated by \texttt{auto-tuned} \QTree\ vs. true. 
Top Figure: Bottom, based on all 21 nodes, \QTree\ outputs a tree with ten wrongly estimated edges, eight of them flow-connected, two spurious edges pointing in the wrong direction. Bottom Figure: Bottom150, based on 16 nodes, after removing nodes with less than 150 observations.
There are only two wrongly estimated edges, one flow-connected, one spurious edge pointing in the wrong direction. Compared to the Bottom sector, this is a significant improvement. 
Node colors represent the amount of available data after taking care of missing data.
Edges are as described in Figure~\ref{fig:dir-danube}.}
\end{figure}
\begin{figure}[t]
  \centering
\begin{tikzpicture}[->,>=stealth',shorten >=1pt,auto,node distance=1.3cm,
                thick,main node/.style={circle,draw,font=\bfseries,minimum size=0.7cm,inner sep=0cm},true edge/.style={Green},stream edge/.style={densely dashed,YellowGreen},wrong edge/.style={ red,decorate,
  decoration={zigzag,amplitude=0.4pt,segment length=1.4mm,pre=lineto,pre length=0pt}},missing edge/.style={loosely dotted}]

  \node[main node] (1)[fill={rgb,1:red,0.3; green,0.6; blue,1}] {1};
  \node[main node] (2)[fill={rgb,1:red,0.3; green,0.6; blue,1}] [right of=1] {2};
  \node[main node] (3)[fill={rgb,1:red,0.3; green,0.6; blue,1}] [right of=2] {3};
  \node[main node] (4)[fill={rgb,1:red,0.3; green,0.6; blue,1}] [right of=3] {4};
  \node[main node] (5)[fill={rgb,1:red,0.3; green,0.6; blue,1}] [right of=4] {5};
  \node[main node] (6)[fill={rgb,1:red,0.3; green,0.6; blue,1}] [right of=5] {6};
  \node[main node] (7)[fill={rgb,1:red,0.3; green,0.6; blue,1}] [right of=6] {7};
  \node[main node] (8)[fill={rgb,1:red,0.3; green,0.6; blue,1}] [right of=7] {8};
  \node[main node] (9)[fill={rgb,1:red,0.3; green,0.6; blue,1}] [right of=8] {9};
  \node[main node] (10)[fill={rgb,1:red,0.3; green,0.6; blue,1}] [right of=9] {10};
  \node[main node] (11)[fill={rgb,1:red,0.3; green,0.6; blue,1}] [right of=10] {11};
  \node[main node] (12)[fill={rgb,1:red,0.3; green,0.6; blue,1}] [right of=11] {12};
  \node[main node] (13)[fill={rgb,1:red,0.3; green,0.6; blue,1}] [below of=2] {13};
  \node[main node] (14)[fill={rgb,1:red,0.3; green,0.6; blue,1}] [right of=13] {14};
  \node[main node] (15)[fill={rgb,1:red,0.3; green,0.6; blue,1}] [right of=14] {15};
  \node[main node] (23)[fill={rgb,1:red,0.3; green,0.6; blue,1}] [right of=15] {23};
  \node[main node] (24)[fill={rgb,1:red,0.3; green,0.6; blue,1}] [right of=23] {24};
  \node[main node] (20)[fill={rgb,1:red,0.3; green,0.6; blue,1}] [below of=8] {20};
  \node[main node] (21)[fill={rgb,1:red,0.3; green,0.6; blue,1}] [right of=20] {21};
  \node[main node] (22)[fill={rgb,1:red,0.3; green,0.6; blue,1}] [right of=21] {22};
  \node[main node] (30)[fill={rgb,1:red,0.3; green,0.6; blue,1}] [below of=14] {30};
  \node[main node] (31)[fill={rgb,1:red,0.3; green,0.6; blue,1}] [right of=30] {31};
  \node[main node] (25)[fill={rgb,1:red,0.3; green,0.6; blue,1}] [right of=31] {25};
  \node[main node] (26)[fill={rgb,1:red,0.3; green,0.6; blue,1}] [right of=25] {26};
  \node[main node] (27)[fill={rgb,1:red,0.3; green,0.6; blue,1}] [right of=26] {27};
  \node[main node] (16)[fill={rgb,1:red,0.3; green,0.6; blue,1}] [below of=25] {16};
  \node[main node] (17)[fill={rgb,1:red,0.3; green,0.6; blue,1}] [right of=16] {17};
  \node[main node] (18)[fill={rgb,1:red,0.3; green,0.6; blue,1}] [right of=17] {18};
  \node[main node] (19)[fill={rgb,1:red,0.3; green,0.6; blue,1}] [right of=18] {19};
  \node[main node] (28)[fill={rgb,1:red,0.3; green,0.6; blue,1}] [below of=16] {28};
  \node[main node] (29)[fill={rgb,1:red,0.3; green,0.6; blue,1}] [right of=28] {29};

  \path
    (2) edge [true edge] node {} (1)
    (3) edge [true edge] node {} (2)
    (4) edge [true edge] node {} (3)
    (5) edge [true edge] node {} (4)
    (6) edge [true edge] node {} (5)
    (7) edge [true edge] node {} (6)
    (8) edge [missing edge] node {} (7)
    (9) edge [true edge] node {} (8)
    (10) edge [true edge] node {} (9)
    (11) edge [true edge] node {} (10)
    (12) edge [true edge] node {} (11)
    (13) edge [true edge] node {} (1)
    (14) edge [true edge] node {} (2)
    (15) edge [true edge] node {} (14)
    (23) edge [true edge] node {} (4)
    (24) edge [true edge] node {} (23)
    (20) edge [missing edge] node {} (7)
    (21) edge [true edge] node {} (20)
    (22) edge [true edge] node {} (21)
    (30) edge [true edge] node {} (13)
    (31) edge [true edge] node {} (30)
    (25) edge [missing edge] node {} (4)
    (26) edge [true edge] node {} (25)
    (27) edge [missing edge] node {} (26)
    (16) edge [true edge] node {} (15)
    (17) edge [true edge] node {} (16)
    (18) edge [true edge] node {} (17)
    (19) edge [true edge] node {} (18)
    (28) edge [true edge] node {} (31)
    (29) edge [true edge] node {} (28)
    (27) edge [stream edge, bend right] node {} (25)
    (20) edge [stream edge] node {} (6)
    (8) edge [stream edge, bend right] node {} (6)
    (25) edge [wrong edge] node {} (14)
    ;
       \matrix (m) at (12.5,-4) [%
      matrix of nodes, row sep=0.1cm, 
    ] {%
      & \QTree     \\
      nSHD & 0.13(0.06)    \\
       FPR & 0.01(0.01)    \\
      FDR & 0.13(0.02)    \\ 
     TPR & 0.87(0.90)    \\\\
   };
\end{tikzpicture}
\caption{\label{fig:dir-danube-0.75} Danube river network, estimated by \QTree\ vs. true for $\al=0.75$.  Compared to Figure~\ref{fig:dir-danube}, the edges $15\to 14$ and $14\to 2$ are here correctly estimated. Also the performance measures at the right bottom of the figure compare favourably to those in the first column of Table~\ref{table:qtreescores}. }
\end{figure}
\begin{figure}
    \centering
  \includegraphics[width=\textwidth] {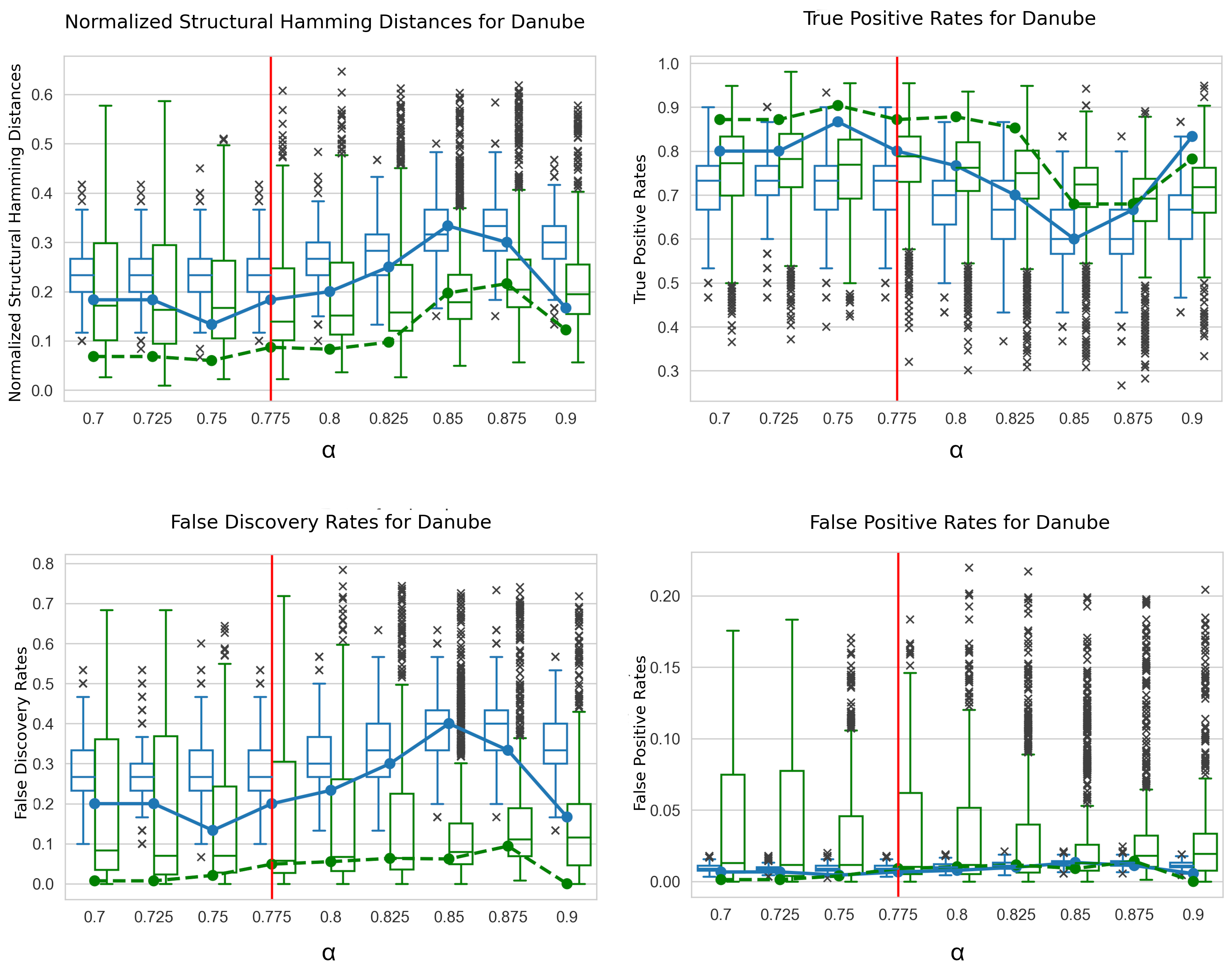}
    \caption{Metrics nSHD, TPR, FDR and FPR for the output $\hat{\mathcal T}_\al$ of the steps of  \texttt{auto-tuned} \QTree\ for varying parameter $\al$ for the Danube network. We subsample $75\%$ of the data 1000 times (Step 3 of Algorithm 2). 
    For each $\al$ we fit \QTree\ on these subsamples to obtain 1000 estimated trees
$\mathsf{T}_\al:=\{ \hat\T_\al^1,\ldots,\hat\T_\al^{1000} \}$ (Step 4 of Algorithm 2).
    The metrics of $(\hat{\mathcal T}_\al^\ell, \mathcal T)$ and $(\hat{\calr}_\al^\ell, \calr)$ are represented in boxplots, one for the tree (blue) and one for the reachability graph (green).
    The blue and green dots present the four metrics for the centroid $E(\mathsf{T}_\al)$ (Step 7 of Algorithm 2) and its reachability graph.
    The lines are interpolations for better visibility, solid blue for the tree and dashed green for the reachability graph.  The chosen $\al^\ast$ is the parameter with the least variablity in $\mathsf{T}_\al$ (Step 9 of Algorithm 2), indicated by a red vertical line.
}\label{fig:parameter_sel_danube}
\end{figure}

\clearpage

\subsection{Comparison to other scores in the literature} \label{sec:comparison}

We compare \QTree\ and \texttt{auto-tuned} \QTree\ with existing algorithms for extremal causal estimation in the literature. 
They are all based on pairwise extreme dependence measures, which we define below.
We then consider their empirical versions as scores and also include the quantile-to-mean gap for comparison:
\begin{enumerate}
\item The empirical {\em quantile-to-mean gap} as in \eqref{eqn:wij.tail.mean} for fixed $(\smallR,\alpha)=(0.05,0.9)$.
We fix $\smallR=0.05$ as we have used this throughout, and $\alpha=0.9$ as an arbitrary parameter.
    \item The {\em causal tail coefficient} $\Gamma_{ij}=\lim_{u\to 1}\E[F_i(X_i)\mid F_j(X_j)>u]$ (\cite{gnecco2019causal}, eq.~(3)). Observe that in \cite{gnecco2019causal}, the algorithm \textsf{EASE} outputs from the estimated score matrix $\Gamma$ an estimated causal order of the set of nodes, not a directed graph. 
    \item The {\em causal score} $S_{ij}^{\text{ext}}$ is based on expected quantile scores (\cite{mhalla2020causal}, eq.~(15)).\footnote{We want to thank Linda Mhalla for helping us to set up the \textsf{CausEv} implementation.}
    The goal of the authors is to discover causality in a directed graph modelled by flow-connection. The scores $S_{ij}^{\text{ext}}$ satisfy $S^{\text{ext}}_{ji}+S^{\text{ext}}_{ij}=1$
and an extreme observation at node $j$ causes an extreme observation at node $i$, whenever $S_{ij}^{\text{ext}}>0.5$. 
The authors propose a bootstrap method to generate 95\% confidence bounds to guarantee that the score is larger than 0.5. All these scores are interpreted as directed edges between nodes.
The algorithm \textsf{CausEV} outputs flow-connections induced by all scores larger than 0.5. 
Thus, their causal edges rather resemble the edges in the reachability graph of the tree.
The paper also treats the example of the Danube data in its Section 5 (see also its Figure 7). 
     \item The {\em tail dependence coefficient} $\chi_{ij}=\lim_{u\to 1} P(F_i(X_i)>u\mid F_j(X_j)>u)$, also called {\em extremal correlation}. It goes back to \cite{Sibuya1960}; see also \cite{colesetal}. 
      We add this dependence measure in our comparison as it is {\em the} classic one, having been used for more than 60 years in multivariate extreme value statistics.
Theoretical properties of the tail dependence coefficient in a max-linear Bayesian network have been investigated in \cite{GKO}. 
It has values in $[0,1]$ and a large value of $\chi_{ij}$ indicates strong extreme dependence. Its empirical estimator takes $u$ large, but finite, and the estimator is based on 10\% of the data.
The paper \cite{EV} uses $\chi$ and related measures on p. 14: the extremal correlation, the extremal variogram, and the combined extremal variogram for estimating undirected trees with Prim's algorithm (\cite{prim}). While the matrix $\chi$ is symmetric, causal inference is possible with Chu–Liu/Edmonds' algorithm; see Remark~\ref{rem:fail}.
\end{enumerate}

The scores discussed in (b)-(d) have not been used to estimate a directed tree, but as they are all pairwise scores, their respective estimated matrices can serve as input for Chu–Liu/Edmonds' algorithm. This gives a fair comparison, as we use then for all scores the knowledge that the true graph is a root-directed spanning tree. 
We keep all tuning parameters used for $\Gamma_{ij}$ and $S^{\text{ext}}_{ij}$ the same as in the respective papers. 

\begin{figure}
    \centering
    \includegraphics[width=0.8\textwidth] {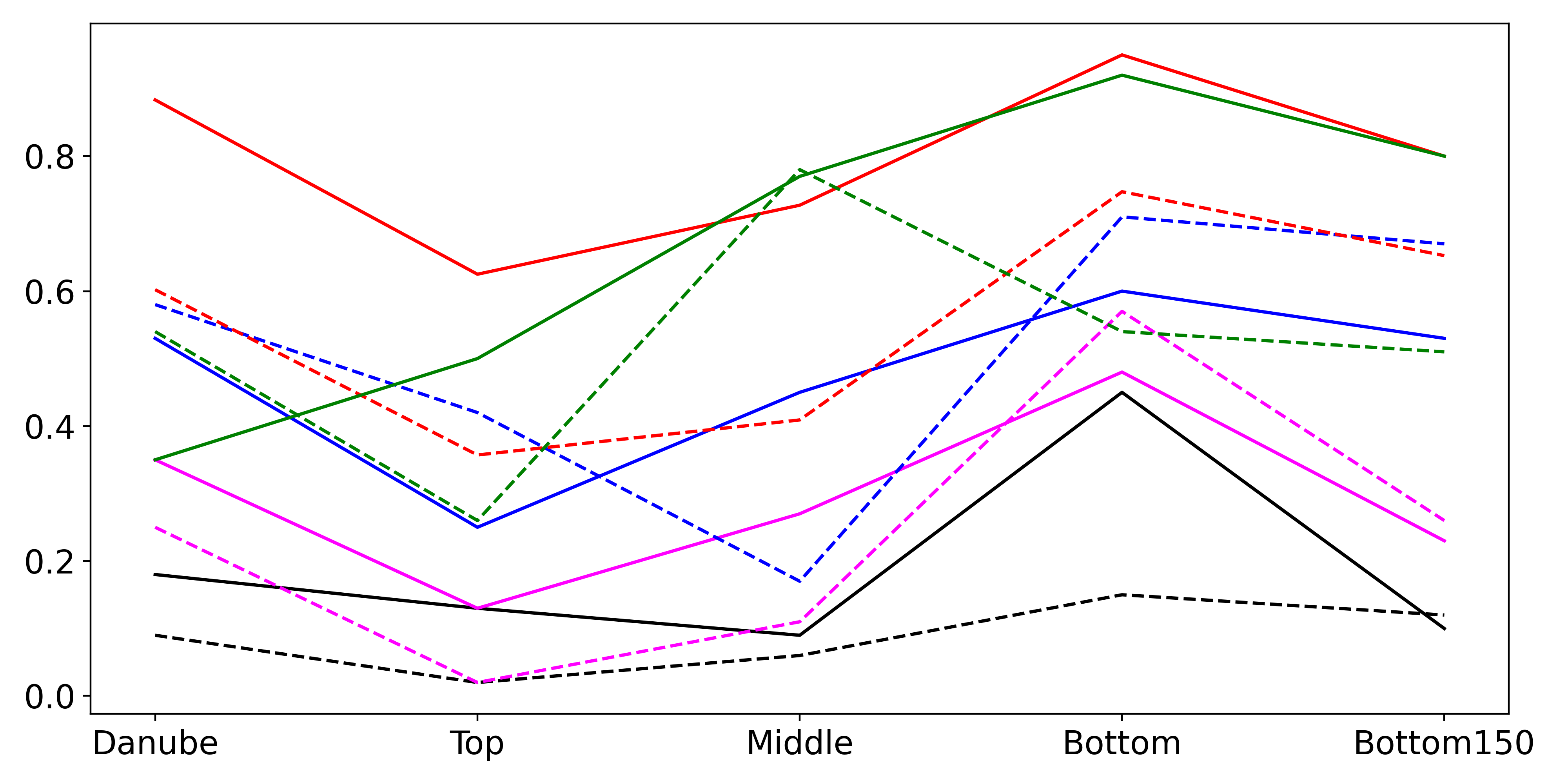}
\caption{Performance metric nSHD for all five data sets on the horizontal axis.
Solid lines display the metrics for the pairs $(\mathcal T, \hat{ \mathcal T})$ and dashed lines for the pairs $(\mathcal R, \hat{ \mathcal R})$ of their respective reachability graphs.
Black lines are used for \texttt{auto-tuned} \QTree, magenta lines for \QTree\ (both based on the score in (a)),
blue  lines for $\Gamma$ as in (b), 
red lines for $S^{\text{ext}}$  as in (c), 
and green lines for $\chi$ as in (d). 
}
\label{fig:qtree_comparison}
\end{figure}


Our comparison is two-fold. 
\begin{enumerate}
    \item[(i)]
We compare the new score of a quantile-to-mean gap with other scores from the literature.
\item[(ii)]
We compare \QTree\ with \texttt{auto-tuned} \QTree\ assessing the benefit of the stabilizing subsampling procedure.
\end{enumerate}

Tables~\ref{table:comparisongnecco}, \ref{table:comparisonmhalla} and \ref{table:comparisonotto} present the metrics nSHD, FPR, FDR and TPR based on the three scores (b), (c), and (d) for each data set previously considered. 
Comparing the metrics with those of \QTree\ in Table~\ref{table:qtreescoresalg1}, we find for the Danube network a comparably weak performance for all three alternative scores.
Among the three scores, $\chi$ performs best, followed by $\Gamma$. 
We visualize our findings from Table~\ref{table:qtreescores} and Tables~\ref{table:qtreescoresalg1}---\ref{table:comparisonotto} for nSHD in Figure~\ref{fig:qtree_comparison}. 

Solid lines present nSHD for the estimated tree vs. true  tree.
 We find that \texttt{auto-tuned} \QTree\ (black line) is uniformly best over all data sets, followed by \QTree. We conclude that the stabilizing subsampling procedure of \texttt{auto-tuned} \QTree\ improves the estimation. 
 Even for the Top sector of the Colorado, where  $\alpha=0.9$ is optimal, the nSHD is still positive for \QTree\, whereas for \texttt{auto-tuned} \QTree, nSHD is equal to zero and the estimated tree is equal to the true one. For all Colorado sectors, $\Gamma$ follows next, $\chi$ is surprisingly successful for the Danube, but not for any of the Colorado sectors. $S^{\text{ext}}$ does not perform well in any tree recovery, but this was also not the goal of \cite{mhalla2020causal}.

Dashed lines present nSHD for the reachability graphs of the estimated trees vs. true. 
Here \texttt{auto-tuned} \QTree\ and \QTree\ give the same answers as for the trees above. The blue dashed line representing $\Gamma$ is only moderately worse than the magenta line for \QTree\ for the Middle and Bottom sectors of the Colorado. The score $\chi$ is worst for the Middle Colorado; for the Danube and the other sectors of the Colorado it performs better than $\Gamma$ and $S^{\text{ext}}$. 

\begin{table}
\caption{Metrics nSHD, FPR, FDR and TPR for \QTree~Algorithm~\ref{alg:qtree} with $\alpha=0.9$. Numbers display the metrics for the pairs $(\mathcal T, \hat{ \mathcal T})$ and numbers in brackets for the pairs $(\mathcal R, \hat{ \mathcal R})$ of their respective reachability graphs.
\label{table:qtreescoresalg1}
}
\centering
\fbox{%
\begin{tabular}{c|c|cccc}
 & \multirow{2}{*}{Danube} & \multicolumn{4}{c|}{Colorado}   \\
 &   & Top & Middle & Bottom & Bottom150     \\
\hline
nSHD & 0.35(0.25) & 0.13(0.02) & 0.27(0.11) & 0.48(0.57) & 0.23(0.26)   \\
FPR  & 0.01(0.02) & 0.02(0.00) & 0.02(0.03) & 0.03(0.15) & 0.02(0.01)   \\
FDR  & 0.40(0.14) & 0.13(0.00) & 0.27(0.19) & 0.60(0.58) & 0.27(0.02)   \\
TPR  & 0.60(0.64) & 0.88(0.95) & 0.72(1.00) & 0.40(0.23) & 0.73(0.59)  \\
\end{tabular}}
\end{table}

\begin{table}
\caption{Metrics nSHD,  FPR, FDR and TPR for the maximum root-directed spanning tree estimated by Chu–Liu/Edmonds' algorithm with score matrix $\Gamma$ as in \cite{gnecco2019causal}, eq.~(8).
\label{table:comparisongnecco}
}
\centering
\fbox{%
\begin{tabular}{c|c|cccc}
 & \multirow{2}{*}{Danube} & \multicolumn{4}{c|}{Colorado}   \\
 &  & Top & Middle & Bottom & Bottom150  \\
\hline
nSHD & 0.53(0.58) & 0.25(0.42) & 0.45(0.17) & 0.60(0.71) & 0.53(0.67)   \\
FPR & 0.02(0.18) & 0.05(0.12) & 0.04(0.03) & 0.04(0.17) & 0.04(0.27)   \\
FDR & 0.73(0.71) & 0.38(0.40) & 0.45(0.21) & 0.70(0.78) & 0.67(0.83)   \\
TPR & 0.27(0.37) & 0.63(0.43) & 0.55(0.88) & 0.30(0.10) & 0.33(0.12)   \\
\end{tabular}}
\end{table}

\begin{table}
\caption{Metrics nSHD, FPR, FDR and TPR  for the maximum root-directed spanning tree estimated by Chu–Liu/Edmonds' algorithm with score matrix $S^{\text{ext}}$ as in \cite{mhalla2020causal}, eq.~(15). 
\label{table:comparisonmhalla}
}
\centering
\fbox{%
\begin{tabular}{c|c|cccc}
 & \multirow{2}{*}{Danube} & \multicolumn{4}{c|}{Colorado}   \\
 &  & Top & Middle & Bottom & Bottom150    \\
\hline
nSHD & 0.88(0.60) & 0.63(0.36) & 0.73(0.41) & 0.95(0.75) & 0.80(0.65)   \\
FPR & 0.03(0.10) & 0.08(0.18) & 0.07(0.12) & 0.05(0.12) & 0.05(0.17)  \\
FDR & 0.90(0.60) & 0.63(0.43) & 0.73(0.52) & 0.95(0.68) & 0.80(0.68)  \\
TPR & 0.10(0.33) & 0.38(0.57) & 0.27(0.76) & 0.05(0.12) & 0.20(0.17) \\
\end{tabular}}
\end{table}

\begin{table}
\caption{Metrics nSHD, FPR, FDR and TPR  for the maximum root-directed spanning tree estimated by Chu–Liu/Edmonds' algorithm with score matrix $\chi$ as in \cite{Sibuya1960} or \cite{colesetal}. 
\label{table:comparisonotto}
}
\centering
\fbox{%
\begin{tabular}{c|c|cccc}
 & \multirow{2}{*}{Danube} & \multicolumn{4}{c|}{Colorado}   \\
 &  & Top & Middle & Bottom & Bottom150    \\
\hline
nSHD & 0.35(0.54) & 0.50(0.26) & 0.77(0.78) & 0.92(0.54) & 0.80(0.51)   \\
FPR  & 0.02(0.21) & 0.06(0.25) & 0.07(0.33) & 0.05(0.27) & 0.06(0.30)   \\
FDR  & 0.47(0.69) & 0.50(0.45) & 0.82(0.93) & 0.95(0.69) & 0.87(0.69)   \\
TPR  & 0.53(0.48) & 0.50(0.76) & 0.18(0.18) & 0.05(0.26) & 0.13(0.28)  \\
\end{tabular}}
\end{table}

We conclude that for the Danube as well as for the various sectors of the Colorado, \texttt{auto-tuned} \QTree\ outperforms uniformly all algorithms without the stabilizing subsampling procedure; see  Figure~\ref{fig:qtree_comparison}. 
Moreover, the quantile-to-mean gap score outperforms the other scores when applying Chu-Liu/Edmonds' algorithm; therefore, we conclude that the quantile-to-mean gap \eqref{eqn:wij.tail.mean} is superior to the other scores on all data sets considered.

\section{A small simulation study}\label{sec:sim}

Theorem~1 ensures strong consistency of the output trees of \QTree, when the sample size $n$ tends to infinity.
In this section we show the quality of \QTree\ through the two performance metrics nSHD and TPR as defined in \eqref{eq:per_metrics} for varying number of observations $n$ and graph sizes $d$ by a small simulation study.

We generate data $\calx$ from a max-linear Bayesian tree as defined in equation \eqref{eqn:main.max.plus} with $|V|=d$ nodes.
For each node $i\in V$, we calculate the sample standard deviation $\hat \sigma_{X_i}$ of $(X^1_i, \ldots, X^n_i)$ and take the sample median $\hat \sigma$ over all nodes in $V$.
We then generate i.i.d. normally distributed noise variables $\eps^t_i$ with mean zero and standard deviation $k\cdot \hat \sigma$ for $i\in V$ and $t=1,\dots,n$. For the {\em noise-to-signal ratio} $k$, we choose $k=30\%$.

We generate root-directed spanning trees as follows.
We first generate a random undirected spanning tree of size $d$ using the graph generators module \texttt{networkX} (Release 2.8.8) in \texttt{Python} (\cite{NetworkX}). 
We then choose the root uniformly at random, which uniquely determines the root-directed spanning tree. 

For the distributions of the innovations $Z_1, \ldots, Z_d$ and the choice of independent edge weights $c_{ij}$, we consider the following three settings:

\begin{enumerate}
    \item[(1)]
    Innovations $Z_1,\dots,Z_d$ are independent Gumbel$(1,0)$ distributed and for every edge, we draw an edge weight $c_{ij}$ from the interval $[\log (0.1),\log(1)]$ uniformly.  We refer to this as the \emph{standard Gumbel setting}.
        \item[(2)]
    Innovations $Z_1,\dots,Z_d$ are independent Gumbel$(1,0)$ distributed and for every edge, we draw an edge weight $c_{ij}$ from the interval $[\log (0.1),\log(0.3)]$ uniformly. We refer to this as the \emph{weak dependence setting}.
    \item[(3)]
    50\% of the innovations $Z_1,\ldots,Z_d$ are Gumbel$(1,0)$ and 50\% are standard normally distributed. For every edge, we draw an edge weight $c_{ij}$ from the interval $[\log (0.1),\log(1)]$ uniformly. We refer to this as the \emph{mixed distribution setting}.
\end{enumerate}

For the score, we take the quantile-to-mean gap as in \eqref{eqn:wij.tail.mean} (normalization by $n_{ij}$ is not needed in this simulation setting) given by  
$$w_{ij}(\smallR) :=   \left(\E(\mathcal{X}_{ij}(\al))- Q_{\chi_{ij(\al)}}(\smallR)\right)^2
$$
and apply the \QTree~Algorithm~\ref{alg:qtree} with parameters $\underline{r}=0.05$ and $\alpha=0$;
we remark that a sensitivity analysis has shown that altering $\underline{r}$ influences the results only insignificantly. 

We use graph sizes $d = 10, 30, 50, 100$ and $100$ repetitions. 
For each repetition, we calculate nSHD and TPR,
and then take the mean over all 100 repetitions. 

Since \QTree\ performs so well not only on simple data like those from the Danube network, but also on all sectors of the Colorado network, we guess that it is fairly robust towards the strength of dependence, given by the $c_{ij}$ and even different node distributions.
The weak dependence setting (2) should manifest whether \QTree\ is also able to recover the underlying network if the dependence given by the weights $c_{ij}$ is substantially smaller.
We want to quantify robustness towards node distributions with the mixed distribution setting (3). 

Figures~\ref{fig:SHD} and~\ref{fig:TPR} depict the mean nSHD and TPR for the standard Gumbel setting (1) and all four graph sizes. 
Both metrics quickly tend to zero, respectively one, as the sample size $n$ increases. Moreover, comparing the four subfigures for a fixed sample size $n$, the metrics perform only slightly worse for increasing graph size $d$.  

Figures~\ref{fig:SHD-smalldep} and~\ref{fig:TPR-smalldep} in Section~\ref{sec:simSupp} of the Supplementary Material  depict the mean nSHD and TPR for the weak dependence setting (2) and all four graph sizes. 
Again, both metrics quickly tend to zero, respectively one. 
In comparison to the previous setting (1), the performance is slightly worse. 

Figures~\ref{fig:SHD-mixed} and~\ref{fig:TPR-mixed} depict the mean nSHD and TPR for the mixed distribution setting (3) and all four graph sizes. 
Despite the different distributions of the innovations, both metrics quickly tend to zero, respectively one. The performance compared to settings (1) and (2) is expectedly worse, however, less than perhaps could be expected.

The decrease in performance for increasing graph size $d$ is presented in Figure~\ref{fig:min10}, where we plot the minimum number $n$ of data needed to reach a mean nSHD of $10\%$. 
The first observation is that larger networks need a larger sample size to reach a lower bound of $10\%$. 
Since larger networks have more opportunities for a wrongly estimated causal influence, this is in line with what we expect. 
Moreover, the standard Gumbel setting (1) reaches the goal much faster than both other settings. Although the weights $c_{ij}$ of setting (2) are in general much smaller than the weights of setting (1), only for a very large graph, substantially more data are needed for the nSHD to fall below 10\%.
This implies that the smaller dependence impacts the estimation for a small graph only moderately, but for a larger graph more data are required.
The mixed distribution setting (3), however, requires for increasing graph size substantially more data.
This is also in line with our expectation as this scenario makes the discrimination between signal and noise more difficult. 

To summarize the results of our simulation study, \QTree\ is sensitive to weaker dependence, but much more sensitive to the tail behavior of innovations/noise distributions.


\begin{figure}[h]
 \centering
 \includegraphics[width=12cm] {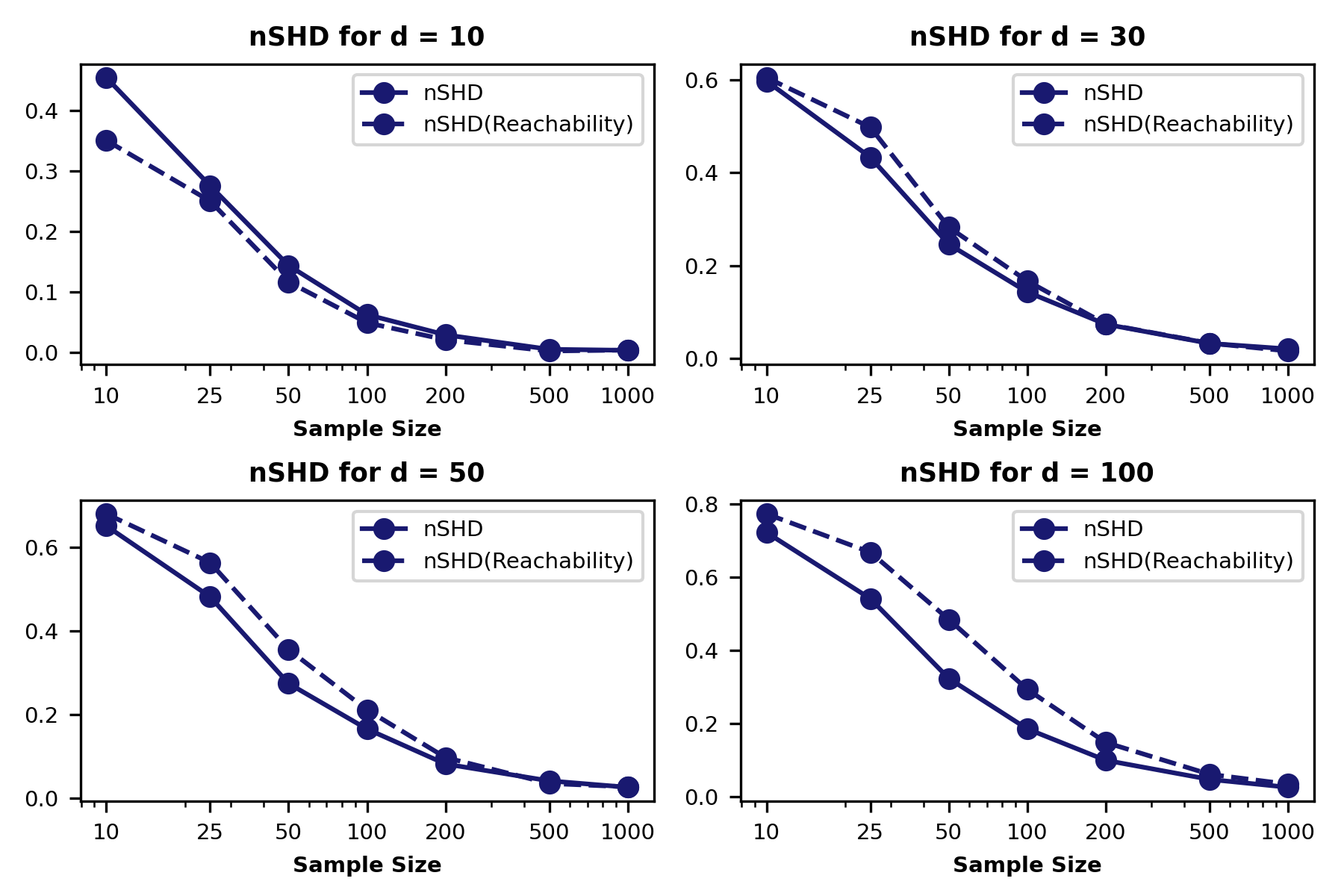}
        \caption[]
        {\small Mean nSHD for the standard Gumbel setting (1) and graph sizes $d=10$ (top left), $d=30$ (top right), $d=50$ (bottom left), $d=100$ (bottom right) and noise-to-signal ratio $k=30\%$. 
        Dots on solid lines display the metrics for the pairs $(\mathcal T, \hat{ \mathcal T})$ and on dashed lines for the pairs $(\mathcal R, \hat{ \mathcal R})$ of their respective reachability graphs.
        For all graph sizes, the nSHD quickly converges to 0 as $n$ increases. Increasing the graph size decreases the metric only moderately.}\label{fig:SHD}
    \end{figure}
    
          \begin{figure}[t]
 \centering
 \includegraphics[width=12cm] {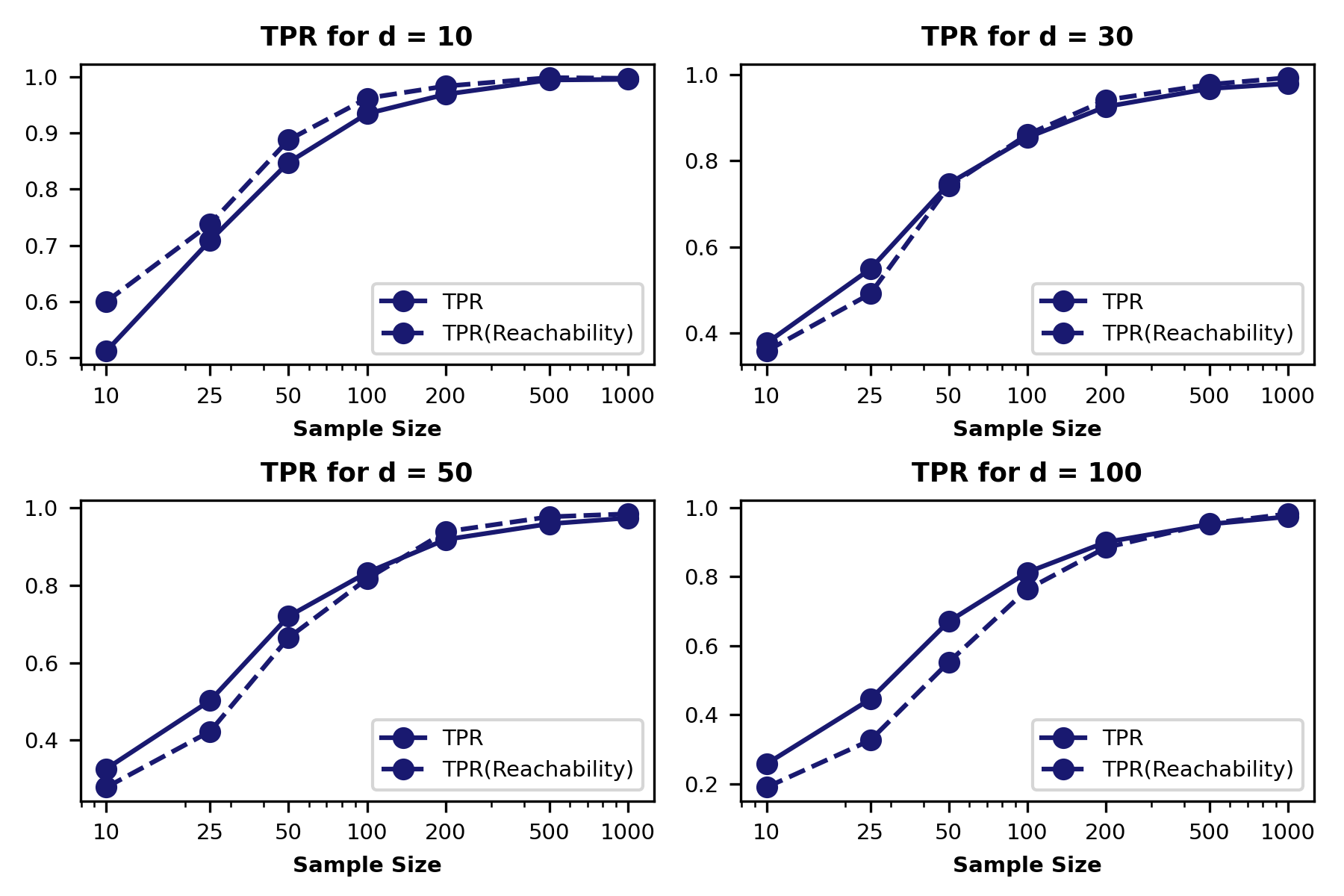}
        \caption[]
        {\small Mean TPR for the standard Gumbel setting (1) and graph sizes $d=10$ (top left), $d=30$ (top right), $d=50$ (bottom left), $d=100$ (bottom right) and noise-to-signal ratio $k=30\%$. 
        Dots on solid lines display the metrics for the pairs $(\mathcal T, \hat{ \mathcal T})$ and on dashed lines for the pairs $(\mathcal R, \hat{ \mathcal R})$ of their respective reachability graphs.
        For all graph sizes, the TPR quickly converges to 1 as $n$ increases. Increasing the graph size decreases the metric only moderately.} \label{fig:TPR}
    \end{figure}

           \begin{figure}
    \centering
 \includegraphics[width=0.6\textwidth] {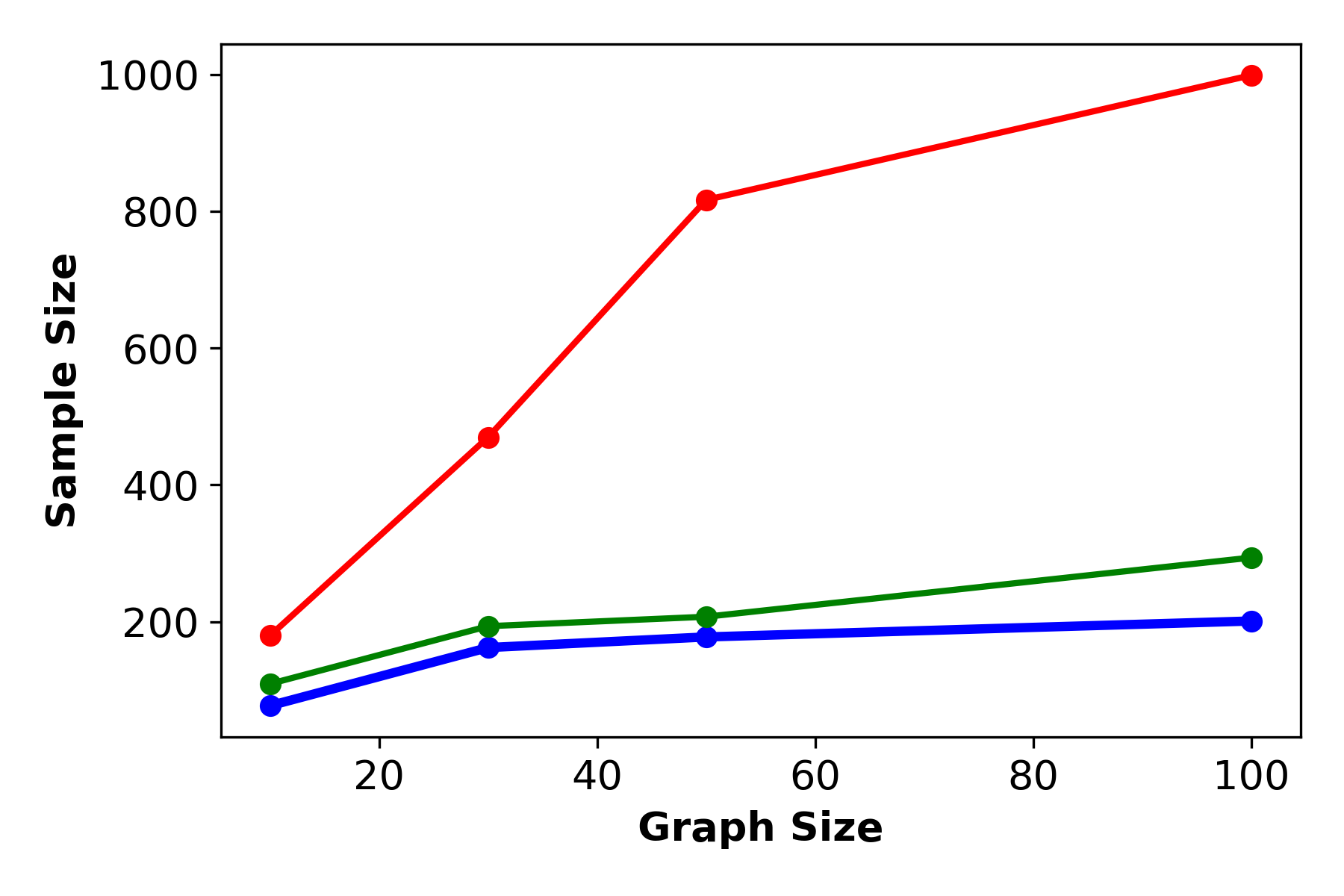}
    \caption{Minimum number of observations needed for the mean nSHD
     to fall below $10\%$: the blue line is for the standard Gumbel setting (1), the green line for the weak dependence setting (2), and the red line for the mixed distribution setting (3). For the standard Gumbel setting (1) and its weak dependence version (2), increasing the graph size $d$, the number of observations needs only a moderate increase to reach the same quality in performance. 
    The mixed distribution setting (2) requires for increasing graph size substantially more data to reach the same quality of performance.}\label{fig:min10} 
\end{figure}

\section{Summary}\label{sec:summary}

In this paper, we proposed \texttt{auto-tuned} \QTree, a new algorithmic solution to the Extremal River Problem---a benchmark problem for causal inference in extremes--- combining the benefits of a new score matrix as input to Chu-Liu/Edmonds' algorithm with a stabilizing subsampling procedure. We also presented three new data sets of the Lower Colorado network for the Extremal River Problem, which are more challenging than the by now classic Upper Danube network data due to a large fraction of missing data and close spatial proximity between nodes. Across all four data sets, \texttt{auto-tuned} \QTree\ performed very well. Our plug-and-play \texttt{Python} implementation in \cite{ngoc_alg} can fit \QTree\ on ten thousand observations in the range of 10 to 30 nodes on a personal laptop within half an hour. 
We proved that for a max-linear Bayesian network with Gumbel-Gaussian distributions for innovations and noise, the tree outputs of \QTree\ are strongly consistent as the number of observations tends to infinity.
Open research directions include (i) generalizations to learning directed acyclic graphs, (ii) better subsampling procedures with theoretical guarantees, and (iii) have the algorithm output a distribution over possible root-directed trees instead of a single best tree.

\section*{Acknowledgements} 
We thank the Lower Colorado River Authority for providing the original data. 
We are also grateful to Sebastian Engelke for providing Figure~\ref{fig:danube-true} of the Upper Danube Basin as well as the declustered data, now available at \cite{RgraphicalExtremes}. We also thank two unknown referees, whose comments, criticism and suggestions improved our paper considerably.
Ngoc Tran 
gratefully acknowledges support by the Hausdorff Center for Mathematics and the University of Bonn, Germany. 
\FloatBarrier

\setlength{\textfloatsep}{5pt}
\setlength{\abovecaptionskip}{2pt}

\appendix

\renewcommand{\thesection}{S\arabic{section}}
\renewcommand{\theequation}{S\arabic{equation}}
\renewcommand{\thefigure}{S\arabic{figure}}
\renewcommand{\thelemma}{S\arabic{lemma}}
\renewcommand{\thetheorem}{S\arabic{theorem}}
\renewcommand{\thecorollary}{S\arabic{corollary}}
\renewcommand{\theproposition}{S\arabic{proposition}}
\setcounter{equation}{0}
\setcounter{figure}{0}
\setcounter{theorem}{0}
\setcounter{proposition}{0}
\setcounter{lemma}{0}
\setcounter{corollary}{0}
\makeatletter

\noindent
{\LARGE\textbf{Supplementary Material}}
\vspace*{0.6cm}

In Section~\ref{sec:Sx} we prove Lemma~\ref{lem:E.T} of the Paper, Section~\ref{sec:s1} gives proofs of the complexity of Algorithms~\ref{alg:qtree} and \ref{alg:qtree.automated}.
In Section~\ref{sec:s3} we prove the Consistency Theorem~\ref{thm:main} of the Paper for the two scores, the lower quantile gap and the quantile-to-mean gap. Section~\ref{sec:s4} provides supplemental figures for Section~\ref{sec:comparison}, and Section~\ref{sec:simSupp} for Section~\ref{sec:sim}.

\section{Proof of Lemma~\ref{lem:E.T} of the Paper}\label{sec:Sx}

We state the lemma again and give a proof.

\begin{lemma}
Let $V$ be a set of nodes and $\mathsf{T} = \{\T^1,\dots,\T^m\}$ a collection of root-directed spanning trees on $V$. 
Define the {\em stability score matrix} $S:=S(\mathsf{T}) \in \R_{\geq 0}^{V \times V}$ by
\begin{equation}\label{eqn:W.T}
s_{ij} := S(\mathsf{T})_{ij} := \#\{\T \in \mathsf{T}: j \to i \in \T\}. 
\end{equation}
Suppose that the maximum root-directed spanning tree $\T_{\max}$ of the graph on $V$ with score matrix $S(\mathsf{T})$ is unique. Then $E(\mathsf{T}) = \T_{\max}$.
\end{lemma}

\begin{proof} 
Identify any root-directed tree $\T$ with the matrix $T=(T_{uv}) \in \{0,1\}^{|V| \times |V|}$. 
Write $\mathbf{1} = (\mathbf{1}_{uv}) $ for the all-one matrix of the same dimension. Let $\T'\in\Psi$ be an arbitrary root-directed spanning tree on $V$. 
Our goal is to show that
$$  \sum_{i=1}^m{\rm nSHD}(\T',\T^i) \geq  \sum_{i=1}^m{\rm nSHD}(\T_{\max},\T^i), $$
which would establish that $\T_{\max} = E(\mathsf{T})$ by eq. \eqref{eqn:E.T.argmin} of the Paper.
 Indeed, 
\begin{align*}
&  \sum_{i=1}^m{\rm nSHD}(\T',\T^i) = \sum_{i=1}^m\sum_{u,v \in V: u \neq v} \mathbf{1}\{T'_{uv} \neq T^i_{uv}\} = \sum_{u,v \in V: u \neq v}\sum_{i=1}^m \mathbf{1}\{T'_{uv} \neq T^i_{uv}\} \\
 =& \sum_{u,v \in V: u \neq v}\left(s_{uv}\mathbf{1}_{u \to v \notin \T'} + (m-s_{uv})\mathbf{1}_{u \to v \in\T'}\right) \\
 =& -2\langle S,T' \rangle +  \langle S,\mathbf{1} \rangle + \langle m\mathbf{1}, T' \rangle \quad \mbox{ where } \langle \cdot,\cdot \rangle \mbox{ denotes the Frobenius inner product} \\
 =& -2\langle S,T' \rangle +  \langle S,\mathbf{1} \rangle + m(d-1) \quad \mbox{ since $\T'$ as root-directed spanning tree has $d-1$ edges } \\
 \geq& -2\langle S,T_{\max} \rangle +  \langle S,\mathbf{1} \rangle + m(d-1) \rangle \quad \mbox{ by definition of } \T_{\max} \\
 =& -2\langle S,T_{\max} \rangle +  \langle S,\mathbf{1} \rangle + \langle m\mathbf{1}, T_{\max} \rangle \rangle \quad \mbox{ since $\T_{\max}$ is a spanning tree on $V$} \\
 =& \sum_{i=1}^m{\rm nSHD}(\T_{\max},\T^i). 
 \end{align*}
\end{proof}

\section{Proof of the Complexity of \texorpdfstring{\QTree}{QTree}}\label{sec:s1}

We work with the solution of the max-linear Bayesian network on a tree $\calt=(V,E)$ defined in eq. \eqref{eqn:main.max.plus} of the Paper; see e.g.
\citep[§3]{baccelli1992synchronization} or \citep[Theorem~2.2]{GK1}.
Let $C^*=(c^\ast_{ij})$ be the matrix of longest paths, also known as the {\em Kleene star of $C=(c_{ij})$}. 
Then 
\begin{equation}\label{eqn:sol}
    X_i = \bigvee_{j:j \rightsquigarrow i \in \mathcal{T}} (c^\ast_{ij}+ Z_j),\quad c^\ast_{ij}, Z_{ij}\in\R,\quad i\in V.
\end{equation}
If there is no path $j\rightsquigarrow i$, then, by definition, $c^\ast_{ij}:=-\infty$.

Lemma \ref{lem:noise.free} concerns the noise-free case, Lemma \ref{lem:qtree.complexity} gives the complexity of Algorithm~1 and Lemma~\ref{lem:2} that of Algorithm~2. 

\begin{lemma}\label{lem:noise.free}
Let $\mathcal{X} = \{x^1,\dots,x^n\}$ be i.i.d observations from the max-linear model given by \eqref{eqn:sol}, 
\emph{not} corrupted with noise. Assume that the $Z_i$ are independent and have continuous distributions.  Define 
\begin{equation}\label{eqn:c.ast.min}
\hat{c}_{ij} = \min_{x \in \mathcal{X}}(x_i-x_j).
\end{equation}
Suppose that for each edge $j \to i$ such that $c_{ij} > -\infty$, there exist at least \emph{two} observations $x \in \mathcal{X}$ where $j$ drives $i$. Then $C^\ast$ can be uniquely recovered from $\hat{C}$ since
\begin{equation}\label{eqn:c.min.twice}
 \hat{c}_{ij} = c^\ast_{ij} \iff \mbox{ min in (\ref{eqn:c.ast.min}) is achieved at least twice}.
 \end{equation}
 In particular, $C^\ast$ can be computed in time $O(|V|^2n)$. 
If all nodes are equally likely to be candidate parents,
 then the matrix $C^\ast$ is recovered exactly for $n = O(|V|(\log(|V|))^2)$.
\end{lemma}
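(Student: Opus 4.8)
The plan is to split the claim into three parts: (i) the characterisation \eqref{eqn:c.min.twice} of which entries of $\hat C$ equal the true $c^\ast_{ij}$; (ii) the reconstruction of $C^\ast$ (equivalently of the tree $\calt$ and its edge weights) from the recovered entries; and (iii) the two running-time / sample-complexity bounds. For part (i), fix a pair $j \to i$ with $c^\ast_{ij} > -\infty$, i.e. $j \rightsquigarrow i$ in $\calt$. From the explicit solution \eqref{eqn:sol} we have, for every observation $x$, that $x_i \ge c^\ast_{ij} + x_j$ with equality exactly when the max defining $x_i$ is attained at the term indexed by $j$; I would say in this case that ``$j$ causes $i$ in $x$''. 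Hence $x_i - x_j \ge c^\ast_{ij}$ always, so $\hat c_{ij} \ge c^\ast_{ij}$, with $\hat c_{ij} = c^\ast_{ij}$ iff $j$ causes $i$ in at least one observation; and since by hypothesis this happens at least twice whenever $c_{ij}>-\infty$ (and, by the same argument applied along the directed path $j \rightsquigarrow i$, also whenever merely $c^\ast_{ij} > -\infty$), the minimum in \eqref{eqn:c.ast.min} is then attained at least twice. Conversely, if $j \not\rightsquigarrow i$, I would argue that $x_i - x_j$ has a continuous distribution (it is a function of the independent, continuously distributed $Z_k$'s whose ``$i$-part'' and ``$j$-part'' do not share the term $Z_j$ in a way that forces a deterministic linear relation — more precisely, $x_i - x_j$ is not almost surely equal to its infimum), so with probability one the empirical minimum over $n$ i.i.d.\ draws is attained exactly once. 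This gives the ``if and only if'' in \eqref{eqn:c.min.twice}, and in particular the set of pairs with $c^\ast_{ij} > -\infty$ is identified, together with the correct values $c^\ast_{ij}$ on that set.

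For part (ii), having recovered $C^\ast$ on its support, the reachability relation $j \rightsquigarrow i$ is known; since $\calt$ is a root-directed tree, its edge set is exactly the set of covering pairs of this reachability partial order (equivalently, $j \to i$ is an edge iff $j \rightsquigarrow i$ and there is no $k \notin \{i,j\}$ with $j \rightsquigarrow k \rightsquigarrow i$), and the edge weights are then $c_{ij} = c^\ast_{ij}$ on these covering pairs. So $\calt$, and hence $C$ and $C^\ast$, are uniquely determined by $\hat C$. For the $O(|V|^2 n)$ running time: computing all $\hat c_{ij}$ and checking for each whether the minimum is attained twice costs $O(n)$ per ordered pair and there are $O(|V|^2)$ pairs; extracting the covering relation and weights from $C^\ast$ is $O(|V|^2)$ (or faster with a transitive-reduction routine). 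This dominates, giving $O(|V|^2 n)$.

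For part (iii), the sample-complexity statement, I would model the ``which parent attains the max'' choices as the only source of randomness that matters for recovery: under the stated assumption, at each node $k$ with parent set $\mathrm{pa}(k)$ of size $p_k$, each parent is the maximiser with probability $1/p_k$, independently across observations (and, for the purposes of this bound, treated as independent across nodes). An edge $j \to i$ of $\calt$ is correctly recovered once the event ``$j$ causes $i$'' occurs at least twice among the $n$ observations, and propagating a cause along a directed path $j \rightsquigarrow i$ of length $\ell$ has probability $\prod 1/p_{k} \ge 1/\Delta^{\ell}$ where $\Delta$ is the maximum in-degree; the worst case is a single long path, where each of the $|V|-1$ edges must be ``hit'' twice. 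By a union bound over the $O(|V|)$ edges together with a Chernoff/second-moment estimate for a Binomial$(n,q)$ variable with $q \ge 1/\mathrm{poly}(|V|)$ being $\ge 2$, taking $n = c\,|V|(\log |V|)^2$ for a suitable constant $c$ makes the failure probability $o(1)$ (in fact summable, giving almost-sure recovery eventually); this is the $O(|V|(\log|V|)^2)$ bound. The main obstacle is part (i)'s converse direction: one must rule out that $x_i - x_j$ has an atom at its essential infimum when $j \not\rightsquigarrow i$, which requires a careful look at the structure of \eqref{eqn:sol} — specifically that for non-comparable $i,j$ the two maxima share no common ``forced'' term, so no continuous-distribution argument degenerates — and this is exactly where the ``atom-free / continuous innovations'' hypothesis and the tree structure are used.
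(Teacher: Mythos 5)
Your proposal is correct and follows the same overall structure as the paper's proof; the main difference is that the paper does not prove the equivalence \eqref{eqn:c.min.twice} at all but delegates it to Proposition~1 and Lemma~1 of the cited reference (GKL), whereas you supply a direct argument, and for the sample-complexity bound the paper phrases the argument as a coupon-collector problem (each node must collect two copies of each of its parent ``coupons'', with a union bound over nodes contributing the extra $\log|V|$ factor), while you reach the same $O(|V|(\log|V|)^2)$ bound via a Chernoff estimate for a Binomial$(n,q)$ with $q\geq 1/O(|V|)$ plus a union bound over edges. Two small cautions on the part you reprove. First, your claim that the minimum is attained twice for \emph{every} pair with $c^\ast_{ij}>-\infty$ does not follow from the edge hypothesis alone: for a genuine path $j\rightsquigarrow k\rightsquigarrow i$ one needs two observations in which the \emph{entire} path is active simultaneously, and the hypothesis only guarantees two activations per edge separately; your part (ii), which rebuilds $C^\ast$ from the covering pairs by transitivity, is what rescues the recovery claim. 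Second, in the converse direction for incomparable $i\not\sim j$ the distribution of $X_i-X_j$ is \emph{not} atom-free: a common ancestor $k$ of $i$ and $j$ that attains both maxima produces an atom at $c^\ast_{ik}-c^\ast_{jk}$; this atom lies in the interior of the support rather than at its left endpoint, so the empirical minimum is still a.s.\ attained once only asymptotically, and for finite $n$ there is a positive probability that the atom is sampled twice and happens to be the sample minimum. Neither point invalidates your argument as an asymptotic statement, and the paper's own statement shares the same looseness, but your phrase ``the two maxima share no common forced term'' is exactly where this needs to be said carefully.
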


\begin{proof}
Since a root-directed spanning tree has at most one path between any pair of nodes $(j,i)$ we have for an edge $j\to i$ that $c^*_{ij}= c_{ij}$. 
Furthermore, as indicated at the beginning of Section~\ref{sec:intuition} in the Paper,
if for an observation $x$ the value at $j$ drives that at $i$, then $x_i = c^*_{ij} + x_j$.  If $j$ does not drive $i$, then $x_i > c^*_{ij} + x_j$. 
Rearranging motivates the estimator \eqref{eqn:c.ast.min} with properties as stated in \citep[Proposition~1]{GKL}.
Equation \eqref{eqn:c.min.twice} follows from \citep[Lemma~1]{GKL}.

Now we prove the complexity claim. Since there are $O(|V|^2)$ many edges, and for each edge we need $O(n)$ operations to compute the minimum in \eqref{eqn:c.ast.min}, the complexity is $O(|V|^2n)$. The number of observations needed, so that each edge is seen at least twice, is a variant of coupon-collecting \citep{boneh1997coupon,boneh1996general}, where each node must collect two coupons (parents) among its set of parents. Since the nodes are collecting the coupons simultaneously, by the union bound, the number of observations needed is at most $\log(|V|)$ times the number of observations needed for the node with highest degree to collect all of its coupons, which in turn is $O(|V|\log(|V|))$. 
\end{proof}

The following lemmas allow for noisy observations as detailed in (4).

\begin{lemma}\label{lem:qtree.complexity}
The \QTree\ Algorithm~1 runs in time $O(|V|^2n)$.
\end{lemma}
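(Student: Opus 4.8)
The plan is to account separately for the two phases of Algorithm~\ref{alg:qtree}: the double loop that fills in the score matrix $W = (w_{ij}(\smallR))$, and the single application of Chu--Liu/Edmonds' algorithm that extracts the optimal root-directed spanning tree.

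First I would bound the cost of building $W$. The cut-off $Q_{\mathcal{X}_j}(\al)$ in \eqref{eqn:F.q} depends only on the coordinate $j$, so only $|V|$ such thresholds are ever needed; computing all of them by linear-time order-statistic selection costs $O(|V|n)$, done once at the start. Fixing an ordered pair $(i,j)$ with $i \neq j$, a single pass over the $n$ observations produces the set $\mathcal{X}_{ij}(\al)$ (retaining the values $x_i - x_j$ with $x_j > Q_{\mathcal{X}_j}(\al)$) in $O(n)$ time, and also yields $n_{ij} = |\mathcal{X}_{ij}(\al)| \le n$. The empirical mean $\E(\mathcal{X}_{ij}(\al))$ is an $O(n_{ij})$ summation, the empirical quantile $Q_{\mathcal{X}_{ij}(\al)}(\smallR)$ is an $O(n_{ij})$ selection, and \eqref{eqn:wij.tail.mean} is then an $O(1)$ arithmetic expression. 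Hence each of the $O(|V|^2)$ entries of $W$ costs $O(n)$, and the whole loop costs $O(|V|^2 n)$. (Using sorting in place of selection would insert a spurious logarithmic factor; this is the one place where linear-time selection is needed to attain the stated bound exactly.)

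Second I would bound the tree-extraction step. The graph $(V,\G)$ is complete, with $O(|V|^2)$ edges. To handle the variable root I would adjoin a dummy node joined to every node of $V$ by an edge of a suitably large common weight and extract a minimum arborescence rooted at the dummy, which automatically picks out the best real root (equivalently, one runs the algorithm once per root as in Remark~\ref{remark1}). With the Gabow--Galil--Spencer--Tarjan implementation (cf.\ \cite{gabow1986efficient}) this runs in $O(|E| + |V|\log|V|) = O(|V|^2)$, and reversing edge directions to pass between arborescences and root-directed trees adds only $O(|V|)$. Since $O(|V|^2)$ is trivially $O(|V|^2 n)$, the overall running time is $O(|V|^2 n)$, dominated by the score-matrix phase.

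For a lemma of this kind there is no genuine obstacle, only two points requiring care: making sure the per-pair work is honestly linear in $n$ rather than $n\log n$ (which forces linear-time selection for the two quantile evaluations), and disposing of the variable-root version of Chu--Liu/Edmonds so that it does not dominate (for which either the dummy-root reduction or the $O(|V|^2)$-per-root implementation suffices). Everything else is a routine operation count.
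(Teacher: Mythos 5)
Your proposal is correct and follows essentially the same route as the paper: $O(n)$ per pair for the quantile and mean computations via linear-time selection (the paper cites \cite{musser1997introspective} for this), giving $O(|V|^2 n)$ for the score matrix, plus $O(|V|^2)$ for Chu--Liu/Edmonds via \cite{gabow1986efficient}. Your treatment of the variable-root issue (dummy-root reduction versus one run per root) is in fact slightly more careful than the paper's, which simply asserts the $O(|V|^2)$ bound for the tree-extraction step.
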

\begin{proof}
For each pair $i,j \in V, i \neq j$, to estimate $w_{ij}$, one needs to compute the $\alpha$-quantile of $\mathcal{X}_j$, the $\smallR$-quantile and the mean of $\mathcal{X}_{ij}(\al)$. Since $\al$ and $\smallR$ are fixed in advance, the empirical quantiles can be computed in time $O(n)$, see \cite{musser1997introspective}. As there are $O(|V|^2)$ pairs, computing $W = (w_{ij})$ takes $O(|V|^2n)$. Chu–Liu/Edmonds' algorithm runs on the complete bidirected graph supported by $W$, and thus takes $O(|V|^2)$, see \cite{gabow1986efficient}. So the complexity of \QTree\ is $O(|V|^2n + |V|^2) = O(|V|^2n)$. 
\end{proof}

The quadratic dependence on $|V|$ and linear dependence on $n$ in Lemma~\ref{lem:qtree.complexity} is optimal, since it takes $O(|V|^2n)$ just to compute pairwise statistics such as the concentration measures in (7) or (8) for every pair of nodes. Similarly, the runtime of Algorithm~2 (\texttt{auto-tuned}  \QTree) also has optimal runtime, which scales linearly with the number of repetitions $m$ and the size of the parameter grid $|\Theta|$. 

\begin{lemma} \label{lem:2}
The \texttt{auto-tuned} \QTree~Algorithm~2 has complexity $O(|V|^2nm|\Theta|)$. 
\end{lemma}
\begin{proof}
For each pair $(\smallR,\al) \in \Theta$, step 3 takes $O(mn)$ and step 4 takes $O(|V|^2nm)$ by Lemma~\ref{lem:qtree.complexity}. Step 6 takes $O(m|V|^2)$, and step 7 takes $O(|V|^2)$ by Chu–Liu/Edmonds' Algorithm. Computing the reachability graph for a root-directed tree on $|V|$ nodes takes $O(|V|)$, so step 8 takes $O(m|V|^2)$, since for each of the $m$ trees in $\mathsf{T}$ we need to compute its structural Hamming distance from the estimated tree $E(\mathsf{T})$. So, for each pair $(\smallR,\al) \in \Theta$, steps 3 to 8 take $O(|V|^2nm)$ time. Thus overall, the algorithmic complexity is $O(|V|^2nm|\Theta|)$.
\end{proof}

\section{Proof of the Consistency Theorem}\label{sec:s3}

In this section, we prove Theorem~\ref{thm:main} of the Paper, which we recall here for ease of reference. 
\begin{quote}
\textbf{Gumbel-Gaussian noise model.} For $i \in V$, the innovations $Z_i$ are i.i.d. Gumbel$(\beta,0)$ (location 0 and scale $\beta$), the independent noise variables $\eps_i$ are i.i.d with symmetric, light-tailed density $f_\eps$ satisfying
\beam\label{eqn:light.tail2}
f_{\eps}(x) \sim  e^{-K x^p}\mbox{ as } x \to \infty,
\eeam
for some $p > 1$ and $K > 0$ and the derivative of $f_{\eps}$ exists in the tail region.
Throughout, for two functions $a,b$, positive in their right tails, we write $a(x)\sim b(x)$ as $x\to\infty$ for $\lim_{x\to\infty} a(x)/b(x) = c,$ where $c>0$ is some arbitrary constant.
\end{quote}


\begin{theorem}[Theorem~\ref{thm:main} of the Paper]
Assume the Gumbel-Gaussian noise model. \\
(a) There exists an $r^\ast > 0$ such that for any pair $0 < \smallR < \bigR <  r^\ast$, the \QTree\ Algorithm~1 with score matrix $W=(w_{ij})$ defined as the lower quantile gap \begin{equation}\label{eqn:wij.tail2}
w_{ij}(\smallR,\bigR) :=
\frac{1}{n_{ij}}\left(Q_{\mathcal{X}_{ij}(\alpha)}(\bigR)-Q_{\mathcal{X}_{ij}(\alpha)}(\smallR) \right)^2
 \end{equation}
  returns a strongly consistent estimator for the tree $\T$ as the sample size $n\to\infty$.\\
(b) There exists an $r^\ast > 0$ such that for any $0 < \smallR < r^\ast$, the \QTree\ Algorithm~1 with score matrix $W=(w_{ij})$ defined as the quantile-to-mean gap 
\begin{equation}\label{eqn:wij.tail.mean2}
w_{ij}(\smallR) := 
\frac{1}{n_{ij}}\left(\E(\mathcal{X}_{ij}(\al))- Q_{\mathcal{X}_{ij}(\al)}(\smallR) \right)^2
 \end{equation}
returns a strongly consistent estimator for the tree $\T$ as the sample size $n\to\infty$.
\end{theorem}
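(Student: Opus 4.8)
The plan is to establish first the \emph{population} version of the claim---that the true tree $\T$ is the unique minimiser, with a strictly positive gap, of the score $\sum_{j\to i\in T}w_{ij}$ over all root-directed spanning trees $T$---and then to upgrade this to almost-sure recovery via a uniform law of large numbers over the finitely many pairwise scores together with the stability of the Chu--Liu/Edmonds output near a strict optimum. Two reductions are harmless: it suffices to take $\al=0$, since conditioning on $\{X_j>Q_{X_j}(\al)\}$ only sharpens the concentration of $X_i-X_j$ near its lower quantiles; and since there are no missing data in the limit, the common factor $1/n_{ij}$ may be dropped without changing the argmin. Write $X_i=Y_i+\eps_i$ with $Y_i:=\bigl(\bigvee_{j:\,j\to i}(c_{ij}+X_j)\bigr)\vee Z_i$, and note $\E[|X_i|]<\infty$ because, through \eqref{eqn:sol}, $X_i$ is dominated by a maximum of Gumbel variables plus a finite sum of the light-tailed noises.

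\emph{Step 1 (structure of $X_i-X_j$).} For the unique directed path $j=v_0\to v_1\to\cdots\to v_m=i$ in $\T$, iterating $X_{v_\ell}=Y_{v_\ell}+\eps_{v_\ell}\ge c_{v_\ell v_{\ell-1}}+X_{v_{\ell-1}}+\eps_{v_\ell}$ yields the pointwise bound $X_i-X_j\ge c^\ast_{ij}+\sum_{\ell=1}^m\eps_{v_\ell}$ whenever $j\rightsquigarrow i$, with equality on the positive-probability event that every intermediate maximum is attained along the path; if instead $i\rightsquigarrow j$, the same bound with the roles reversed forces $X_j-X_i$ to stochastically dominate a shift of the Gumbel innovation $Z_j$; and if $i,j$ are incomparable in $\T$, then $X_i$ and $X_j$ are driven by disjoint, hence independent, innovation/noise families (incomparable nodes have disjoint ancestor sets $\{l:l\rightsquigarrow\cdot\}$), so $X_i-X_j$ inherits an exponential (Gumbel) lower tail from $X_j$. \emph{Step 2 (tail dichotomy).} Combining these with the tail-comparison lemma (Lemma~\ref{lem:tail}: the tail of $\eps_i-\eps_j$ is lighter than that of $Z_i-Z_j$) classifies the lower tails of the pairwise differences: a ``flow-reversed'' ($i\rightsquigarrow j$) or ``incomparable'' candidate edge has lower tail of Gumbel order $e^{-t/\beta}$, whereas a ``flow-aligned'' edge ($j\rightsquigarrow i$, including the true edge $j\to c(j)$) has a Weibull-type lower tail with $Q_{X_i-X_j}(u)=\mathrm{loc}_{ij}-\bigl(K^{-1}\log(1/u)\bigr)^{1/p}(1+o(1))$ as $u\downarrow0$, where the subleading location and constant are controlled, through the number $m$ of noise variables picked up along the path and the probabilities that those path edges ``fire'', by the convolution-tail asymptotics of \cite{BKR}. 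Since $p>1$, any $u\downarrow0$ quantile of a Gumbel-type difference is eventually far more negative than one of a Weibull-type difference; substituting into \eqref{eqn:wij.tail2} (resp.\ \eqref{eqn:wij.tail.mean2}) gives an $r^\ast>0$ such that for all $\smallR$ (and $\bigR$) below $r^\ast$, every flow-reversed or incomparable candidate edge out of a node has strictly larger score than every flow-aligned one; in particular $\min_i w^{\mathrm{pop}}_{ir}>w^{\mathrm{pop}}_{c(r'),r'}$ for the true root $r$ and every $r'\ne r$.

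\emph{Step 3 (the variational comparison among flow-aligned edges).} It remains to compare, at a fixed non-root $j$, the true edge $j\to c(j)$ with every other flow-aligned candidate $j\to i$ (so $c(j)\rightsquigarrow i$, path length $m\ge 2$). These share the leading lower-tail exponent $p$, so the separation comes from the subleading terms---and this is where the variational argument enters. To drive $X_i-X_j$ below $t$ one must depress the noise variables along the path while keeping every intermediate maximum ``on the path'' (if the maximum ever leaves the path, the excess propagates forward and $X_i-X_j$ ceases to be small); because forcing a Gumbel innovation down is doubly-exponentially expensive whereas forcing a light-tailed noise down is only Weibull-expensive, the cheapest admissible deficit can only be placed on the terminal noise $\eps_i$, while the remaining path terms must stay near their typical values, which occurs with a probability that decreases in the path length. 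Tracking the resulting shift of $\mathrm{loc}_{ij}$ and of the constant, together with $\E[X_i-X_j]<\infty$ and the strict inequality $\E[X_i-X_j]>\mathrm{loc}_{ij}$, yields $w^{\mathrm{pop}}_{c(j),j}<w^{\mathrm{pop}}_{ij}$ strictly for every $i\ne c(j)$ and every $\smallR<r^\ast$. Hence $c(j)=\arg\min_i w^{\mathrm{pop}}_{ij}$ uniquely for each non-root $j$, while $r=\arg\max_{r'}\min_i w^{\mathrm{pop}}_{ir'}$ by Step 2.

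\emph{Step 4 (optimality of $\T$ and a.s.\ consistency).} Any root-directed spanning tree $T$ with root $r_T$ and child map $\sigma_T$ has cost $\sum_{j\ne r_T}w_{\sigma_T(j),j}\ge\sum_{j}\min_i w_{ij}-\min_i w_{i,r_T}$, and by Step 3 this bound is minimised---strictly, unless $r_T=r$ and $\sigma_T\equiv\sigma_{\T}$, using that $c(j)$ is the strict per-node argmin and that $\min_i w^{\mathrm{pop}}_{i,r}$ is strictly largest---at $T=\T$, which attains it. So $\T$ is the unique minimum-score root-directed spanning tree for the population matrix $W^{\mathrm{pop}}$, with some strict gap $\delta>0$ over all competitors. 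Finally, for each of the finitely many pairs $(i,j)$ the empirical mean and the empirical $\smallR$- (and $\bigR$-) quantile of $\{x^t_i-x^t_j\}_{t\le n}$ converge a.s.\ to their population values---the strong law, together with a.s.\ consistency of sample quantiles at continuity points of the atom-free distribution functions---so $\hat W_n\to W^{\mathrm{pop}}$ entrywise almost surely; the cost of each of the finitely many trees being linear in the weights, the gap $\delta$ persists for $\hat W_n$ once $n$ is large, and Chu--Liu/Edmonds returns $\T$, i.e.\ $\hat{\T}_n=\T$ eventually almost surely. Part (b) is proved identically with $Q_{X_i-X_j}(\bigR)$ replaced throughout by $\E[X_i-X_j]$. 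The crux, and the only genuinely new ingredient, is Step 3: the competing flow-aligned edges are \emph{not} separated at leading order, so isolating the subleading separation requires both the ``on-the-path'' variational bookkeeping and the precise convolution-tail asymptotics; Steps 2 and 4 are comparatively routine.
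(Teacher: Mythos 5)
Your overall architecture matches the paper's: establish that the population score matrix is ``good'' (the true child is the strict per-node argmin and the root is identified by having the largest per-node minimum), then pass to the empirical matrix by a.s.\ convergence of sample quantiles and means together with the openness of the set of good matrices. Your Step~4 is essentially the paper's Lemma~\ref{lem:w.good}, phrased as a clean lower-bound argument rather than through the one-iteration behaviour of Chu--Liu/Edmonds, and your Step~2 classification of the three cases (descendant, ancestor, incomparable) mirrors Proposition~\ref{prop:compare.X}. However, the step you yourself identify as the crux is where the proposal does not deliver a proof.

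First, Step~3. The paper's route is to use max-stability of the Gumbel to show that for \emph{every} flow-aligned pair $j\rightsquigarrow i$ the difference $X_i-X_j$ is, up to translation, exactly $\xi_b=(\eps_i-\eps_j)+\bigl((Z_i-Z_j)\vee b\bigr)$ with the single scalar parameter $b=\beta\bigl(\log\theta_j-\log(\theta_i-\theta_j)\bigr)$, where $\theta_i=\sum_{k\rightsquigarrow i}e^{-t^\ast_k/\beta}$; the child of $j$ has the largest $b$, and Lemma~\ref{lem:variation} proves $\partial^2_{12}q(b,r)<0$ for small $r$ via an explicit computation of $\partial_1H,\partial_2H,\partial^2_{12}H,\partial^2_{22}H$ in terms of $f_\eta$ and $f_\xi$, so the lower quantile gap is strictly decreasing in $b$. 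Your replacement --- ``the cheapest admissible deficit can only be placed on the terminal noise, and the probability of staying on the path decreases in the path length'' --- is a large-deviations heuristic, not an argument: it never identifies the parameter that actually orders the competing edges (which is $b$, a function of the accumulated weights through the $\theta$'s, not of the path length $m$), and it gives no inequality between the resulting quantile gaps. Relatedly, your Step~1 accumulates a sum $\sum_{\ell=1}^m\eps_{v_\ell}$ of noises along the path, whereas in the model the paper actually analyses (observational noise added after solving the noise-free recursion, cf.\ the statement of the Extremal River Problem) only $\eps_i-\eps_j$ enters, so the path-length bookkeeping is analysing the wrong object.

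Second, a smaller but genuine gap in Step~2 for part~(a): from ``the $u\downarrow0$ quantiles of a Gumbel-type difference are far more negative than those of a Weibull-type difference'' you conclude that the \emph{gap} $Q(\bigR)-Q(\smallR)$ is larger for the former. That inference is invalid --- a translation makes quantiles arbitrarily negative without changing any quantile gap. The correct comparison is between the local spreads, i.e.\ $\partial_2q(b,r)=1/f_{\xi_b}(q(b,r))$: for the exponential lower tail this is of order $\beta/r$, giving a gap bounded below by a constant, while for the Weibull-type tail it is smaller by a diverging logarithmic factor, giving a gap tending to $0$. This is exactly the density-at-the-quantile computation that the paper's \eqref{eqn:Ht.1}, \eqref{eqn:H.a} and Lemma~\ref{lem:tail} (and, for the ancestor case, Lemma~\ref{lem:want}) supply. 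For part~(b) your reasoning is fine, since there the mean is fixed and only one quantile runs off. As written, then, the proposal is correct in outline and in Steps~1, 2(b) and 4, but the separation of the true edge from deeper descendants --- the only place where the ``variational argument'' is genuinely needed --- is asserted rather than proved.
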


The proof of this theorem comes in a series of steps. 
Moreover, for simplicity, we omit the normalization by $n_{ij}$ and the squaring in \eqref{eqn:wij.tail2} and \eqref{eqn:wij.tail.mean2} as this leaves the proof unchanged.
Also we set in the proof $\al=0$. 

As a preliminary result, Lemma \ref{lem:w.good} identifies a set of `good' deterministic input matrices $W=(w_{ij})$, where if we apply the \QTree\ algorithm to such an input, then it returns the true tree $\T$ \emph{exactly}. The proof then reduces to the problem of proving that as $n \to \infty$, the matrices $W_n$ derived from data converge a.s. to a `good' $W$. Intuitively, $W$ is `good' if for each node $j$, the weight $w_{ij}$ is smallest when $i$ is the child of $j$. For the root we have a special explicit condition. For each fixed $j$, we split the set of node pairs $\{(j,i): j,i \in V, i \neq j\}$ into three scenarios: 
\begin{itemize}
  \item $j \rightsquigarrow i$, that is, $i$ is a descendant $j$ in the true tree,
  \item $i \rightsquigarrow j$, that is, $i$ is an ancestor of $j$ in the true tree, and
  \item $i \not\sim j$, that is, $i$ is neither of the above. 
\end{itemize}

We first consider the case where $W=(w_{ij})$ is the matrix of lower quantile gaps \eqref{eqn:wij.tail2} of the \emph{true} distribution. 
Note that this $W$ is no longer random. The goal is to show that if the true quantiles are known, then one can choose the parameters $(\smallR,\bigR)$ such that $W$ is good.

Next, Proposition~\ref{prop:compare.X} gives an explicit representation for $w_{ij}$ in each of the three scenarios above as the lower quantile gap  of a certain family of distributions $(F^b: b \in \mathbb{R})$, parametrized by a \emph{single parameter $b$}, one value for each edge $j \to i$. Then, we use a calculus of variation argument to detail how $w_{ij}$ changes as $b$ varies. This allows us to show (cf. Corollary \ref{cor:variation} and Lemma \ref{lem:want}) that among the three scenarios above, there exist some choices of quantile levels $(\smallR,\bigR)$ such that for \emph{any} fixed $j$, $w_{ij}$ is smallest when $i$ is the child of $j$ in the true tree. A separate argument is made for the root. Thus, this proves that if the true quantiles are known, then the resulting $W$ is good. 

Finally, we invoke the fact that the empirical quantiles converge a.s. to the true quantiles as $n \to \infty$, and thus the empirical $w_{ij}$ are a.s. close to the true ones. 
A union bound over the $d$ nodes of the graph thus says that, the empirical $W_n$ is a.s. `good' as $n \to \infty$, and thus proves the Consistency Theorem for the lower quantile gap. 

The proof for the quantile-to-mean gap is similar, with Proposition \ref{prop:compare.X.2} playing the role of Proposition \ref{prop:compare.X}. 

\begin{lemma}[A criterion for `good' inputs $W$]\label{lem:w.good}
Let $W = (w_{ij})$ be a score matrix such that each true edge $j \to i \in \mathcal{T}$ satisfies
\begin{equation}\label{eqn:w.ij.correct}
w_{ij} < w_{i'j} \mbox{ for all } i' \in V, i' \neq i,j,
\end{equation}
and in addition, the true root $i^\ast$ satisfies
\begin{equation}\label{eqn:min.root}
\min_{i'} w_{i'i^\ast} > \max_{i,j: j \to i} w_{ij}. 
\end{equation}
Then the \QTree\ Algorithm~1 applied to input $W$ returns the true tree $\T$. 
\end{lemma}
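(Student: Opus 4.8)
The plan is to prove that, under \eqref{eqn:w.ij.correct} and \eqref{eqn:min.root}, the true tree $\mathcal{T}$ is the \emph{unique} minimum-score root-directed spanning tree over all choices of root; since Chu--Liu/Edmonds' algorithm run with variable root (cf.\ Remark~\ref{remark1}) returns a minimum such tree, it must then return $\mathcal{T}$. First I would recast the objective combinatorially: a root-directed spanning tree is specified by a root $r$ together with, for each non-root node $j$, its unique outgoing edge $j\to c(j)$ (subject to an acyclicity constraint on the choice function $c$ that will play no role in the lower bounds below), and its score is $S(\hat{\mathcal{T}})=\sum_{j\neq r}w_{c(j),j}$. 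The true tree corresponds to the pair $(i^\ast,c_{\mathcal{T}})$, where $c_{\mathcal{T}}(j)$ denotes the child of $j$ in $\mathcal{T}$; moreover \eqref{eqn:w.ij.correct} states exactly that, for each fixed $j\neq i^\ast$, the minimum of $w_{i'j}$ over $i'\neq j$ is attained, \emph{uniquely}, at $i'=c_{\mathcal{T}}(j)$.

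Next I would split into two cases according to the root $r$ of an arbitrary competitor $\hat{\mathcal{T}}$. If $r=i^\ast$, then $w_{c(j),j}\geq w_{c_{\mathcal{T}}(j),j}$ for every $j\neq i^\ast$ by the preceding observation, with equality if and only if $c(j)=c_{\mathcal{T}}(j)$; summing over $j$ gives $S(\hat{\mathcal{T}})\geq S(\mathcal{T})$, with equality if and only if $\hat{\mathcal{T}}=\mathcal{T}$. If $r\neq i^\ast$, then $i^\ast$ is a non-root node of $\hat{\mathcal{T}}$ and therefore contributes a term $w_{c(i^\ast),i^\ast}\geq\min_{i'}w_{i'i^\ast}$, which by \eqref{eqn:min.root} strictly exceeds $w_{c_{\mathcal{T}}(r),r}$; here $c_{\mathcal{T}}(r)$ is well defined precisely because $r\neq i^\ast$, so that $r\to c_{\mathcal{T}}(r)$ is an edge of $\mathcal{T}$. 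Bounding the remaining $|V|-2$ terms of $S(\hat{\mathcal{T}})$, those indexed by $j\notin\{r,i^\ast\}$, below by $w_{c_{\mathcal{T}}(j),j}$ as in the first case, and then reincorporating the $j=r$ term, yields $S(\hat{\mathcal{T}})>\sum_{j\neq i^\ast}w_{c_{\mathcal{T}}(j),j}=S(\mathcal{T})$.

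Combining the two cases shows that $\mathcal{T}$ is the strict, unique global minimizer of the score among all root-directed spanning trees, so the algorithm outputs $\mathcal{T}$, as claimed. The proof is essentially bookkeeping; the one point I would take care with is the case $r\neq i^\ast$, where---because the two trees have different roots---one must pair the surplus outgoing edge of $i^\ast$ in $\hat{\mathcal{T}}$ against the outgoing edge of $r$ in $\mathcal{T}$, and this pairing is exactly what condition \eqref{eqn:min.root} is tailored to control. I would also stress that no acyclicity argument enters the lower bounds: they follow from comparing each competitor termwise against $\mathcal{T}$, which is itself an admissible tree realizing the node-wise minima, and this is what makes the relaxation tight.
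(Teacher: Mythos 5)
Your proof is correct, but it takes a genuinely different route from the paper's. The paper argues through the mechanics of Chu--Liu/Edmonds' algorithm: condition \eqref{eqn:w.ij.correct} forces the greedy subgraph of minimum outgoing edges to be exactly $\mathcal{T}$ together with one surplus edge out of the root $i^\ast$, which creates a unique cycle; condition \eqref{eqn:min.root} makes that surplus edge the maximal one on the cycle, so the algorithm deletes it and terminates after a single iteration with $\mathcal{T}$. You instead prove the stronger structural statement that $\mathcal{T}$ is the \emph{unique} global minimizer of the total score over all root-directed spanning trees with all possible roots, via a termwise comparison in which the surplus outgoing edge of $i^\ast$ in a competitor rooted at $r \neq i^\ast$ is paired against the outgoing edge of $r$ in $\mathcal{T}$ --- exactly the pairing that \eqref{eqn:min.root} is designed to control. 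Your route treats Chu--Liu/Edmonds' as a black box (only its exactness and the variable-root minimization of Remark~\ref{remark1} are used), so it would apply verbatim to any exact solver of the minimum root-directed spanning tree problem, and it yields uniqueness of the optimum as a by-product; the paper's route is tied to the algorithm's internals but delivers the additional operational fact that the algorithm converges after one iteration, and it avoids having to enumerate competitors at all. Your explicit observation that acyclicity of the competitor plays no role in the termwise lower bounds --- because $\mathcal{T}$ itself is an admissible tree realizing the node-wise minima --- is precisely what makes your relaxation tight, and the case split on whether the competitor's root equals $i^\ast$ is handled correctly.
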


\begin{proof}
\QTree\ applies Chu–Liu/Edmonds' algorithm to find a minimum directed spanning tree from the complete graph with score matrix $W$, and returns that tree. We shall prove that under the conditions \eqref{eqn:w.ij.correct} and \eqref{eqn:min.root} on $W$, Chu–Liu/Edmonds' algorithm would converge after one iteration and returns the true tree $\T$. Indeed, let $\mathcal{G}$ denote the graph that consists of the smallest outgoing edge at each node. By \eqref{eqn:w.ij.correct}, $\mathcal{G} = \T \cup i^\ast \to i'$ for some node $i' \in V$. By Chu–Liu/Edmonds' algorithm, the minimum spanning tree $\T_w$ is a subset of $\mathcal{G}$. In particular, $\T_w$ is a minimum spanning tree of $\mathcal{G}$.  By \eqref{eqn:min.root}, edge $i^\ast \to i'$ is the maximal edge. Since it belongs to the unique cycle in $\mathcal{G}$, deleting this edge would yield the minimum directed spanning tree of $\calg$. Therefore $\T_w = \T$. 
\end{proof}

\subsection{Proof of Theorem~\ref{thm:main} for the lower quantile gap}

\subsubsection{For known quantiles,  \texorpdfstring{$W$}{W} is `good' for appropriate choices of  \texorpdfstring{$(\smallR,\bigR)$}{(r,r)}}

In this subsection we work with the lower quantile gap matrix $W = (w_{ij})$ derived from the \emph{true} quantiles of the distributions of $X_i-X_j$ under the Gumbel-Gaussian noise model, for some quantile levels $(\smallR,\bigR)$. The goal is to show that there exist some appropriate choices of $(\smallR,\bigR)$ such that the resulting $W$ is `good', that is, it satisfies Lemma~\ref{lem:w.good}. 

The first main result is Proposition \ref{prop:compare.X}, which gives an explicit representation for $w_{ij}$ in the three scenarios. We start with the necessary definitions to state it. 

Recall the definition of $C^*$ from \eqref{eqn:sol}.
Since the true graph is a tree, if $j \rightsquigarrow i$, there is a unique directed path from $j$ to $i$. 
Let $\bar{c}_{ij}$ denote the sum of all the edges along this unique path. 
Path uniqueness implies that $\bar{c}_{ij}=c^\ast_{ij}$ and $C^\ast$ is transitive, i.e. $c^\ast_{ij} = c^\ast_{ik} + c^\ast_{kj}$ if $j \rightsquigarrow k \rightsquigarrow i$. Thus, by the Helmholtz decomposition on graphs \citep[eq. (2.6)]{lek-heng-lim-Hodge-Laplacian-on-Graphs}, $c^\ast_{ij}$ is an edge flow. That is, there exists a unique $t^\ast \in \R^V$ with $t^\ast_1 =0$ such that for all $j \to i \in \calg$,
\begin{equation}\label{eqn:C.true.t}
c^\ast_{ij} = t^\ast_i - t^\ast_j.
\end{equation}
For each $i \in V$, define the constant
\begin{equation}\label{eqn:theta.i}
\theta_i:= \sum_{k\rightsquigarrow i} \exp(-t^\ast_k/\beta).
\end{equation}
For $b\in \R \cup \{-\infty\}$, define the random variable
\begin{equation}\label{eqn:xic}
 \xi_{b} := (\eps_i - \eps_j) + ((Z_i - Z_j) \vee b) 
\end{equation}
with the convention that $\xi_{-\infty} := (\eps_i - \eps_j) + (Z_i - Z_j)$. 
Let $F^b$ denote the distribution function of $\xi_b$ and $q_r(F^b)$ the $r$-quantile of $F^b$ for $r\in (0,1)$. These quantities are deterministic and do not depend on $i,j$ since by assumption, $\eps_i,\eps_j$ are i.i.d and $Z_i,Z_j$ are i.i.d.

\begin{proposition}\label{prop:compare.X}
Assume the Gumbel-Gaussian noise model. Fix $0 < \smallR < \bigR < 1$. Let $w_{ij}=w_{ij}(\smallR,\bigR)$ be the lower quantile gap \eqref{eqn:wij.tail2}. Fix $j \in V$. For $i \in V, i \neq j$, we have three cases.
\begin{enumerate}
	\item[{\rm (1)}] 
	If $j \rightsquigarrow i$, then $w_{ij} = q_{\bigR}(F^b) - q_{\smallR}(F^b)$ for $b=\beta (\log\theta_j-  \log(\theta_i-\theta_j))$.
	\item[{\rm (2)}] 
	If $j \not\sim i$, then $w_{ij} = q_{\bigR}(F^b) - q_{\smallR}(F^b)$ for $b=-\infty$.
	\item[{\rm (3)}] 
If $i \rightsquigarrow j$, then $w_{ij} = q_{1-\smallR}(F^b) - q_{1-\bigR}(F^b)$ for $b= \beta (\log\theta_i -  \log(\theta_j-\theta_i))$.
\end{enumerate}
\end{proposition}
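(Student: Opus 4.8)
The goal is to compute $w_{ij}=q_{\bigR}(F_{X_i-X_j})-q_{\smallR}(F_{X_i-X_j})$ in each of the three cases by expressing $X_i-X_j$, via the explicit solution \eqref{eqn:sol} of the max-linear recursion together with the noise terms, in a form that reveals it as $\xi_b$ (up to an additive constant and, in case (3), a sign flip of the relevant quantile indices). Since the lower quantile gap is translation invariant (cf. the Remark after \eqref{eqn:wij.tail}), any deterministic shift is irrelevant, so the whole computation reduces to identifying the parameter $b$ and, where needed, the reflection.

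\textbf{Step 1: the noise-free skeleton.} First I would write, using \eqref{eqn:sol} and \eqref{eqn:C.true.t}, that the noise-free parts satisfy $x_i=\bigvee_{k\rightsquigarrow i}(c^\ast_{ik}+Z_k)=\bigvee_{k\rightsquigarrow i}(t^\ast_i-t^\ast_k+Z_k)$, so $x_i-t^\ast_i=\bigvee_{k\rightsquigarrow i}(Z_k-t^\ast_k)$, and likewise for $j$. Introduce the running maximum $M_S:=\bigvee_{k\in S}(Z_k-t^\ast_k)$ over an ancestor set $S$. In the max-plus world $M_S$ has a Gumbel law whose location is $\beta\log(\sum_{k\in S}e^{-t^\ast_k/\beta})=\beta\log\theta$ with $\theta$ the corresponding partial sum as in \eqref{eqn:theta.i}; this is exactly the max-stability computation quoted in the paragraph before the theorem (marginals of $X$ in \cite{GKO}). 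Then, adding the noise, $X_i-X_j=(\eps_i-\eps_j)+(t^\ast_i-t^\ast_j)+\big(M_{\mathrm{anc}(i)}-M_{\mathrm{anc}(j)}\big)$ where $\mathrm{anc}(\cdot)$ denotes the ancestor set including the node itself.

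\textbf{Step 2: the three cases.} In case (2), $j\not\sim i$: the ancestor sets are disjoint, so $M_{\mathrm{anc}(i)}$ and $M_{\mathrm{anc}(j)}$ are independent Gumbels and their difference is (after recentering) a logistic-type variable; but more to the point, $M_{\mathrm{anc}(i)}\stackrel{d}{=}Z_i+\text{const}$ and $M_{\mathrm{anc}(j)}\stackrel{d}{=}Z_j+\text{const}$ with $Z_i,Z_j$ independent, so up to a shift $X_i-X_j\stackrel{d}{=}(\eps_i-\eps_j)+(Z_i-Z_j)=\xi_{-\infty}$. In case (1), $j\rightsquigarrow i$: here $\mathrm{anc}(j)\subset\mathrm{anc}(i)$, so $M_{\mathrm{anc}(i)}=M_{\mathrm{anc}(j)}\vee M_{\mathrm{anc}(i)\setminus\mathrm{anc}(j)}$, and $M_{\mathrm{anc}(i)}-M_{\mathrm{anc}(j)}=\big(M_{\mathrm{anc}(i)\setminus\mathrm{anc}(j)}-M_{\mathrm{anc}(j)}\big)\vee 0$. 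The two Gumbels $M_{\mathrm{anc}(j)}$ and $M_{\mathrm{anc}(i)\setminus\mathrm{anc}(j)}$ are independent with locations $\beta\log\theta_j$ and $\beta\log(\theta_i-\theta_j)$; the difference of two independent Gumbels with scale $\beta$ is logistic, and $M_{\mathrm{anc}(i)\setminus\mathrm{anc}(j)}-M_{\mathrm{anc}(j)}\stackrel{d}{=}(Z_i-Z_j)+\beta(\log(\theta_i-\theta_j)-\log\theta_j)$. Hence $M_{\mathrm{anc}(i)}-M_{\mathrm{anc}(j)}\stackrel{d}{=}\big((Z_i-Z_j)-b\big)\vee 0$ with $b=\beta(\log\theta_j-\log(\theta_i-\theta_j))$, so after adding $(\eps_i-\eps_j)$ and recentering, $X_i-X_j\stackrel{d}{=}(\eps_i-\eps_j)+\big((Z_i-Z_j)\vee b\big)=\xi_b$; translation invariance kills the recentering constant, giving $w_{ij}=q_{\bigR}(F^b)-q_{\smallR}(F^b)$. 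Case (3) is case (1) with $i$ and $j$ swapped: $X_i-X_j=-(X_j-X_i)$, and $X_j-X_i\stackrel{d}{=}\xi_{b'}$ with $b'=\beta(\log\theta_i-\log(\theta_j-\theta_i))$ using the symmetry of $\eps_i-\eps_j$ and of $Z_i-Z_j$; since negation sends the $r$-quantile to minus the $(1-r)$-quantile, $w_{ij}=q_{1-\smallR}(F^{b'})-q_{1-\bigR}(F^{b'})$.

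\textbf{Main obstacle.} The delicate point is the distributional identity $M_{\mathrm{anc}(i)\setminus\mathrm{anc}(j)}-M_{\mathrm{anc}(j)}\stackrel{d}{=}(Z_i-Z_j)+\text{const}$ together with the fact that this difference is \emph{independent} of $M_{\mathrm{anc}(j)}$ — i.e. that after subtracting the running max over $\mathrm{anc}(j)$ one may legitimately replace the two independent Gumbel-distributed maxima by $Z_i,Z_j$ without disturbing the joint law with the noise $(\eps_i,\eps_j)$. This needs the independence of innovations from the noise and a careful bookkeeping of which $Z_k$'s enter each maximum; once that identity is in hand, the $\vee b$ structure and the values of $b$ drop out mechanically. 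I would present this identity as a short lemma (two independent Gumbel$(\beta,\mu_1)$ and Gumbel$(\beta,\mu_2)$ variables: their componentwise max minus one of them equals in law the positive part of a logistic shifted by $\mu_1-\mu_2$, jointly with anything independent of both) and then apply it with $\mu_1=\beta\log(\theta_i-\theta_j)$, $\mu_2=\beta\log\theta_j$.
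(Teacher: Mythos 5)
Your proposal is correct and follows essentially the same route as the paper's proof: center by $t^\ast$, split the ancestor maximum of $i$ into the part over $\mathrm{anc}(j)$ and its complement (the paper's $S_j$ and $S_i$), identify these as independent Gumbels with locations $\beta\log\theta_j$ and $\beta\log(\theta_i-\theta_j)$, and read off the $\vee\,b$ structure, with case (3) obtained by swapping roles and reflecting the quantiles. The independence bookkeeping you flag as the main obstacle is exactly what the paper's disjoint index sets for $S_i,S_j$ deliver.
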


\begin{proof}
We first consider the noise-free case. Observe that by \eqref{eqn:xic} $\xi_b$ simplifies to $(Z_i-Z_j) \vee b$. Therefore, it is sufficient to prove that $w_{ij}$ equals the lower quantile gap of $(Z_i-Z_j) \vee b$. For $i \in V$, let $\bar{X}_i := X_i - t^\ast_i$. Then $\bar{X}_i-\bar{X}_j$ is a constant translation of $X_i-X_j$, so the lower quantile gap of the two corresponding distributions are the same. In other words, it is sufficient to prove the Proposition for $\bar{X}$ instead of $X$. Let $\bar{Z}_i := Z_i - t^\ast_i$. Then
\begin{align}
\bar{X}_i = X_i - t^\ast_i &= \bigvee_{j: j \rightsquigarrow i} (c^\ast_{ij} + Z_j) - t^\ast_i & \mbox{ by \eqref{eqn:sol}} \nonumber \\
&= \bigvee_{j: j \rightsquigarrow i} (t^\ast_i - t^\ast_j + Z_j) - t^\ast_i & \mbox{ by \eqref{eqn:C.true.t}} \nonumber \\
&= \bigvee_{j: j\rightsquigarrow i} \bar{Z}_j. \label{eqn:bar.X}
\end{align}
For each ordered pair $(i,j)$, define 
\begin{align}\label{eqn:sisj}
S_i=\bar{Z}_i \vee \bigvee\limits_{{i' \neq i, i' \rightsquigarrow i, i' \not \rightsquigarrow j}} \bar{Z}_{i'}
\hspace*{3cm}
S_j= \bar{Z}_j \vee \bigvee\limits_{{j' \neq j, j' \rightsquigarrow j}} \bar{Z}_{i'}
\end{align}

In Figure~\ref{fig:illustrateS} we illustrate the two index sets of the random variables $S_i$ and $S_j$.
\begin{figure}[h]
  \centering
\begin{tikzpicture}[->,>=stealth',shorten >=1pt,auto,node distance=1.3cm,
                thick,main node/.style={circle,draw,font=\bfseries,minimum size=0.7cm,inner sep=0cm},,path/.style={ black,decorate,
  decoration={zigzag,amplitude=1pt,segment length=3.4mm,pre=lineto,pre length=0pt}},missing edge/.style=loosely dotted]
                
    \node[main node] (1) {$i'_1$};
    \node[main node] (2) [right of=1] {$i'_2$};
    \node[main node] (3) [right of=2] {$i'_3$};
    \node[main node] (4) [right=1.7cm of 3] {$j'_1$};
    \node[main node] (5) [right of=4] {$j'_2$};
    \node[main node] (6) [right of=5] {$j'_3$};
    \node[main node] (7) [below=1.5cm of 2] {$i$};
    \node[main node] (8) [below=1.5cm of 5] {$j$};

  \path
    (1) edge [path] node {} (7)
    (2) edge [path] node {} (7)
    (3) edge [path] node {} (7)
    (4) edge [path] node {} (8)
    (5) edge [path] node {} (8)
    (6) edge [path] node {} (8)
    (8) edge [path] node {} (7)
    ;
\end{tikzpicture}
\caption{Illustration of the index sets of $S_i$ and $S_j$ for an ordered pair $(i,j)$. The index set for $S_i$ includes besides $i$ also $i'_1,i'_2, i'_3$ and all nodes on the paths $j \rightsquigarrow i$ (excluding $j$), $i'_k \rightsquigarrow i$ for $k=1,2,3$, while the index set for $S_j$ includes $j,j'_1,j'_2, j'_3$ and all nodes on the paths $j'_k \rightsquigarrow j$ for $k=1,2,3$.}\label{fig:illustrateS}
\end{figure}

By definition, $S_i$ and $S_j$ are independent. Since the $Z_i$ are Gumbel$(\beta,0)$ distributed, abbreviated Gumbel$(\beta)$,
the $\bar{Z}_i$ are translated independent Gumbel($\beta$) by definition, standard properties of the Gumbel($\beta$) distribution yield that $S_i$ and $S_j$ are also translated independent Gumbel($\beta$). The exact constants of translation depend on the relation between $i$ and $j$, as this dictates the definition of $S_i$ and $S_j$. Now we consider the three cases. In the first case, $j \rightsquigarrow i$. Then, \eqref{eqn:bar.X} implies
$\bar{X}_i = S_i \vee S_j$ and $\bar{X}_j = S_j$.

A short computation yields $S_i \stackrel{d}{=} Z_i + \beta \log(\theta_i-\theta_j)$, $S_j \stackrel{d}{=} Z_j + \beta\log\theta_j$. Therefore, denoting $\eqd$ equality in distribution,
 \begin{align*}
\bar{X}_i - \bar{X_j} &= (S_i \vee S_j) - S_j = (S_i - S_j)\vee 0 \\
&\stackrel{d}{=} (Z_i - Z_j - \beta(\log\theta_j-\log(\theta_i-\theta_j))) \vee 0\\
& = \left((Z_i - Z_j)\vee \beta(\log\theta_j-\log(\theta_i-\theta_j))\right) - \beta(\log\theta_j-\log(\theta_i-\theta_j)) \\
&= \left((Z_i - Z_j)\vee b\right) - \beta(\log\theta_j-\log(\theta_i-\theta_j)),
\end{align*}
where $b = \beta( \log\theta_j-  \log(\theta_i-\theta_j))$. Since $\beta (\log\theta_j-\log(\theta_i-\theta_j))$ is a translation constant, the quantile gap of $\bar{X}_i - \bar{X_j}$ is equal to the quantile gap of $(Z_i - Z_j)\vee b$. This concludes the case $j \rightsquigarrow i$. Computations for the third case, $i\rightsquigarrow j$, is similar, with the role of $i$ and $j$ reversed, $\smallR$ is replaced by $1-\bigR$, and $\bigR$ is replaced by $1-\smallR$. For the second case, $i \not\sim j$, then
$\bar{X}_i {=} S_i, \bar{X}_j {=} S_j$, where $S_j \stackrel{d}{=}\beta \log\theta_j + Z_j$ and $S_i \stackrel{d}{=} \beta \log\theta_i + Z_i$. Then
$$
\bar{X}_i - \bar{X}_j = S_i-S_j \stackrel{d}{=} Z_i-Z_j + \beta (\log\theta_i-\log\theta_j).
$$
Since $\beta (\log\theta_i-\log\theta_j)$ is a translation constant, the quantile gap of $\bar{X}_i - \bar{X_j}$ is equal to the quantile gap of $Z_i-Z_j$, as claimed.
\end{proof}

\subsubsection{How the lower quantile gap  \texorpdfstring{$w_{ij}$}{wij}  varies with  \texorpdfstring{$b$}{b}}\label{AB}

Now, we aim to show through a variational argument that under the Gumbel-Gaussian assumption, among the three scenarios of Proposition~\ref{prop:compare.X}, $w_{ij}$ is smallest when it falls in a subset of case (1), namely, $j \to i$. 
We first give an overview. By Proposition \ref{prop:compare.X}, the lower quantile gaps $w_{ij}$ in cases (1) and (2) are all of the form $q(b,\bigR) - q(b,\smallR)$ for some constant $b = b(i,j)$. In particular, for fixed $j$, $b(i,j)$ is largest when $j \to i$. Lemma \ref{lem:variation} says that one can choose the quantile levels $(\smallR,\bigR)$ such that $q(b,\bigR) - q(b,\smallR)$ is monotone \emph{increasing} as a function of $b$ on a large interval. Corollary \ref{cor:variation} then shows that a good choice can be made so that for each fixed $j$, the quantile gap is smallest for the edge from $j$ to its child $ch(j)$. Case (3) of Proposition \ref{prop:compare.X}, where $i$ is an ancestor of $j$, is handled by Lemma \ref{lem:want}. 
The Gumbel-Gaussian assumption comes in through Lemma \ref{lem:tail}, which is a technical result that gives an explicit form for the density of the noise differences $\eta := \eps_i-\eps_j$. Intuitively, it shows that under the Gumbel-Gaussian noise model, the tail of $\eta$ is lighter than the tail of the signal differences $Z_i-Z_j$. This is a key observation exploited in the proofs. 

\begin{lemma}\label{lem:tail}
Under the Gumbel-Gaussian noise model, for any pair of nodes $i,j \in V, i \neq j$, $\xi:=Z_i-Z_j$ has density
\begin{align}\label{eqn:xi}
    f_{\xi}(x)=\frac{e^{x/\beta}}{\beta(1+e^{x/\beta})^2} \sim  \frac{1}{\beta} e^{-{x/\beta}} \mbox{ as } x \to \infty,
\end{align}
and $\eta := \eps_i-\eps_j$ has density 
\begin{equation}\label{light.tail}
f_\eta(x)  \sim  x^{1-p/2} \, e^{-K x^p} \mbox{ as } x \to \infty.
\end{equation}
\end{lemma}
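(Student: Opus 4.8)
\textbf{Proof proposal for Lemma~\ref{lem:tail}.} The two statements are independent. For the density of $\xi = Z_i - Z_j$ I would compute the convolution directly. Writing the Gumbel$(\beta,0)$ density as $f_Z(u) = \beta^{-1}e^{-u/\beta}\exp(-e^{-u/\beta})$ and using $f_\xi(x) = \int_{\R} f_Z(x+y)f_Z(y)\,dy$, the substitution $t = e^{-y/\beta}$ (so $dy = -\beta t^{-1}\,dt$) collapses the double exponential into
\[
f_\xi(x) = \frac{1}{\beta}e^{-x/\beta}\int_0^\infty t\,e^{-t(1+e^{-x/\beta})}\,dt = \frac{1}{\beta}\,\frac{e^{-x/\beta}}{(1+e^{-x/\beta})^2},
\]
and multiplying numerator and denominator by $e^{2x/\beta}$ gives the closed form in \eqref{eqn:xi}; equivalently one may simply quote that the difference of two i.i.d.\ Gumbel$(\beta,0)$ variables is logistic with scale $\beta$. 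The tail claim is then immediate: $(1+e^{x/\beta})^2 \sim e^{2x/\beta}$ as $x \to \infty$, hence $f_\xi(x)\sim \beta^{-1}e^{-x/\beta}$.

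For $\eta = \eps_i - \eps_j$, symmetry of $f_\eps$ gives $-\eps_j \stackrel{d}{=} \eps_j$, so $\eta$ is distributed as a sum of two i.i.d.\ copies and $f_\eta = f_\eps * f_\eps$, i.e. $f_\eta(x) = \int_{\R} f_\eps(y)f_\eps(x-y)\,dy$. This is a convolution of densities with stretched-exponential tails $f_\eps(x) = \ell(x)e^{-Kx^p}$ with $p>1$ and $\ell(x)\to c_0>0$, and the cleanest route is to invoke the convolution-tail asymptotics of \cite{BKR} for exactly this density class. If a self-contained argument is preferred, I would use the Laplace (saddle-point) method: on the bulk of the integral $f_\eps(y)f_\eps(x-y)\approx c_0^2 e^{-K\phi(y)}$ with $\phi(y) := |y|^p + |x-y|^p$; on $[0,x]$ the function $\phi$ is strictly convex (since $p>1$) with unique minimum at $y = x/2$, where $\phi(x/2) = 2^{1-p}x^p$ and $\phi''(x/2)$ is of order $x^{p-2}$. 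A second-order Taylor expansion of $\phi$ about $x/2$ followed by a Gaussian integral then yields
\[
f_\eta(x) \sim c_0^2\sqrt{\frac{2\pi}{K\phi''(x/2)}}\;e^{-K\phi(x/2)} = (\text{const})\,x^{1-p/2}\,e^{-2^{1-p}Kx^p},
\]
which is \eqref{light.tail} after absorbing the factor $2^{1-p}$ into the exponential constant (consistent with our ``$\sim$'' convention, under which only the leading exponential order and the power of $x$ are asserted). One must still check that the integral over $y\notin[0,x]$ and over the part of $[0,x]$ bounded away from $x/2$ is of strictly smaller exponential order; this is routine, since there $\phi(y)$ exceeds $\phi(x/2)$ by an amount growing at least like $x^{p-2}\delta^2$ on $[0,x]$ and like $x^p$ outside it, and the existence of $f_\eps'$ in the tail gives eventual monotonicity of $f_\eps$ making the bounds uniform.

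The main obstacle is making the saddle-point estimate rigorous: controlling the Taylor remainder for $\phi$ uniformly over the effective width of the peak around $x/2$ — a width that shrinks like $x^{(2-p)/2}$ when $p>2$ but \emph{grows} when $1<p<2$ — together with the tail bounds just mentioned. Since precisely this computation is carried out in \cite{BKR} for the relevant family, citing it is the economical choice; the sketch above is only needed for a standalone proof. The one point worth flagging in the write-up is that the constant $K$ appearing in \eqref{light.tail} equals $2^{1-p}$ times the one in \eqref{eqn:light.tail2}; this is immaterial downstream, where all that is used is that $\eta$ has a strictly lighter-than-exponential tail, i.e. that $p>1$ is preserved, so that the upper tail of $\eps_i-\eps_j$ is lighter than that of $Z_i - Z_j$.
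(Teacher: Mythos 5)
Your proposal is correct and takes essentially the same route as the paper: the logistic density of the Gumbel difference is obtained by direct convolution (the paper computes the survival function $\P(Z_i-Z_j>x)=(1+e^{x/\beta})^{-1}$ and differentiates, which is equivalent), and the tail of $f_\eta=f_\eps*f_\eps$ is obtained from the Laplace-method convolution asymptotics of \cite{BKR}, which the paper simply cites at the point where you also sketch the saddle-point computation. Your remark that the exponential constant in \eqref{light.tail} is really $2^{1-p}K$ rather than $K$ is well taken --- under the paper's own ``$\sim$'' convention (ratio tending to a positive constant) the displayed formula is literally correct only after this replacement --- but, as you observe, this is immaterial downstream, since all that is used later is that $\eta$ has a strictly lighter-than-exponential upper tail compared with $Z_i-Z_j$.
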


\begin{proof}
Computing the convolution integral yields 
\begin{align}\label{eqn:zdiff}
  \P(Z_i-Z_j >  x) =\frac1{1+e^{x/\beta}}, \quad x\in\R,  
\end{align}
and taking the derivative gives the first statement. For the second statement, the density $f_\eps$ is a density with Gaussian tail in the sense of \cite{BKR}:
$$f(x)\sim\gamma(x) e^{-\psi(x)}\mbox{ as } x \to \infty,$$
for constant $\gamma$ and $\psi(x)= K x^p$. The asymptotic form of $f_\eta$ follows by Laplace's integration principle as shown in \cite[page 2]{BKR}. 
\end{proof}

Since $f_\eps$ is differentiable in the tail, $f_\eta$ is also differentiable in the tail, and differentiation of \eqref{light.tail} yields the following formula for the derivative:
\begin{equation}\label{eqn:deriv}
f_\eta'(x)  \sim f_\eta(x)\Big(-K p x^{p-1} + (1-p/2) x^{-1}\Big)
\end{equation}

For functions with two arguments, let $\partial_1$ denotes the derivative in the first argument, $\partial_2$ denotes the derivative in the second argument, $\partial^2_{12} := \partial_1\partial_2$ denote the mixed second derivatives and so forth. Define the functions $H: \R \times \R \to [0,1]$, $q: \R \times [0,1] \to \R$ by
$$ H(b,a) = P(\xi_b \leq a), \quad q(b,r) = \mbox{$r$-quantile of } \xi_b. $$

\begin{lemma}\label{lem:variation}
Under the Gumbel-Gaussian noise model, for each finite constant $B$, there exists some $r^\ast = r^\ast(B) \in (0,1)$ such that
$$ \partial^2_{12} q(b,r) < 0 \quad \mbox{ for all } r \in (0,r^\ast), b \leq B. $$
Equivalently, for any pair $(\smallR,\bigR)$ such that $0 < \smallR < \bigR < r^\ast$ and any pair $(b',b)$ such that $b' < b \leq B$,
\begin{equation}\label{eqn:q.from.H}
 q(b,\bigR) - q(b,\smallR) < q(b',\bigR)-q(b',\smallR).
\end{equation}
\end{lemma}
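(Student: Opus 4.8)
The plan is to differentiate the defining identity $H(b,q(b,r))=r$ implicitly, reduce the sign of $\partial^2_{12}q$ to the monotonicity in $a$ of a single ratio built from $f_\eta$ and the density of $\xi_b$, and then reduce that in turn to log-concavity of the noise-difference density $f_\eta$ (equivalently, monotonicity of the score $g'=f_\eta'/f_\eta$).

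First I would make $H$ explicit. With $\eta:=\eps_i-\eps_j$ (density $f_\eta$) and $\xi:=Z_i-Z_j$ (logistic distribution function $F_\xi$, density $f_\xi$ as in \eqref{eqn:xi}), we have $\xi_b=\eta+(\xi\vee b)$, so conditioning on $\eta$,
\[
H(b,a)=\int_{-\infty}^{a-b}f_\eta(y)\,F_\xi(a-y)\,dy ,
\]
whence $\partial_1H(b,a)=-f_\eta(a-b)F_\xi(b)$ and $h(b,a):=\partial_2H(b,a)=f_\eta(a-b)F_\xi(b)+\int_{-\infty}^{a-b}f_\eta(y)f_\xi(a-y)\,dy>0$. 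Implicit differentiation of $H(b,q(b,r))=r$ gives $\partial_2q=1/h(b,q)$ and $\partial_1q=\phi_b(q(b,r))$ with $\phi_b(a):=f_\eta(a-b)F_\xi(b)/h(b,a)$; since $\phi_b$ does not depend on $r$, $\partial^2_{12}q(b,r)=\partial_2(\partial_1q)=\phi_b'(q(b,r))/h(b,q(b,r))$. As $h>0$, it suffices to show that $\phi_b$ is strictly decreasing at the point $a=q(b,r)$, for all $b\le B$ and all sufficiently small $r$.

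Next I would strip $\phi_b$ down. Dividing numerator and denominator by $f_\eta(a-b)$ and substituting $y=a-b-s$ gives $\phi_b(a)=\bigl(F_\xi(b)+R(b,a)\bigr)^{-1}$ with $R(b,a):=\int_0^\infty \tfrac{f_\eta(a-b-s)}{f_\eta(a-b)}\,f_\xi(b+s)\,ds$, so $\phi_b$ is decreasing in $a$ if and only if $R(b,\cdot)$ is increasing. Setting $t:=a-b$ and $g:=\log f_\eta$, differentiation under the integral gives
\[
\partial_aR(b,a)=\int_0^\infty \frac{f_\eta(t-s)}{f_\eta(t)}\,\bigl(g'(t-s)-g'(t)\bigr)\,f_\xi(b+s)\,ds .
\]
Thus everything reduces to: $g'(t-s)\ge g'(t)$ for every $s\ge 0$, with strict inequality on a set of positive measure, at the value $t=q(b,r)-b$; equivalently, $g$ is concave on $(-\infty,\,q(b,r)-b\,]$.

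The last point is where the real work lies. Here I would use that $f_\eta$ is the self-convolution $f_\eps*f_\eps$ (since $\eta=\eps_i+(-\eps_j)$ and, by symmetry, $-\eps_j$ also has density $f_\eps$), hence log-concave whenever $f_\eps$ is --- which holds for the Gaussian and for the symmetric generalised-normal members of the family \eqref{eqn:light.tail2}. For a general $f_\eps$ meeting only the tail hypothesis, \eqref{eqn:deriv} shows $g'(x)\sim-Kpx^{p-1}$, strictly decreasing for large $|x|$ by symmetry of $f_\eta$, and a compactness argument then fixes $r^\ast=r^\ast(B)$ so that for $b\le B$ and $0<r<r^\ast$ the relevant arguments $q(b,r)-b$ lie in the range on which $g$ is concave. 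Either way, strict log-concavity makes $g'$ strictly decreasing, so $g'(t-s)>g'(t)$ for all $s>0$, hence $\partial_aR(b,a)>0$, $\phi_b'<0$ and $\partial^2_{12}q(b,r)<0$, uniformly in $b\le B$ and $r\in(0,r^\ast)$. The equivalent form \eqref{eqn:q.from.H} then follows by integrating in $r$: since $q(b,\bigR)-q(b,\smallR)=\int_{\smallR}^{\bigR}\partial_2q(b,\rho)\,d\rho$, we get $\partial_1\bigl[q(b,\bigR)-q(b,\smallR)\bigr]=\int_{\smallR}^{\bigR}\partial^2_{12}q(b,\rho)\,d\rho<0$, so the quantile gap is strictly decreasing in $b$ on $(-\infty,B]$. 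The main obstacle, as indicated, is controlling the score $g'$ --- i.e. log-concavity of $f_\eta$ --- uniformly over the unbounded range $b\le B$: \eqref{eqn:deriv} handles arguments far from the origin, while the moderate range must be absorbed either into a log-concavity hypothesis on $f_\eps$ or into the choice of $r^\ast(B)$.
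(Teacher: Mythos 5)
Your proposal is correct and follows essentially the same route as the paper's proof: implicit differentiation of $H(b,q(b,r))=r$ to reduce the sign of $\partial^2_{12}q$ to a pointwise sign condition on an integrand built from $f_\eta$, $f'_\eta$ and $f_\xi$, followed by verification of that sign in the regime where $q(b,r)-b$ is a large negative number via the tail asymptotics of Lemma \ref{lem:tail}. Your condition $g'(t-s)\ge g'(t)$ with $g=\log f_\eta$ is precisely the paper's condition $A(x)<0$ written in terms of the score function, so the log-concavity packaging is a cleaner conceptual gloss on the same computation rather than a genuinely different argument.
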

\begin{proof}
By definition,
\begin{equation}\label{eqn:Ht}
 H(b,q(b,r)) = r. 
 \end{equation}
We take derivatives of both sides, first with respect to $r$, then to $b$. Note that functions and derivatives of $H$ are always evaluated at $(b,q(b,r)) $ while those of $q$ are evaluated at $(b,r)$, so we suppress them in the notations. Differentiate both sides sof \eqref{eqn:Ht} with respect to $r$ gives
\begin{equation}\label{eqn:Ht.1}
\partial_2 H \cdot \partial_2 q = 1.
\end{equation}
Now, differentiating both sides of \eqref{eqn:Ht} with respect to $b$, we get
$$ \frac{\partial}{\partial b}H_1(b,q(b,r)) = \partial_1H + \partial_2 H\cdot \partial_1 q = 0, $$
therefore,
\begin{equation}\label{eqn:Ht.2}
\partial_1 q = \frac{-\partial_1 H}{\partial_2 H}.
\end{equation}
Differentiate \eqref{eqn:Ht.1} with respect to $b$ using implicit differentiation and chain rules, we get
\begin{align}
0 = \frac{\partial}{\partial b}(\partial_2 H \cdot \partial_2 q ) 
&= \frac{\partial}{\partial b}(\partial_2 H(b,q(b,r)) \cdot \partial_2q + \partial_2 H \cdot \partial^2_{12}q  \nonumber \\
&= (\partial^2_{12}H + \partial^2_{22}H \cdot \partial_1 q) \cdot \partial_2q + \partial_2H \cdot \partial^2_{12}q \nonumber \\
&= \frac{\partial^2_{12}H - \partial^2_{22}H \cdot \frac{\partial_1H}{\partial_2H}}{\partial_2H} + \partial_2H \cdot \partial^2_{12}q \quad \mbox{ by \eqref{eqn:Ht.1} and \eqref{eqn:Ht.2}.} 
\end{align}
Rearranging the last equation gives
\begin{equation}\label{eqn:h.ct}
\partial^2_{12}q = \frac{\partial^2_{22}H\cdot \partial_1H-\partial^2_{12}H\cdot \partial_2H}{(\partial_2H)^3}.
\end{equation}
 For fixed $b$, by definition of $H$, $\partial_2H$ is the density of $\xi_b$, so $\partial_2H > 0$. So $ \partial^2_{12}q(b,r) < 0$ if and only if 
 \begin{equation}\label{eqn:Htt}
(\partial^2_{22}H\partial_1H-\partial^2_{12}H\partial_2H)(b,q(b,r)) < 0.
\end{equation}
Now we compute each of the terms $\partial_2H,\partial_1H,\partial^2_{12}H$ and $\partial^2_{22}H$ in the left hand side of \eqref{eqn:Htt} explicitly in terms of the density $f_\eta$ of the noise difference $\eta = \eps_i-\eps_j$. Note that $\xi_b = \eta + (\xi \vee b)$ where $\xi := Z_i-Z_j$. 
Then we have for $\eps>0$ (see Figure~\ref{fig:illus1})
\begin{equation*}
H(b+\eps,a) - H(b,a) 
 =   \P(\eta+\xi\vee (b+\eps) \le a)-\P(\eta+\xi\vee b\le a)
\end{equation*}
\begin{equation*}
 =  \left\{
\begin{array}{lll}
     &  0 & \quad\mbox{ if } \xi> b+\eps,\\
     &   \P(\eta+ b+\eps \le a) -\P(\eta+b \le a) & \quad\mbox{ if } \xi\le b,\\
     &  \P(\eta+ b+\eps \le a) -\P(\eta+\xi \le a) & \quad\mbox{ if } b\le \xi\le b+\eps.
\end{array}
\right.
\end{equation*}
Since $\eta$ and $\xi$ are independent, this implies
\begin{equation}\label{eqn:hc.epsilon}
H(b+\eps,a) - H(b,a) = - \P(\xi \leq b) \P(a-b-\eps \leq \eta \leq a-b) + O(\eps^2). 
\end{equation}
Hence,
\begin{equation}\label{eqn:h.c}
\partial_1H(b,a) = \lim_{\eps \downarrow 0} \frac{H(b+\eps,a) - H(b,a)}{\eps} = -\P(\xi \leq b)f_\eta(a-b). 
\end{equation}
A similar calculation gives (see Figure~\ref{fig:illus2})
\begin{align}\label{eqn:H.a}
\partial_2H(b,a) &= \lim_{\eps \downarrow 0} \frac{H(b,a+\eps) - H(b,a)}{\eps} 
= \P(\xi \leq b)f_\eta(a-b) + \int_b^\infty f_\eta(a-x)f_\xi(x)\, dx \nonumber\\
& =: (-\partial_1H + A)(b,a). 
\end{align}

\begin{figure}[h!]
\begin{adjustwidth}{-25pt}{-25pt}
\centering
\begin{minipage}{.48\textwidth}
  \centering
\includegraphics{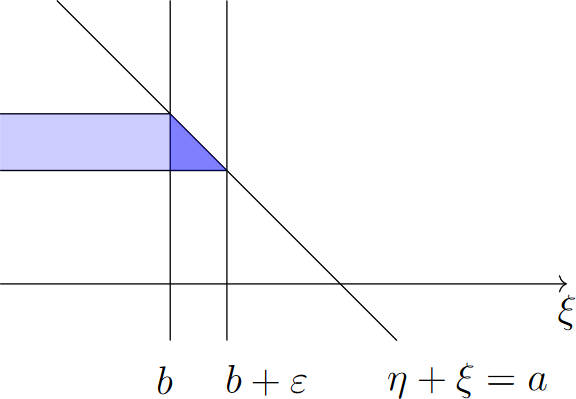}
  \caption{$H(b+\varepsilon,a)-H(b,a)$ is the probability that $(\xi,\eta)$ lies in the shaded regions (light and dark shaded)}\label{fig:illus1}
\end{minipage}
\hspace{0.3cm}
\begin{minipage}{.48\textwidth}
  \centering
\includegraphics{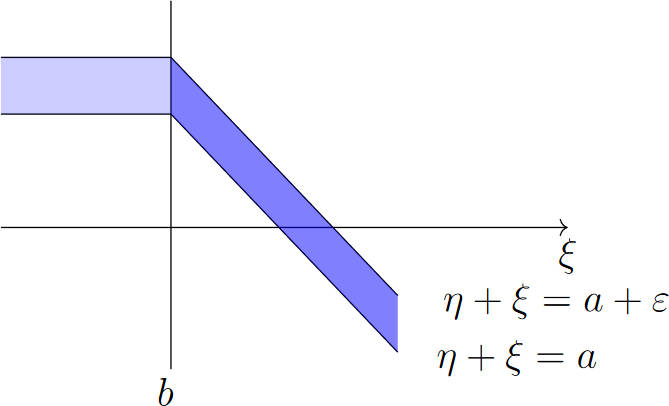}
\caption{$H(b,a+\varepsilon)-H(b,a)$ is the probability that $(\xi,\eta)$ lies in the shaded region (light and dark shaded)}\label{fig:illus2}
\end{minipage}

\end{adjustwidth}
\end{figure}

Thus, 
\begin{align*}
\partial^2_{12}H(b,a) &= -\P(\xi \leq b)f'_\eta(a-b), \\
\partial^2_{22}H(b,a) &= \P(\xi \leq b)f'_\eta(a-b) + \int_b^\infty f'_\eta(a-x)f_\xi(x)\, dx \\
&= -\partial^2_{12}H(b,a) + \int_b^\infty f'_\eta(a-x)f_\xi(x)\, dx. \\
&= (-\partial^2_{12}H + A_a)(b,a).
\end{align*}
Now we have
\begin{align*}
& (\partial^2_{22}H\partial_1H-\partial^2_{12}H\partial_2H)(b,a) 
= (A_a\partial_1H -A\partial^2_{12}H)(b,a) \\
=& \P(\xi \leq b)  \left( - f_\eta(a-b) \int_b^\infty f'_\eta(a-x)f_\xi(x)\, dx  + f'_\eta(a-b)\int_b^\infty f_\eta(a-x)f_\xi(x)\, dx \right).
\end{align*}
Thus, \eqref{eqn:Htt} holds if and only if
\begin{equation}\label{eqn:Htt.f}
-  f_\eta(q(b,r)-b) \int_b^\infty f'_\eta(q(b,r)-x)f_\xi(x)\, dx  + f'_\eta(q(b,r)-b)\int_b^\infty f_\eta(q(b,r)-x)f_\xi(x)\, dx <  0. 
\end{equation}
Now we need to show that for each constant $B$, there exists an $r^\ast(B) > 0$ such that for all $r < r^\ast$ and $b \leq B$, \eqref{eqn:Htt.f} holds. Fix the constant $B$. Since the noise $\eps$ has no upper bound, $\eta$ and hence $\xi_b$ have unbounded support below. For each fixed $b$, $q(b,r) \to -\infty$ as $r \downarrow 0$. \JB{Therefore, there exists some sufficiently small $r^\ast > 0$ such that for all $r < r^\ast$, $q(b,r)-b$ is a large negative number. Fix such an $r^\ast$.} Therefore, for all $x > b$, $q(b,r)-x$ is a large negative number. This allows us to use Lemma \ref{lem:tail} to make the left hand side of \eqref{eqn:Htt.f} explicit. In particular, by \eqref{light.tail}, as $t \to \infty$,
\begin{align}\label{eqn:decrease}
f_\eta(t)= K_1 t^{1-p/2} e^{-K t^p}(1+o(1)),\quad f_\eta'(t)=f_\eta(t)\Big(-K p t^{p-1} + (1-{p}/{2}) t^{-1}\Big)(1+o(1)).
\end{align}
Setting $t = |x-q(b,r)|$ and use the fact that $f_\eta$ is symmetric, $f_\eta(q(b,r)-x) = f_\eta(|x-q(b,r)|)$, we have
\begin{align*}
& - f_\eta(q(b,r)-b) \int_b^\infty f'_\eta(q(b,r)-x)f_\xi(x)\, dx  + f'_\eta(q(b,r)-b)\int_b^\infty f_\eta(q(b,r)-x)f_\xi(x) dx
\\
=& f_\eta(q(b,r)-b) \int_b^\infty [-K p(x-q(b,r))^{p-1} + (1-\frac{p}2)(x-q(b,r))^{-1}] f_\eta(q(b,r)-x)f_\xi(x) dx\\  
& + [K p(b-q(b,r))^{p-1} - (1-\frac{p}2)(b-q(b,r))^{-1}] f_\eta(q(b,r)-b)\int_b^\infty f_\eta(q(b,r)-x)f_\xi(x) dx \\
=& f_\eta(q(b,r)-b) \int_b^\infty  A(x) f_\eta(q(b,r)-x)f_\xi(x)dx,
\end{align*}
where
$$ A(x) = \Big(K p [(b-q(b,r))^{p-1}-(x-q(b,r))^{p-1}] - (1-p/2)[(b-q(b,r))^{-1}-(x-q(b,r))^{-1}] \Big). $$
\JB{If $p \in (1,2]$, since $x > b$, the first term $(b-q(b,r))^{p-1}-(x-q(b,r))^{p-1}$ and the second term $-(1-p/2)[(b-q(b,r))^{-1}-(x-q(b,r))^{-1}]$ are both negative. If $p>2$, since $x > b \gg q(b,r)$ the first term $(b-q(b,r))^{p-1}-(x-q(b,r))^{p-1}$ is a large negative number. Since $q(b,r)$ is a large negative number, $b-q(b,r)$ is a large positive number, so $(b-q(b,r))^{-1}, (x-q(b,r))^{p-1} < 1$. Thus $|(1-p/2)[(b-q(b,r))^{-1}-(x-q(b,r))^{-1}]| \leq |1-p/2|$. For this reason, $A(x) < 0$ for all $p>1$, $x > b$, while $f_\eta, f_\xi >0$ everywhere as they are densities.} Thus the integral is negative, that is, \eqref{eqn:Htt.f} holds for all $r \in (0,r^\ast)$ and $b \leq B$, as needed. 
\end{proof}

Below we denote $ch(j)$ the child of node $j$.

 \begin{corollary}\label{cor:variation} 
 Under the Gumbel-Gaussian noise model, there exists an $r_1^\ast > 0$ such that: for all $0 < \smallR < \bigR < r_1^\ast$, for all $j \in V$ and for all $i' \in V, i' \neq j, ch(j)$  and either $j \rightsquigarrow i'$ or $j \not\sim i'$, then
$$ w_{ch(j)\,j} < w_{i'j}. $$
\end{corollary}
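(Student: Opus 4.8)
The plan is to apply Proposition~\ref{prop:compare.X} to translate each lower quantile gap $w_{i'j}$ into a quantity of the form $q(b,\bigR) - q(b,\smallR)$, and then invoke Lemma~\ref{lem:variation} to compare these gaps across different values of $b$. Concretely, fix $j \in V$ and let $i = ch(j)$ be the child of $j$, so that $j \to i$ is a true edge and in particular $j \rightsquigarrow i$. By case~(1) of Proposition~\ref{prop:compare.X}, $w_{ij} = q(b_i, \bigR) - q(b_i, \smallR)$ where $b_i = \beta(\log\theta_j - \log(\theta_i - \theta_j))$. Now take any $i' \neq j, ch(j)$ with either $j \rightsquigarrow i'$ or $j \not\sim i'$. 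In the first sub-case, case~(1) of Proposition~\ref{prop:compare.X} again gives $w_{i'j} = q(b_{i'}, \bigR) - q(b_{i'}, \smallR)$ with $b_{i'} = \beta(\log\theta_j - \log(\theta_{i'} - \theta_j))$; in the second sub-case, case~(2) gives $w_{i'j} = q(-\infty, \bigR) - q(-\infty, \smallR)$, i.e. $b_{i'} = -\infty$.

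The key comparison is then $b_{i'} < b_i$ in all cases. For the $j \not\sim i'$ sub-case this is immediate since $b_{i'} = -\infty$. For the $j \rightsquigarrow i'$ sub-case with $i' \neq ch(j)$, the unique directed path from $j$ to $i'$ passes through $ch(j)$ and has strictly more edges, so by \eqref{eqn:theta.i} and transitivity of $C^\ast$ the constant $\theta_{i'}$ is strictly larger than $\theta_i$ (more ancestors contribute to the sum, or the path is longer — one should verify via \eqref{eqn:theta.i} and \eqref{eqn:C.true.t} that $\theta_{i'} - \theta_j > \theta_i - \theta_j > 0$, which forces $\log(\theta_{i'}-\theta_j) > \log(\theta_i - \theta_j)$ and hence $b_{i'} < b_i$). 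I would then take $B$ to be any finite upper bound for the finitely many constants $b_i$, $i \in V$ (for instance $B := \max_{j} \beta(\log\theta_j - \log(\theta_{ch(j)} - \theta_j))$), apply Lemma~\ref{lem:variation} with this $B$ to obtain $r_1^\ast := r^\ast(B) > 0$, and conclude that for all $0 < \smallR < \bigR < r_1^\ast$ we have
\[
w_{ch(j)\,j} = q(b_i,\bigR) - q(b_i,\smallR) < q(b_{i'},\bigR) - q(b_{i'},\smallR) = w_{i'j},
\]
using \eqref{eqn:q.from.H} (which holds verbatim for $b' = -\infty$ by taking a limit, since $q(-\infty, \cdot)$ is the quantile function of a genuine distribution and the inequality $\partial^2_{12}q < 0$ degenerates monotonically).

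The main obstacle I expect is the bookkeeping on the $\theta$ constants: showing cleanly that $\theta_{i'} - \theta_j > \theta_{ch(j)} - \theta_j$ whenever $j \rightsquigarrow i'$ and $i' \neq ch(j)$. This requires unpacking \eqref{eqn:theta.i} — $\theta_i = \sum_{k \rightsquigarrow i} e^{-t^\ast_k/\beta}$ — and noting that the ancestor set of $i'$ strictly contains the ancestor set of $ch(j)$ (every ancestor of $ch(j)$ is an ancestor of $i'$, and $ch(j)$ itself is an ancestor of $i'$ but not of $ch(j)$), so $\theta_{i'} > \theta_{ch(j)}$, and subtracting $\theta_j$ preserves the strict inequality; one also needs $\theta_{ch(j)} > \theta_j$, which holds since the ancestor set of $ch(j)$ strictly contains that of $j$ (it includes $ch(j)$), so that the arguments of the logarithms are positive and the formula for $b$ in Proposition~\ref{prop:compare.X} is well-defined. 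A secondary subtlety is justifying that Lemma~\ref{lem:variation}'s conclusion \eqref{eqn:q.from.H} extends to $b' = -\infty$; this follows because $\xi_{-\infty} = (\eps_i - \eps_j) + (Z_i - Z_j)$ is an honest random variable with a continuous distribution, so $q(-\infty, r)$ is well-defined, and the monotonicity argument in the proof of Lemma~\ref{lem:variation} (via $\partial^2_{12}q < 0$ for $b$ in a neighbourhood of $-\infty$, or directly by a limiting argument letting $b' \to -\infty$ in \eqref{eqn:q.from.H}) carries over.
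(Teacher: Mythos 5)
Your proposal is correct and follows essentially the same route as the paper's proof: translate both gaps via Proposition~\ref{prop:compare.X} into $q(b,\bigR)-q(b,\smallR)$, show $b_{i'} < b_{ch(j)}$ in both sub-cases (using $\theta_{i'} > \theta_{ch(j)}$ for descendants and $b_{i'}=-\infty$ for unrelated nodes), and conclude by the monotonicity \eqref{eqn:q.from.H} from Lemma~\ref{lem:variation}. The only cosmetic difference is that the paper chooses $r^\ast(j)$ per node and takes the minimum, whereas you take a single global bound $B$; your extra care about $\theta_{ch(j)}>\theta_j$ and the $b'=-\infty$ degenerate case is consistent with, and slightly more explicit than, the paper's treatment.
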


\begin{proof}
It is sufficient to show that the above holds with some constant $r^\ast(j)$ for each fixed $j$, then set $r_1^\ast = \min_j r^\ast(j)$. Fix $j$ and $i'$ as stated. Let $b^\ast := \beta (\log \theta_j - \log(\theta_{ch(j)}-\theta_j))$, and let $r^\ast(j)$ be the constant $r^\ast$ that works for $B = b^\ast$ in Lemma \ref{lem:variation}. By Proposition \ref{prop:compare.X},
$$ w_{ch(j)\,j} = q(b^\ast,\bigR) - q(b^\ast,\smallR). $$
Now we consider two cases. \\
Case 1: $i'$ is a descendant of $j$, that is, $j \rightsquigarrow i'$. Then by Proposition \ref{prop:compare.X},
$$w_{ji'} = q(b,\bigR) - q(b,\smallR) $$
where $b =\beta(\log\theta_j - \log(\theta_{i'}-\theta_j))$. But since $i' \neq ch(j)$, $i'$ must be a descendant of $i$ as well. By definition \eqref{eqn:theta.i}, $i \rightsquigarrow i'$ implies $\theta_{i'} > \theta_i$. Therefore, $b < b^\ast$, so by \eqref{eqn:q.from.H}, $w_{ij} < w_{i'j}$. This concludes case 1. \\
Case 2: $j \not\sim i'$. Then by Proposition \ref{prop:compare.X}, 
$$ w_{i'j} = q(-\infty,\bigR) - q(-\infty,\smallR).$$
Since $-\infty < b^\ast$, so by \eqref{eqn:q.from.H}, $w_{ij} < w_{i'j}$. This concludes case 2.
\end{proof}

\begin{lemma}\label{lem:want}
There exists some $r^\ast_2 > 0$ such that for all $0 < \smallR < \bigR < r^\ast_2$, for all $j \in V$, $i' \rightsquigarrow j$ implies
\begin{equation}\label{eqn:want}
q(b,\bigR) - q(b,\smallR) < q(b',1-\smallR) - q(b',1-\bigR),
\end{equation}
where $b = \beta(\log\theta_j - \log(\theta_{ch(j)}-\theta_j))$ and $b' = \beta(\log\theta_{i'} - \log(\theta_j - \theta_{i'}))$. In particular, if $i' \rightsquigarrow j$, then for all quantile levels $\smallR,\bigR$ such that $0 < \smallR < \bigR < r^\ast_2$,
\begin{align}\label{eqn:lemma6}
     w_{ch(j)\, j} < w_{i'j}.
\end{align}
\end{lemma}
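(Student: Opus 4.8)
The plan is to reduce \eqref{eqn:want} to a pointwise comparison of densities in the tails, and then to exploit the asymmetry between the \emph{lower} tail of $\xi_b$, which is governed by the super‑light noise difference $\eta$, and the \emph{upper} tail of $\xi_{b'}$, which is governed by the heavier signal difference $\xi$ (here $\xi=Z_i-Z_j$ and $\eta=\eps_i-\eps_j$ as in Lemma~\ref{lem:tail}).

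First I record what is being compared. Since $j\to ch(j)$ is a true edge and $i'\rightsquigarrow j$, Proposition~\ref{prop:compare.X} gives $w_{ch(j)\,j}=q(b,\bigR)-q(b,\smallR)$ with $b=\beta(\log\theta_j-\log(\theta_{ch(j)}-\theta_j))$ (case~(1)) and $w_{i'j}=q(b',1-\smallR)-q(b',1-\bigR)$ with $b'=\beta(\log\theta_{i'}-\log(\theta_j-\theta_{i'}))$ (case~(3)); so \eqref{eqn:want} is exactly $w_{ch(j)\,j}<w_{i'j}$, i.e.\ \eqref{eqn:lemma6}. As $(i',j)$ ranges over the finitely many ordered pairs with $i'\rightsquigarrow j$, the numbers $b,b'$ take only finitely many finite values, so it is enough to prove the claim for a fixed pair with its own threshold and then take the minimum. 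Write $f^b:=\partial_2H(b,\cdot)$ for the (continuous, strictly positive) density of $\xi_b$; the change of variables $s=H(b,a)$ gives
\begin{equation*}
q(b,\bigR)-q(b,\smallR)=\int_{\smallR}^{\bigR}\frac{ds}{f^b(q(b,s))},\qquad q(b',1-\smallR)-q(b',1-\bigR)=\int_{\smallR}^{\bigR}\frac{ds}{f^{b'}(q(b',1-s))},
\end{equation*}
so it suffices to find an $r^\ast_2>0$, independent of the pair, with
\begin{equation}\label{eqn:ptwise.cmp}
f^b(q(b,s))>f^{b'}(q(b',1-s))\qquad\text{for all }s\in(0,r^\ast_2);
\end{equation}
integrating \eqref{eqn:ptwise.cmp} over $s\in(\smallR,\bigR)$ then yields \eqref{eqn:want}.

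To prove \eqref{eqn:ptwise.cmp} I would establish two tail estimates. For the lower tail of $\xi_b=\eta+(\xi\vee b)$: the law of $\xi\vee b$ has an atom of mass $F_\xi(b):=\P(\xi\le b)$ at $b$ and density $f_\xi$ above $b$, hence $f^b(a)=F_\xi(b)f_\eta(a-b)+\int_b^\infty f_\eta(a-u)f_\xi(u)\,du$, and as $a\to-\infty$ the integral is $o(f_\eta(b-a))$ because the hazard rate of $\eta$ blows up, $f_\eta(t)/\overline{F}_\eta(t)\sim Kp\,t^{p-1}\to\infty$ with $\overline{F}_\eta(t):=\P(\eta>t)$ (Lemma~\ref{lem:tail} and a routine integration, using $p>1$). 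By symmetry of $\eta$ this gives $f^b(a)\sim F_\xi(b)f_\eta(b-a)$ and, by the same argument applied to $H$, $H(b,a)\sim F_\xi(b)\overline{F}_\eta(b-a)$ as $a\to-\infty$. Since $\xi_b$ has unbounded support below, $q(b,s)\to-\infty$ as $s\downarrow0$, so
\begin{equation*}
\frac{f^b(q(b,s))}{s}=\frac{f^b(q(b,s))}{H(b,q(b,s))}\sim\frac{f_\eta(b-q(b,s))}{\overline{F}_\eta(b-q(b,s))}\sim Kp\,(b-q(b,s))^{\,p-1}\longrightarrow\infty\quad(s\downarrow0).
\end{equation*}
For the upper tail of $\xi_{b'}$: here $\xi$ has the logistic tail $\overline{F}_\xi(t)\sim e^{-t/\beta}$ and $\eta$ is light with $M:=\E[e^{\eta/\beta}]<\infty$, so a Breiman‑type computation (substitute $u=a-t$ in $\int_{b'}^\infty\overline{F}_\eta(a-u)f_\xi(u)\,du$; the bulk $t\le a/2$ produces the constant $M$, while $t>a/2$ is negligible because $p>1$) yields $\P(\xi_{b'}>a)\sim M e^{-a/\beta}$ and $f^{b'}(a)\sim\beta^{-1}M e^{-a/\beta}$ as $a\to\infty$ (the atom term $F_\xi(b')\overline{F}_\eta(a-b')$ is of smaller order, again by $p>1$). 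Consequently $f^{b'}(q(b',1-s))/s\to\beta^{-1}$ as $s\downarrow0$.

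Combining the two displays, $f^b(q(b,s))/f^{b'}(q(b',1-s))\to\infty$ as $s\downarrow0$; since there are only finitely many pairs $(i',j)$ with $i'\rightsquigarrow j$, one $r^\ast_2>0$ makes \eqref{eqn:ptwise.cmp} hold on $(0,r^\ast_2)$, which proves \eqref{eqn:want} and hence \eqref{eqn:lemma6}. I expect the main obstacle to be the careful proof of the two tail estimates — especially the upper‑tail convolution asymptotic $\P(\xi_{b'}>a)\sim M e^{-a/\beta}$ and its density analogue — and verifying that the threshold is uniform over the finitely many values of $b,b'$; the latter is automatic because the leading constants in the estimates above do not depend on $b$ or $b'$.
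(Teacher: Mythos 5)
Your proof is correct and follows essentially the same route as the paper's: both reduce \eqref{eqn:want} to comparing the density of $\xi_b$ at its lower $s$-quantile with that of $\xi_{b'}$ at its upper $(1-s)$-quantile, and both exploit the asymmetry from Lemma~\ref{lem:tail} between the super-light lower tail (driven by $\eta$) and the exponential upper tail (driven by $\xi$). Your normalization by $s$ (i.e.\ comparing the hazard-rate-type ratios $f^b(q(b,s))/H(b,q(b,s))\to\infty$ versus $f^{b'}(q(b',1-s))/(1-H(b',q(b',1-s)))\to 1/\beta$) makes the final quantile-level comparison more transparent than the paper's pointwise bound $f_{\xi_b}(-a)=O(f_{\xi_{b'}}(a))$, but it is the same underlying idea.
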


\begin{proof}
It is sufficient to prove that \eqref{eqn:want} holds for each fixed $j$ with some constant $r^\ast_2(j)$, and then set $r^\ast_2 = \min_j r^\ast_2(j)$. Fix $j$. First, we do some manipulations on \eqref{eqn:want} to relate it to the partial derivatives of $H$. Define
\begin{equation}\label{eqn:cal.C}
\mathcal{B} := \{\beta (\log\theta_j - \log(\theta_i-\theta_j)): i,j \in V, j \rightsquigarrow i\}.
\end{equation}
Note that \eqref{eqn:want} is equivalent to 
\begin{equation}\label{eqn:want.2}
\partial_2 q(b,r) < \partial_2 q(b',1-r) \quad \mbox{ for all $r \in (0,r^\ast_2)$  and for all $b,b' \in \mathcal{B}$.}
\end{equation}
By \eqref{eqn:Ht.1}, we have
$$  \partial_2 q(b,r) - \partial_2 q(b',1-r) = \frac{1}{\partial_2H(b,q(b,r))} - \frac{1}{\partial_2H(b',q{(b',1-r))}}.$$
By \eqref{eqn:H.a}, $\partial_2H > 0$ point-wise, thus our goal now is to show that for sufficiently small $r$,
\begin{equation}\label{eqn:want.3}
\partial_2H(b',q(b',{1-r})) - \partial_2H(b,q(b,r)) \JB{<}  0
\end{equation}
for all $b,b' \in \mathcal{B}$, that is, some finite set of constants.\JB{ We shall do this by writing $\partial_2H$ in terms of the tail densities $f_\eta$ and $f_\xi$ using \eqref{eqn:H.a}, then apply Lemma \ref{lem:tail}. 
Indeed, by \eqref{eqn:H.a},
\begin{align*}
\partial_2H(b',a) &= \P(\xi \leq b')f_\eta(a-b') + \int_{b'}^\infty f_\eta(a-x)f_\xi(x)\, dx 
\end{align*}
By Lemma \ref{lem:tail}, $f_\xi$ has heavier tail than $f_\eta$, so for $a \to \infty$, the main contribution from $\int_{b'}^\infty f_\eta(a-x)f_\xi(x)\, dx$ comes from $f_\xi (a)$. That is, for large $a$, there exists some constant $b_1 > 0$ such that
\begin{equation}\label{eqn:H.big}
\partial_2H(b',a) > b_1f_\xi(a).
\end{equation}
Now we consider $\partial_2H(b,-a)$. From \eqref{eqn:H.a},
\begin{align*}
\partial_2H(b,-a) = \P(\xi \leq b)f_\eta(-a-b) + \int_b^\infty f_\eta(-a-x)f_\xi(x)\, dx.
\end{align*}
Again, for large $a$
$$ f_\eta(-a-x) < f_\eta(-a-b) \mbox{ for all } x > b. $$
Therefore, we can bound the second term above as
$$ \int_b^\infty f_\eta(-a-x)f_\xi(x)\, dx < f_\eta(-a-b)\int_b^\infty f_\xi(x)\, dx = f_\eta(-a-b)\P(\xi > b). $$
Adding in the first term, we get that for large $a$,
$$ \partial_2H(b,-a) < f_\eta(-a-b) $$
Combining this with \eqref{eqn:H.big} and noting that $\partial_2H(b,a)$ is just the density $f_{\xi_{b}}(a)$ of $\xi_{b}$, we get
\begin{align}\label{eq:taildensities}
    f_{\xi_{b}}(-a)=O(f_{\xi_{b'}}(a))
\end{align}
for all $b,b' \in \mathcal{B}$ and $a$ large. Now $\partial_2H(b,q(b,{r}))$ is just the slope of the cdf of $f_{\xi_{b}}$ at its $r$-quantile. Therefore, for $r$ small, by \eqref{eq:taildensities},  $\partial_2H(b',q(b',{1-r})) < \partial_2H(b,q(b,r))$ which proves \eqref{eqn:want.3} and thus completes the proof of \eqref{eqn:want}. The last statement follows from Proposition \ref{prop:compare.X}, case (3). }
\end{proof}

\begin{corollary}\label{cor:recap}
If the true quantiles are known, then there exist some choices of $(\smallR,\bigR)$ such that the lower quantile gap matrix $W$ satisfies the conditions of Lemma~\ref{lem:w.good}, that is, \eqref{eqn:w.ij.correct} and \eqref{eqn:min.root}.
\end{corollary}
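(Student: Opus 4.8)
The plan is to assemble the two preceding results, Corollary~\ref{cor:variation} and Lemma~\ref{lem:want}, since between them they already carry all the analytic content; the corollary itself is essentially bookkeeping. First I would put $r^\ast := \min\{r_1^\ast, r_2^\ast\}$, where $r_1^\ast$ is the constant furnished by Corollary~\ref{cor:variation} and $r_2^\ast$ the one furnished by Lemma~\ref{lem:want}, fix an arbitrary pair of quantile levels $0 < \smallR < \bigR < r^\ast$, and let $W = (w_{ij})$ be the lower quantile gap matrix built from the true distributions of $X_i - X_j$ at these levels. It then remains to verify the two hypotheses \eqref{eqn:w.ij.correct} and \eqref{eqn:min.root} of Lemma~\ref{lem:w.good}.

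For \eqref{eqn:w.ij.correct} I would fix a true edge $j \to i$, so that $i = ch(j)$, and take any $i' \in V$ with $i' \neq i, j$. Because $\T$ is a root-directed tree and $i' \neq j$, exactly one of the three relations $j \rightsquigarrow i'$, $j \not\sim i'$, $i' \rightsquigarrow j$ holds, and in the last of these $i' \neq ch(j)$ automatically. In the first two cases Corollary~\ref{cor:variation} gives $w_{ch(j)\,j} < w_{i'j}$, and in the third case this is exactly \eqref{eqn:lemma6} of Lemma~\ref{lem:want}. Hence $w_{ij} < w_{i'j}$ for every such $i'$, which is \eqref{eqn:w.ij.correct}.

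For \eqref{eqn:min.root} I would write $i^\ast$ for the root and use Proposition~\ref{prop:compare.X} to make the two sides explicit. Every true edge $j \to i$ has $i = ch(j)$, so by case~(1) of that proposition $w_{ij} = q(b_j,\bigR) - q(b_j,\smallR)$ with $b_j = \beta(\log\theta_j - \log(\theta_{ch(j)} - \theta_j))$, and $b_j \in \mathcal{B}$ (the set defined in \eqref{eqn:cal.C}) because $j \rightsquigarrow ch(j)$. On the other hand every $i' \neq i^\ast$ satisfies $i' \rightsquigarrow i^\ast$, so by case~(3) of the same proposition $w_{i'i^\ast} = q(b'_{i'},1-\smallR) - q(b'_{i'},1-\bigR)$ with $b'_{i'} = \beta(\log\theta_{i'} - \log(\theta_{i^\ast} - \theta_{i'})) \in \mathcal{B}$. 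The proof of Lemma~\ref{lem:want} established \eqref{eqn:want.2}, i.e.\ $\partial_2 q(b,r) < \partial_2 q(b',1-r)$ for all $b, b' \in \mathcal{B}$ and all $r \in (0,r_2^\ast)$; integrating this inequality over $r \in [\smallR, \bigR]$ (with the substitution $s = 1-r$ on the right) yields $q(b_j,\bigR) - q(b_j,\smallR) < q(b'_{i'},1-\smallR) - q(b'_{i'},1-\bigR)$, that is, $w_{ij} < w_{i'i^\ast}$, for every true edge $j \to i$ and every $i' \neq i^\ast$. Taking the maximum over true edges on the left and the minimum over $i'$ on the right, both over finite index sets, gives $\max_{j \to i} w_{ij} < \min_{i'} w_{i'i^\ast}$, which is \eqref{eqn:min.root}. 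Lemma~\ref{lem:w.good} then delivers the conclusion, and combined with the a.s.\ convergence of the empirical quantiles to the true ones (as sketched earlier in this section) it is the deterministic step behind Theorem~1(a).

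I do not expect a real obstacle here, only one delicate point: the bookkeeping in \eqref{eqn:min.root}. One has to be sure that the trichotomy used above is genuinely exhaustive on a root-directed tree, that the sub-case $i = ch(j)$ is excluded in precisely the right places, and, most importantly, that the truncation levels $b$ associated with the edges pointing into the root all lie in the finite set $\mathcal{B}$, so that the uniform inequality \eqref{eqn:want.2}, which was proved only for $b, b' \in \mathcal{B}$, may legitimately be invoked. All the substantive analysis (the monotonicity of the lower quantile gap in the truncation level $b$, and the lightness of the noise tail relative to the signal tail) has already been carried out in Lemmas~\ref{lem:variation} and~\ref{lem:tail} and exploited in Corollary~\ref{cor:variation} and Lemma~\ref{lem:want}.
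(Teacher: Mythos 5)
Your proposal is correct and follows essentially the same route as the paper: set $r^\ast = \min(r_1^\ast, r_2^\ast)$ and verify \eqref{eqn:w.ij.correct} via the trichotomy (Corollary~\ref{cor:variation} for $j \rightsquigarrow i'$ and $j \not\sim i'$, inequality \eqref{eqn:lemma6} of Lemma~\ref{lem:want} for $i' \rightsquigarrow j$) and \eqref{eqn:min.root} from the fact that every $i' \neq i^\ast$ is an ancestor of the root. Your verification of \eqref{eqn:min.root} is more explicit than the paper's one-sentence justification, but it rests on exactly the same ingredient, namely that \eqref{eqn:want.2} was established uniformly over all pairs $b, b' \in \mathcal{B}$, so no new idea is involved.
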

\begin{proof}
Set $r^\ast = \min(r^\ast_1,r^\ast_2)$ where $r^\ast_1$ comes from Corollary \ref{cor:variation}, and $r^\ast_2$ comes from Lemma~\ref{lem:want}. Let $(\smallR,\bigR)$ be any pair such that $0 < \smallR < \bigR < r^\ast$, and let $W$ be the corresponding lower quantile gap matrix with the true quantiles. 
Then \eqref{eqn:min.root} holds because of \eqref{eqn:lemma6} and the fact that for the root $r$ of the root-directed spanning tree, $i' \rightsquigarrow r$ holds for every $i' \neq r$. Corollary \ref{cor:variation} and Lemma~\ref{lem:want} together guarantee that \eqref{eqn:w.ij.correct} is satisfied for $W$. 
\end{proof}

\subsection*{Proof of Theorem~\ref{thm:main} for the lower quantile gap} 

Fix $(\smallR,\bigR)$ such that Corollary \ref{cor:recap} holds, and let $W$ be the corresponding lower quantile gap matrix derived from the true quantiles. Let $W_n$ be the lower quantile gap matrix derived from an empirical distribution with sample size $n$.
Note that the set of `good' matrices, that is, those that satisfy Lemma \ref{lem:w.good}, is an open polyhedral cone in the space of matrices $\R^{V \times V}$, since the conditions of `goodness' is a set of linear inequalities. By Corollary \ref{cor:recap}, $W$ is a point inside this cone. 
Recall that empirical quantiles converge a.s. as $n\to\infty$ to the true ones for continuous limit distributions, hence, also the empirically-derived lower quantile gap converges a.s..
By a union bound over the $d^2-d$ possible edge pairs $(i,j)$, for any metric $D$ (e.g. induced by a matrix norm), we thus have $D(W_n,W)\to 0$ a.s. 
The Consistency Theorem then follows from Lemma~\ref{lem:w.good}. \qed

\subsection{Proof of Theorem~\ref{thm:main} for the quantile-to-mean gap}

Our proof follows the same structure as the previous proof, but the calculations in all steps are a bit simpler, since there is only one quantile parameter to deal with. 
First, expectation is linear, so we work with empirical means $\bar{X}_i$ for $i\in V$ and mention in passing that they converge a.s. to the true mean as $n\to\infty$.
The analogue of Proposition~\ref{prop:compare.X} is the following.

\begin{proposition}\label{prop:compare.X.2}
Fix $\smallR \in [0,1)$, and let $w_{ij}$ be the quantile-to-mean gap \eqref{eqn:wij.tail.mean2}. Assume the Gumbel-Gaussian noise model. Then 
\begin{enumerate}
	\item[{\rm (1)}] 
	If $j \rightsquigarrow i$, then $w_{ij} =- q_{\smallR}(\xi^b)$ where $b=\beta (\log\theta_j- \log(\theta_i-\theta_j))$.
	\item[{\rm (2)}] 
	If $j \not\sim i$, then $w_{ij} =- q_{\smallR}(\xi^b)$ where $b=-\infty$.
	\item[{\rm (3)}] 
	If $i \rightsquigarrow j$, then $w_{ij} =  q_{1-\smallR}(\xi^b)$ where
	$ b = \beta (\log\theta_i-  \log(\theta_j-\theta_i))$.
\end{enumerate}
\end{proposition}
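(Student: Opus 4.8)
The plan is to reuse, essentially verbatim, the machinery of the proof of Proposition~\ref{prop:compare.X}, only replacing the (translation-invariant) lower quantile gap by the (equally translation-invariant) quantile-to-mean gap. Recall that the map $Y \mapsto \E(Y) - q_\smallR(Y)$ is unchanged under $Y \mapsto Y + c$; hence, exactly as in Proposition~\ref{prop:compare.X}, I would first reduce to the recentred variables $\bar X_i := X_i - t^\ast_i$, for which \eqref{eqn:bar.X} gives $\bar X_i = \bigvee_{k \rightsquigarrow i} \bar Z_k$ with $\bar Z_k$ translated independent Gumbel$(\beta)$. Thus it suffices to compute $\E(\bar X_i - \bar X_j) - q_\smallR(\bar X_i - \bar X_j)$ in the three cases, reusing the split of $\bar X_i$ and $\bar X_j$ into the independent maxima $S_i, S_j$ of \eqref{eqn:sisj}.

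For case~(1), $j \rightsquigarrow i$, the proof of Proposition~\ref{prop:compare.X} already establishes $\bar X_i - \bar X_j \eqd \xi_b - b$, where $\xi_b$ is as in \eqref{eqn:xic} and $b = \beta(\log\theta_j - \log(\theta_i - \theta_j))$; translation invariance then gives $w_{ij} = \E(\xi_b) - q_\smallR(\xi_b) = -q_\smallR(\xi^b)$, reading $\xi^b$ as the mean-zero recentring $\xi_b - \E(\xi_b)$. Case~(2), $j \not\sim i$, is the specialization $b = -\infty$, where moreover $\E(\xi_{-\infty}) = \E(\eps_i - \eps_j) + \E(Z_i - Z_j) = 0$, so $w_{ij} = -q_\smallR(\xi^{-\infty})$ holds with $\xi^{-\infty} = \xi_{-\infty}$. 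For case~(3), $i \rightsquigarrow j$, I would apply case~(1) to the reversed pair to get $\bar X_j - \bar X_i \eqd \xi_{b'} - b'$ with $b' = \beta(\log\theta_i - \log(\theta_j - \theta_i))$ — using that $Z_i - Z_j$ and $\eps_i - \eps_j$ are symmetric, so the $\xi_{b'}$ assembled from the ordered pair $(j,i)$ has the same law as the generic $\xi_{b'}$ — hence $\bar X_i - \bar X_j \eqd b' - \xi_{b'}$. Applying translation invariance together with the reflection identity $q_\smallR(-Y) = -q_{1-\smallR}(Y)$ then yields $w_{ij} = q_{1-\smallR}(\xi_{b'}) - \E(\xi_{b'}) = q_{1-\smallR}(\xi^{b'})$, as claimed.

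There is no new analytical obstacle here: the only real content — the identity $\bar X_i - \bar X_j \eqd \xi_b - b$ together with the independence and Gumbel computations for $S_i, S_j$ — is inherited from Proposition~\ref{prop:compare.X}. The one point requiring care is the sign-and-level flip in case~(3): unlike the lower quantile gap, the quantile-to-mean gap is \emph{not} symmetric under $Y \mapsto -Y$ (it is a left-tail functional), so exchanging the roles of an ancestor and a descendant turns the lower $\smallR$-quantile into the upper $(1-\smallR)$-quantile. This asymmetry — the same one visible in Figure~\ref{fig:histogram} and later exploited in the analogues of Corollary~\ref{cor:variation} and Lemma~\ref{lem:want} — is precisely what lets \QTree\ tell an ancestor of $j$ from the child of $j$.
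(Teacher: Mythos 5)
Your proposal is correct and follows exactly the route the paper intends: the paper itself declines to write out this proof, stating that it would merely duplicate the arguments of Proposition~\ref{prop:compare.X}, and your write-up is precisely that duplication, carried out carefully via translation invariance of the quantile-to-mean gap and the reflection identity $q_{\smallR}(-Y)=-q_{1-\smallR}(Y)$ for case~(3). Your reading of $\xi^b$ as the mean-recentred $\xi_b - \E(\xi_b)$ is the right way to make the stated formulas literally consistent across the three cases, and the sign-and-level flip you flag in case~(3) is indeed the substantive point.
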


Instead of a lengthy proof of the analog of Proposition~\ref{prop:compare.X} by duplicating arguments, we provide some informal reasoning.
We check that our quantile-to-mean gaps $w_{ij}$ satisfy the inequalities of Corollary \ref{cor:variation} and Lemma \ref{lem:want} by first checking the noise-free case, where $\eps_i \equiv \eps_j \equiv 0$. We consider the three cases of Proposition~\ref{prop:compare.X.2}. 
\begin{enumerate}
	\item If $j \rightsquigarrow i$. Then $\xi^b$ has a left-most atom at $b= \beta (\log\theta_j-\log(\theta_i-\theta_j))$, so for sufficiently small $r$, $w_{ij}=-b$. This is minimal when $i$ is a direct descendant of $j$. 
	So Corollary~\ref{cor:variation} for the case $j \rightsquigarrow i$ holds in the noise-free case. 
	\item If $j \not\sim i$. Then $\xi^b$ has no left-most atom, so as $\smallR \downarrow 0$, $q_{\smallR}(\xi^b) \to -\infty$, so $w_{ij} \to \infty$. So Corollary \ref{cor:variation} also holds in the noise-free case for the remaining case, $j \not\sim i$.  
	\item If $i \rightsquigarrow j$. Then $\xi^b$ has a left-most atom, but no right-most atom. Again, as $\smallR \downarrow 0$, $q_{1-\smallR}(\xi^b) \to \infty$, so $w_{ij} \to \infty$. Thus, Lemma \ref{lem:want} holds in the noise-free case.
\end{enumerate}

Now we consider the effect of noise. We send $\smallR \downarrow 0$. As long as $\eta := \eps_i - \eps_j$ has lighter tail than $Z_i - Z_j$, as guaranteed by Lemma \ref{lem:tail}, we have the following:
\begin{itemize}
	\item In case (1), \JB{$q_{\smallR}(\xi^b)$} is dominated by the lower tail of $\eta$. 
	\item In case (2), \JB{$q_{\smallR}(\xi^b)$} is dominated by the lower tail of $Z_i - Z_j$ and, in particular, is going to $-\infty$ at a faster rate than case (1). 
	\item In case (3), \JB{$q_{1-\smallR}(\xi^b)$} is dominated by the upper tail of $Z_i - Z_j$, and in particular, is going to $\infty$ at a faster rate than case (1). 
\end{itemize}
This domination calculation is the same calculation done in the proof of Lemma \ref{lem:want}. The above says that for fixed $j$, for small enough $\smallR$, the minimum of $\{w_{ij}: i \neq j, i \in V\}$ lies in case (1). Within case (1), we want to make sure that, if $w_{ij}$ is smallest, then $i$ is the child of $j$. 
Indeed, write\JB{
$$\xi^b = \eps_i - \eps_j + \xi'_{ij}$$
where $\xi'_{ij} = (Z_i - Z_j)\vee (\beta(\log\theta_j-\log(\theta_i-\theta_j)))$.
For fixed $j$, $(\xi'_{ij}: j \rightsquigarrow i)$ is a particular family of distribution indexed by $i$. By a decoupling argument, it is sufficient to show that $q_{\smallR}(\xi'_{ij}) $ is smallest when $i$ is the child of $j$.} But this reduces to the noise-free case, which we already proved above. This finishes the proof of Theorem~\ref{thm:main} for the quantile-to-mean gap. \qed

\newpage

\section{Supplemental Figures for Section~4.2}\label{sec:s4}

Below we present the figures analogous to Figure~\ref{fig:parameter_sel_danube} of the Paper for the different data sets of the Colorado river network.
For certain values $\alpha$, the boxplots in 
Figures~\ref{fig:parameter_sel_colup}
and~\ref{fig:parameter_sel_colmid}
degenerate, indicating that the 25\% and 75\% quantiles of the resampled data match, i.e. 50\% of the data settle upon the same value of the metric. The very large boxes for high $\alpha$ indicate the uncertainty in the small data sets for computations of the metrics.

\begin{figure}[htbp]
    \centering
 \includegraphics[width=\textwidth]{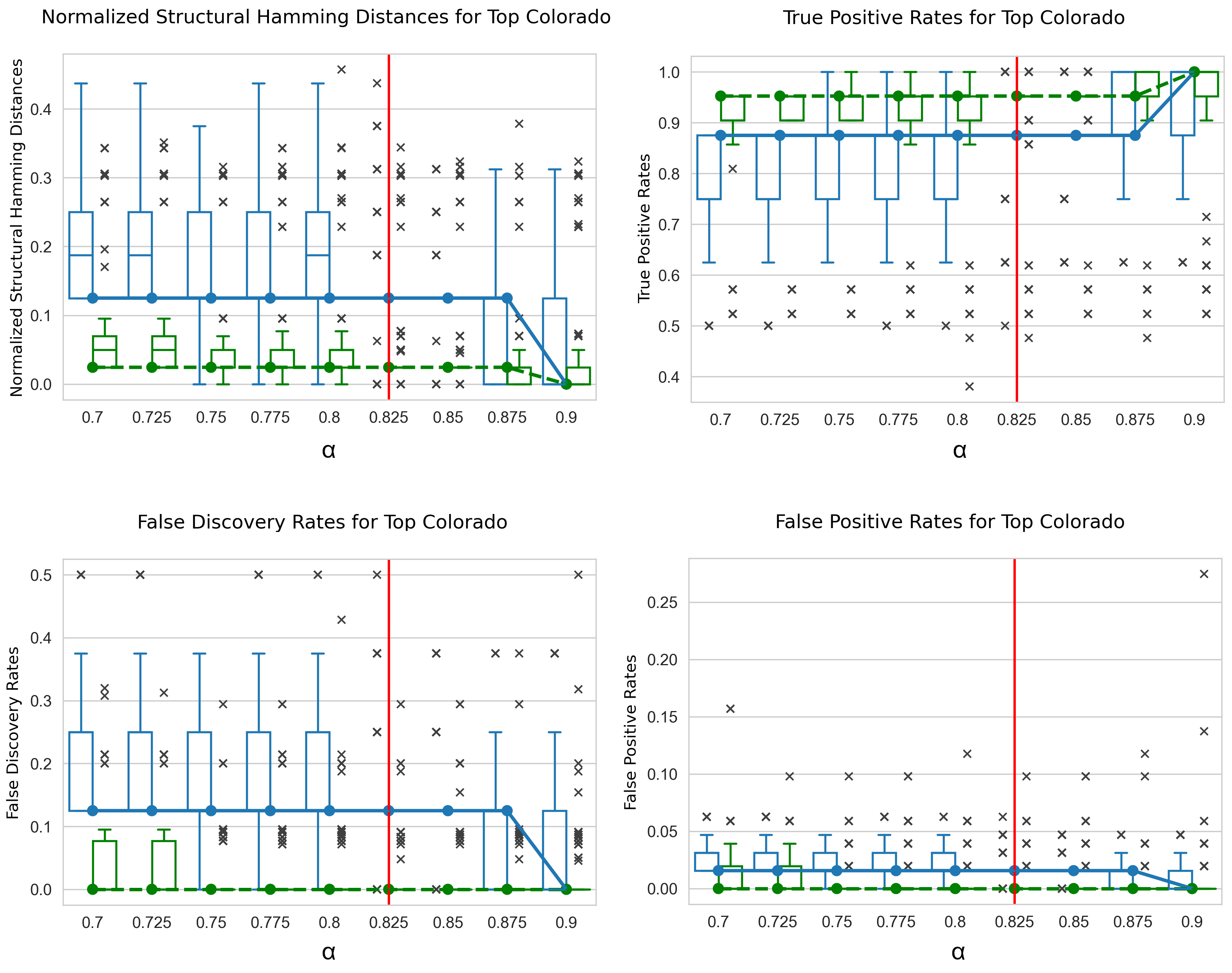}
    \caption{Metrics nSHD, TPR, FDR and FPR for the Top sector of the Colorado network and varying parameters $\al$. For detailed explanations see Figure~\ref{fig:parameter_sel_danube} of the Paper. } 
    \label{fig:parameter_sel_colup}
\end{figure}

\begin{figure}
    \centering
    \includegraphics[width=\textwidth]{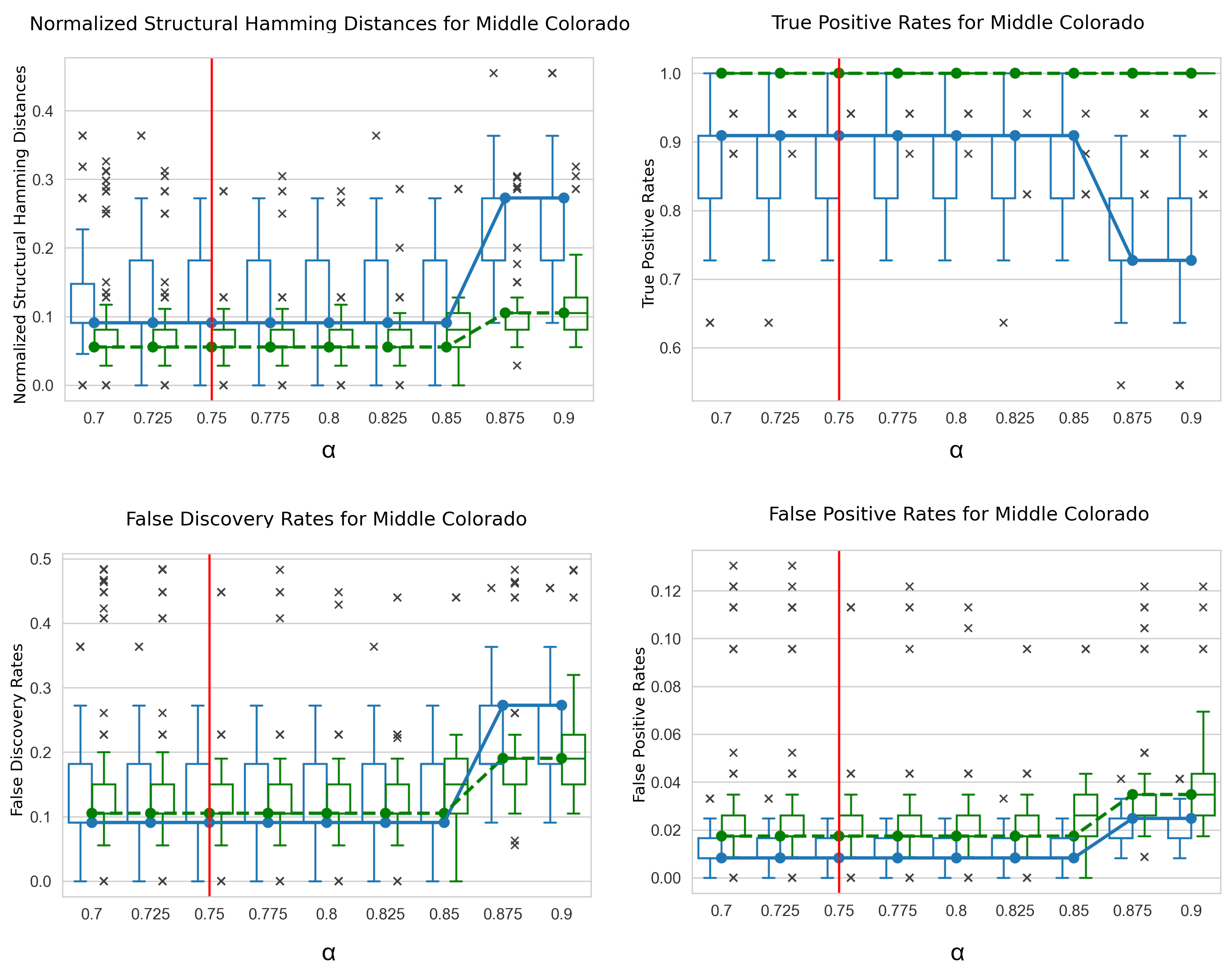}
    \caption{Metrics nSHD, TPR, FDR and FPR for the Middle sector of the Colorado network and varying parameters $\al$. For detailed explanations see Figure~\ref{fig:parameter_sel_danube} of the Paper. 
    } 
    \label{fig:parameter_sel_colmid}
\end{figure}

\begin{figure}
    \centering
    \includegraphics[width=\textwidth]{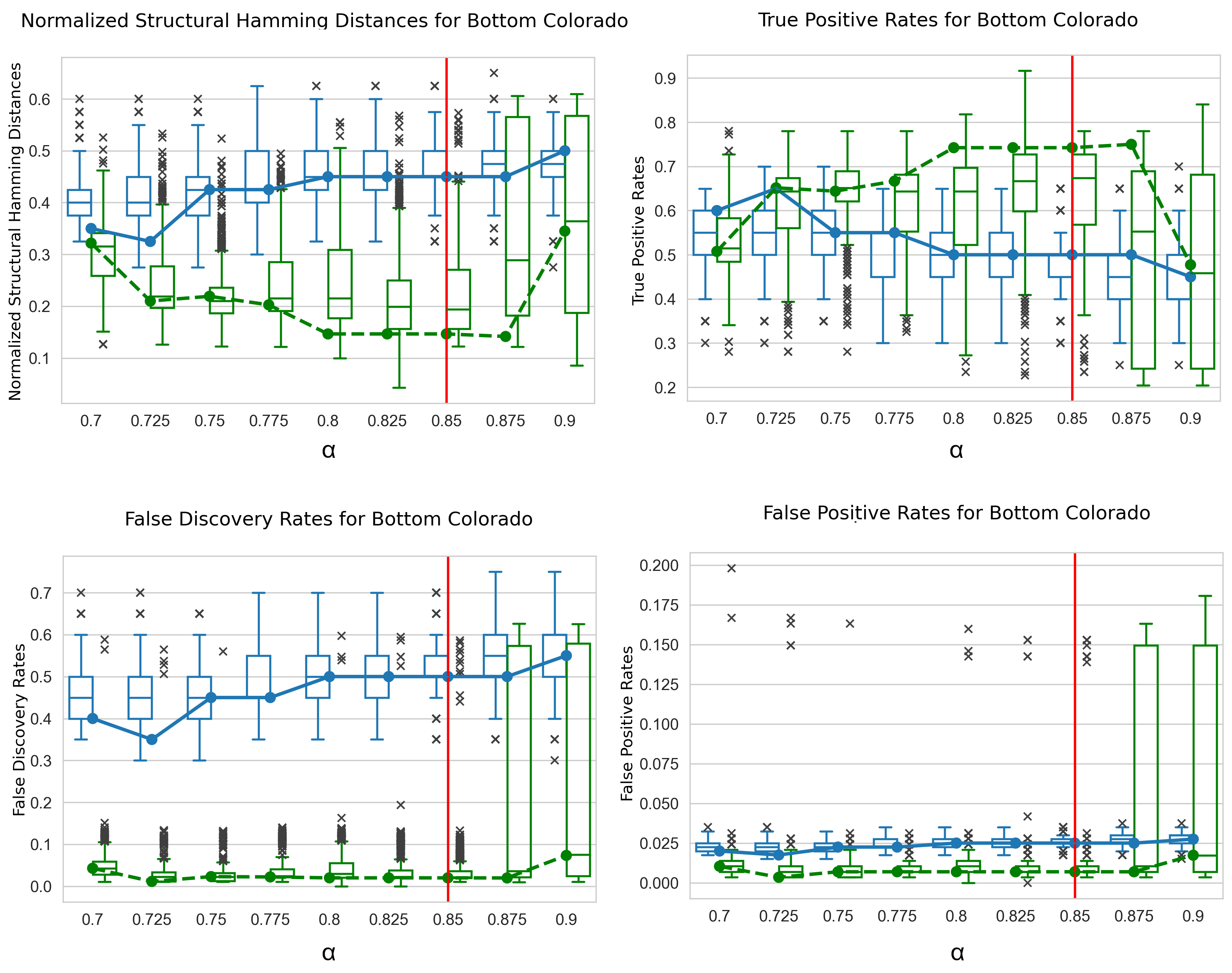}
    \caption{Metrics nSHD, TPR, FDR and FPR for the Bottom sector of the Colorado network and varying parameters $\al$. For detailed explanations see Figure~\ref{fig:parameter_sel_danube} of the Paper. } 
    \label{fig:parameter_sel_collow}
\end{figure}

\begin{figure}
    \centering
    \includegraphics[width=\textwidth]{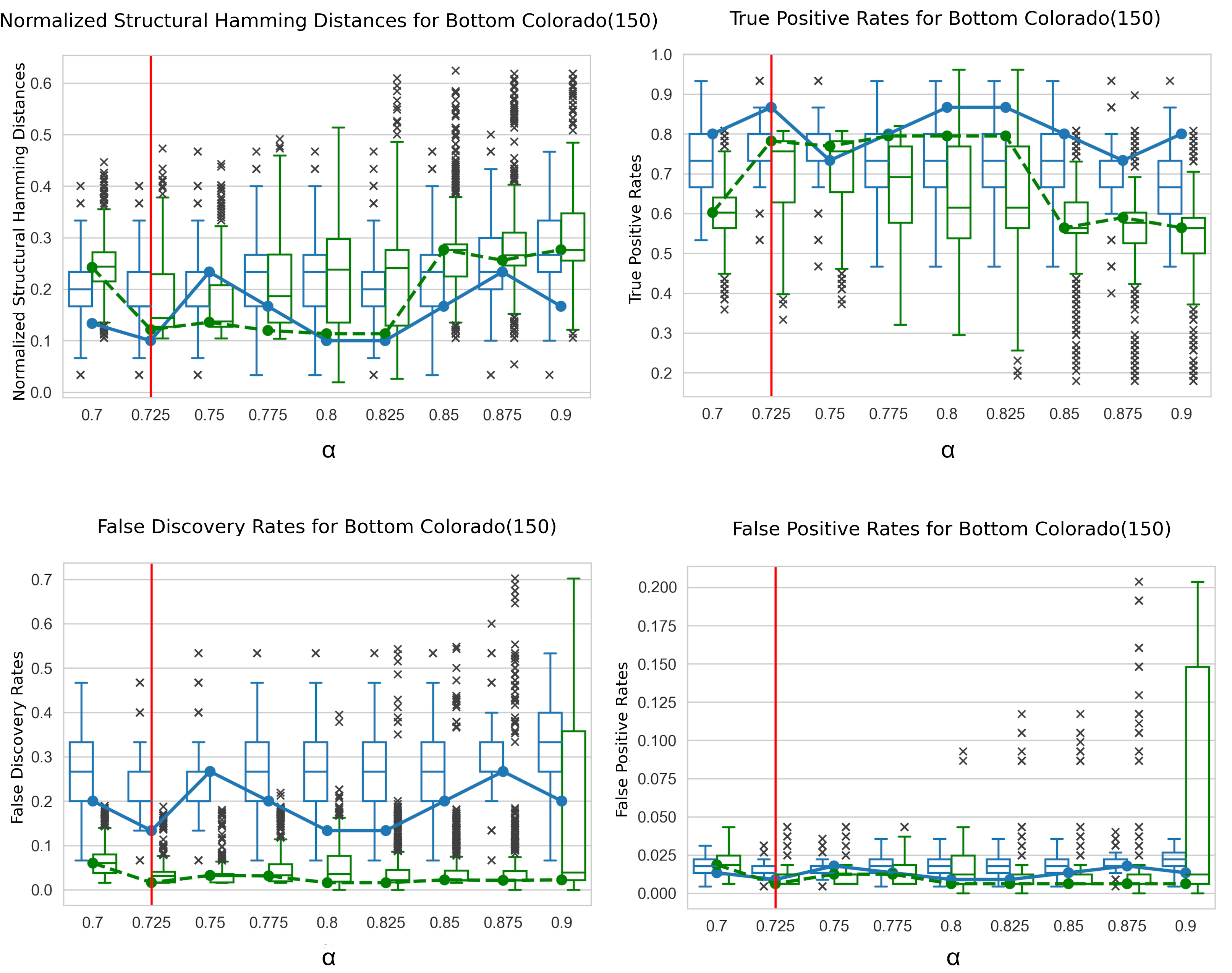}
    \caption{Metrics nSHD, TPR, FDR and FPR for Bottom150 of the Colorado network and varying parameters $\al$. For detailed explanations see Figure~\ref{fig:parameter_sel_danube} of the Paper.} 
    \label{fig:parameter_sel_lowcol150}
\end{figure}

\pagebreak

\FloatBarrier

\section{Supplemental Figures for Section~5}\label{sec:simSupp}
   
Below we visualize the performance measures nSHD and TPR as defined in equation~\eqref{eq:per_metrics} of the Paper from simulations of settings (2) and (3) of Section~5 of the Paper.

       \begin{figure}[h!]
 \centering
 \includegraphics[width=12cm]{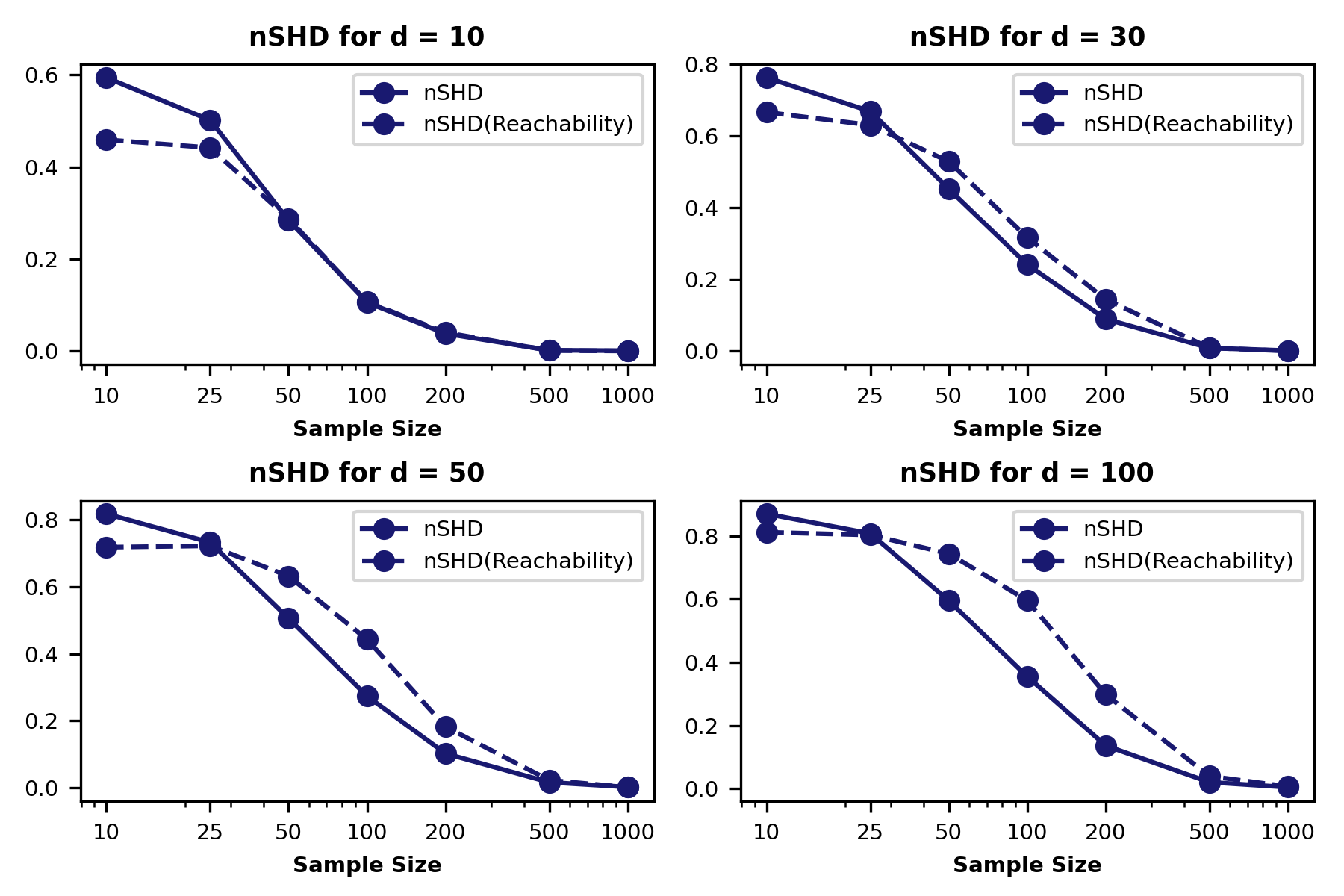}
        \caption[]
        {Mean nSHD for the weak dependence setting (2) and different graph sizes.\\
        For detailed explanations, see Figure~\ref{fig:SHD} of the Paper.}\label{fig:SHD-smalldep}
    \end{figure}
    
          \begin{figure}[b!]
 \centering
 \includegraphics[width=12cm]{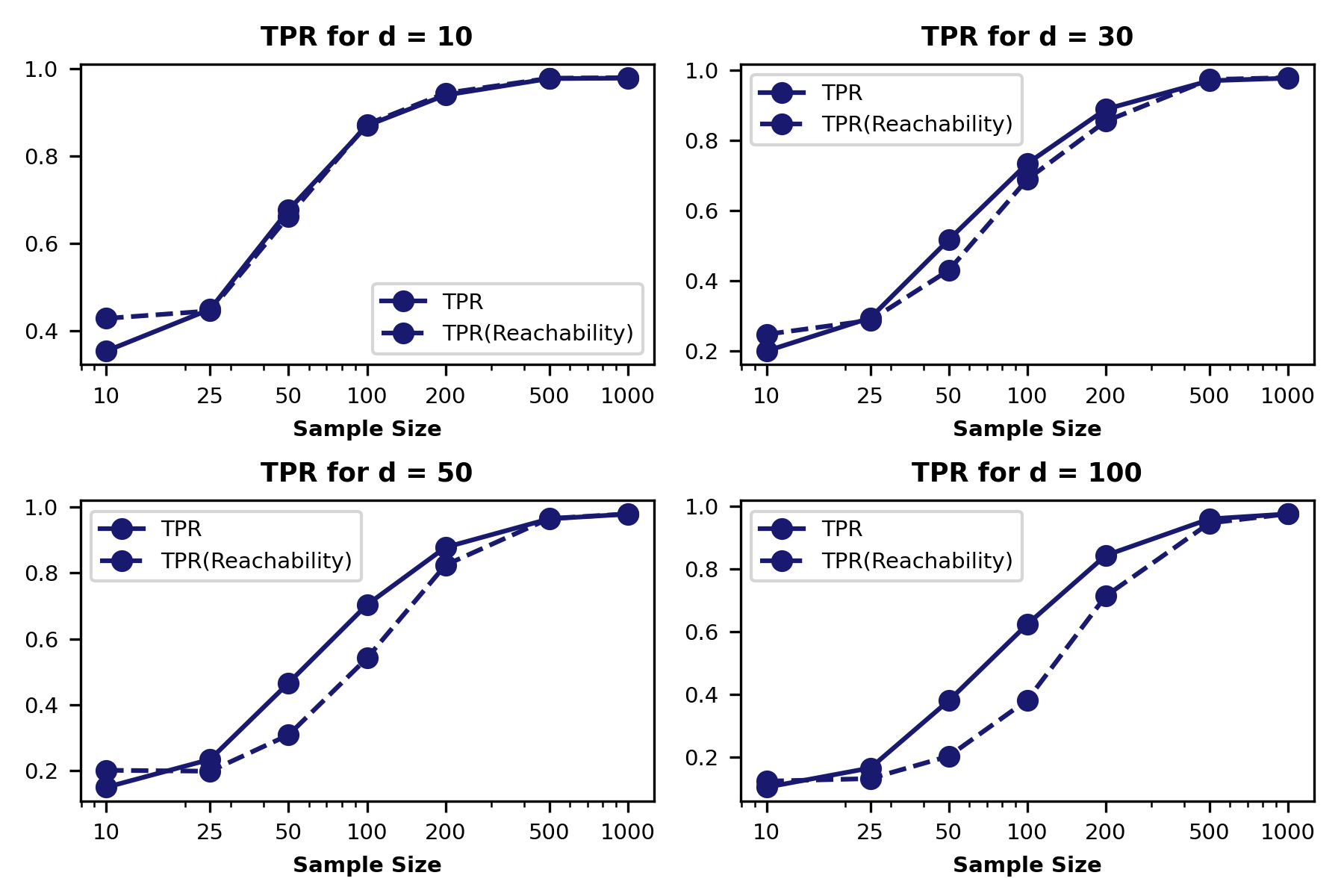}
        \caption[]
        {Mean TPR for the weak dependence setting (2) and different graph sizes.\\
        For detailed explanations, see Figure~\ref{fig:SHD} of the Paper.} \label{fig:TPR-smalldep}
    \end{figure}
    
    \begin{figure}[h!]
 \centering
 \includegraphics[width=12cm]{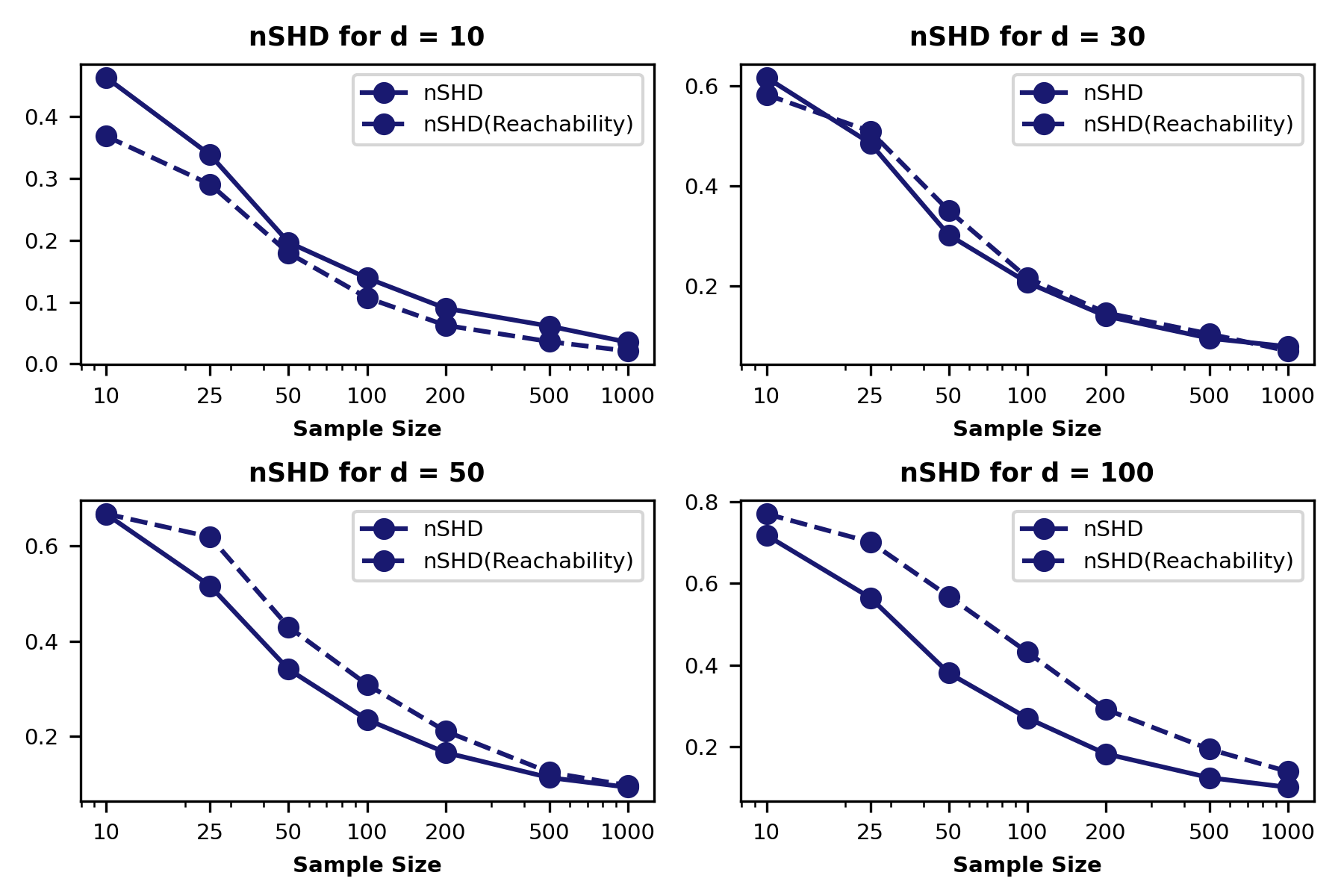}
        \caption[]
        {\small Mean nSHD for the mixed distribution setting (3) and different graph sizes.\\
        For detailed explanations, see Figure~\ref{fig:SHD} of the Paper.}\label{fig:SHD-mixed}
    \end{figure}
    
          \begin{figure}[b!]
 \centering
 \includegraphics[width=12cm]{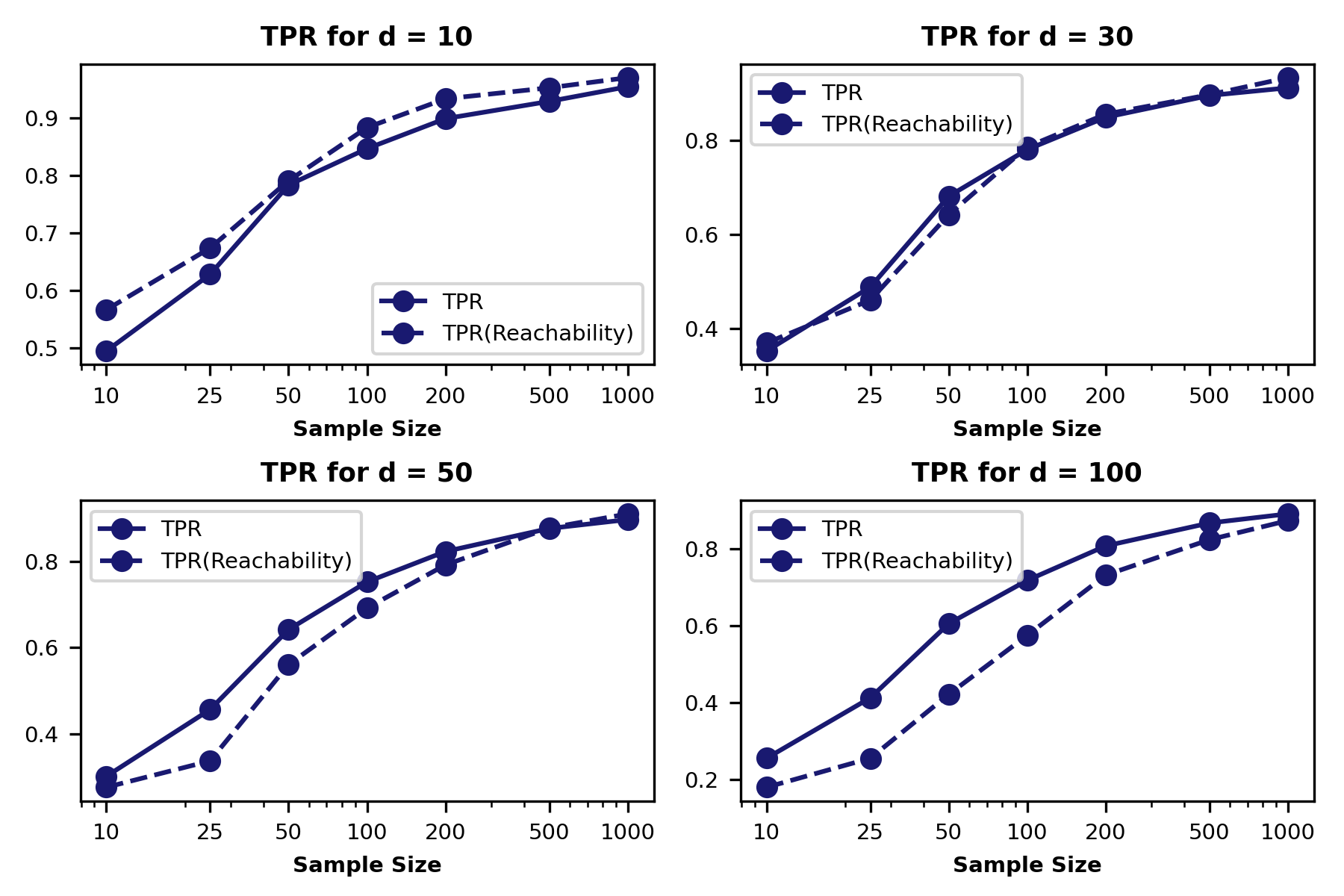}
        \caption[]
        {Mean TPR for the mixed distribution setting (3) and different graph sizes.\\
        For detailed explanations, see Figure~\ref{fig:SHD} of the Paper.} \label{fig:TPR-mixed}
    \end{figure}

\FloatBarrier




\end{document}